\theoremstyle{plain}
\newtheorem{theorem}{Theorem}[section]
\newtheorem{proposition}[theorem]{Proposition}
\newtheorem{lemma}[theorem]{Lemma}
\theoremstyle{definition}
\newtheorem{assumption}[theorem]{Assumption}
\theoremstyle{remark}
\newtheorem{remark}[theorem]{Remark}
\newenvironment{itemize*}{
\begin{itemize}
\setlength{\parskip}{0em}
\setlength{\topparskip}{0em}
}
{\end{itemize}}
\newenvironment{enumerate*}{
\begin{enumerate}
\setlength{\parskip}{0em}
\setlength{\topparskip}{0em}
}
{\end{enumerate}}
\newcommand{\ssam}[1]{}  % TO RESSURRECT FOR LONG PAPER!!
\newcommand{\mmp}[1]{}
\newcommand{\tppalg}{{\sc TangentSpaceBasis}}
\newcommand{\ouralg}{{\sc ManifoldLasso}}
\newcommand{\tsalg}{{\sc TSLasso}}
\newcommand{\beq}{\begin{equation}}
\newcommand{\eeq}{\end{equation}}
\newcommand{\beqa}{\begin{eqnarray}}
\newcommand{\eeqa}{\end{eqnarray}}
\newcommand{\beqas}{\begin{eqnarray*}}
\newcommand{\eeqas}{\end{eqnarray*}}
\newcommand{\bit}{\begin{itemize}}
\newcommand{\eit}{\end{itemize}}
\newcommand{\bits}{\begin{itemize*}}
\newcommand{\eits}{\end{itemize*}}
\newcommand{\benum}{\begin{enumerate}}
\newcommand{\eenum}{\end{enumerate}}
\newcommand{\benums}{\begin{enumerate*}}
\newcommand{\eenums}{\end{enumerate*}}
\newcommand{\norm}[1]{\lvert\lvert{#1}\lvert\lvert}
\newcommand{\dataset}{{\cal D}}
\newcommand{\rrr}{{\mathbb R}}
\newcommand{\M}{{\cal M}}
\newcommand{\T}{{\cal T}}
\newcommand{\I}{{\cal I}} % set of indices for which to compute gradients
\newcommand{\F}{{\cal F}}
\newcommand{\G}{{\cal F}} % dictionary
\newcommand{\neigh}{{\cal N}}
\newcommand{\xb}{\mathbf{X}}  % matrices for glasso
\newcommand{\g}{\mathbf{g}} % R metric as a form
\newcommand{\trace}{\operatorname{trace}}
\newcommand{\rank}{\operatorname{rank}}
\newcommand{\diag}{\operatorname{diag}}
\newcommand{\grad}{\operatorname{grad}}
\title{Dictionary-based Manifold Learning}
\author{%
  Hanyu Zhang \\
  Department of Statistics\\
 University of Washington\\
  Seattle, WA 98115 \\
  \texttt{hanyuz6@uw.edu} \\
  % examples of more authors
   \And
  % Coauthor
  Samson Koelle\\
  % Affiliation 
   Department of Statistics \\
  % Address 
  University of Washington\\
  Seattle,WA 98115\\
   \texttt{sjkoelle@gmail.com} \\
   \AND
  Marina Meil\u{a} \\
 University of Washington\\
  Seattle, WA 98115 \\
    \texttt{mmp@stat.washington.edu} \\
  % \And
  % Coauthor \\
  % Affiliation \\
  % Address \\
  % \texttt{email} \\
  % \And
  % Coauthor \\
  % Affiliation \\
  % Address \\
  % \texttt{email} \\
}
\begin{document}

\maketitle

\begin{abstract}
  We propose a paradigm for interpretable Manifold Learning for scientific data analysis, whereby we parametrize a manifold with $d$ smooth functions from a scientist-provided {\em dictionary} of meaningful, domain-related functions. When such a parametrization exists, we provide an algorithm for finding it based on sparse {\em non-linear} regression in the manifold tangent bundle, bypassing more standard manifold learning algorithms. We also discuss conditions for the existence of such parameterizations in function space and for successful recovery from finite samples. We demonstrate our method with experimental results from a real scientific domain.
\end{abstract}

\section{Introduction}
\label{sec:intro}

Dimension reduction algorithms map high-dimensional data into a
low-dimensional space by a learned function $f$.  However, it is often
difficult to ascribe an interpretable meaning to the learned
representation.  For example, in non-linear methods such as Laplacian
Eigenmaps \citep{belkin:01} and t-SNE \citep{tsne}, $f$ is learned
without construction of an explicit function in terms of the features.
In contrast, when scientists describe/model a system using knowledge from 
their domain, often the resulting model is in terms of domain relevant features, which are continuous functions of other domain variables (e.g. equations of motion). 

For example, in the application of Molecule Dynamic Simulation (MDS) study, data are often high dimensional with non-trivial topology, non i.i.d. noise. Figure \ref{fig:toluene_pca_features} shows pairwise scatterplots of six toluene molecule features and \ref{fig:toluene_diagram} displays a single scientifically relevant function that model (approximately) the state space of the toluene molecule; it is an angle of rotations.  

\begin{figure*}
    \centering
    \subfloat[]{\includegraphics[width=0.35\textwidth,clip]{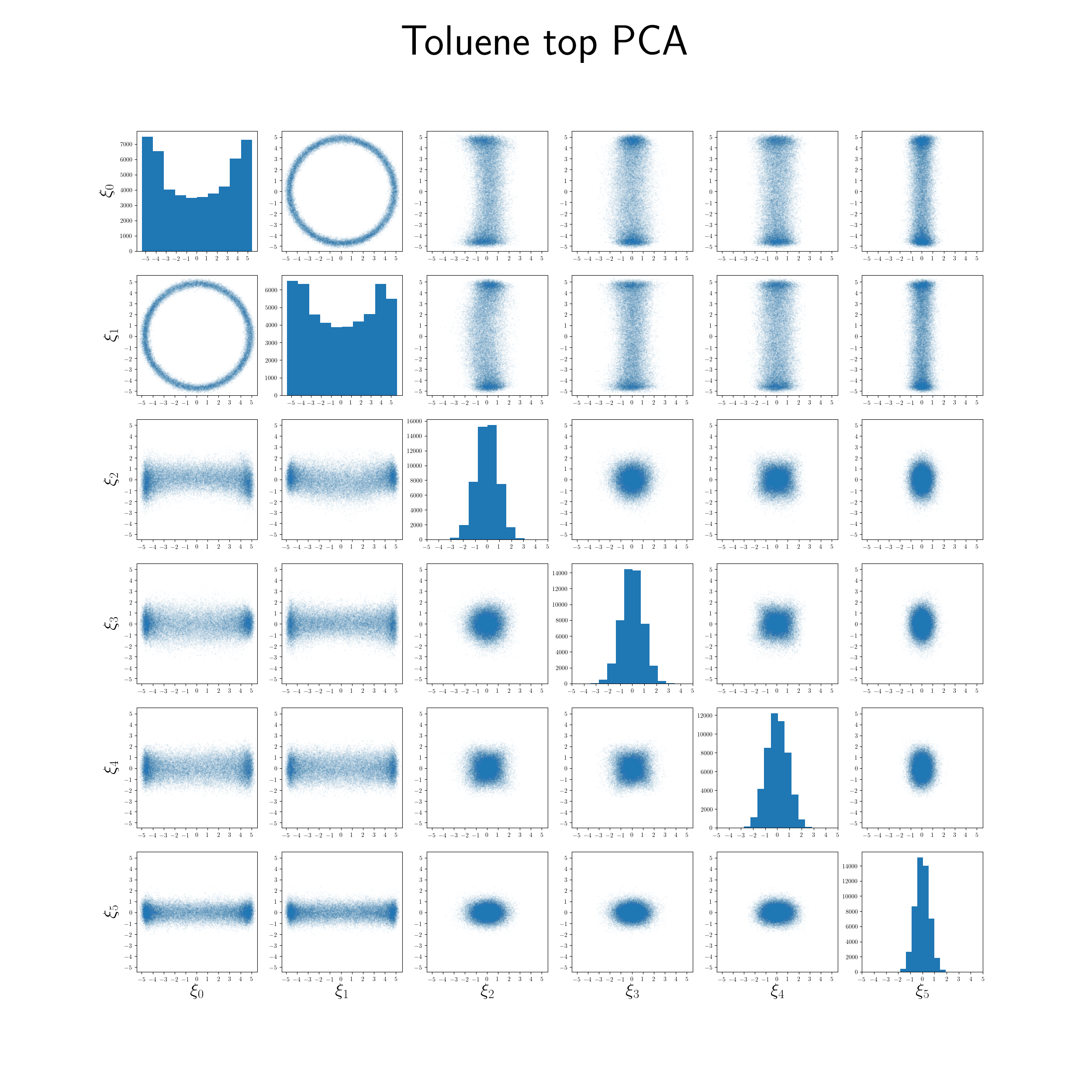}\label{fig:toluene_pca_features}}
    \subfloat[]{\includegraphics[width=0.35\textwidth,clip]{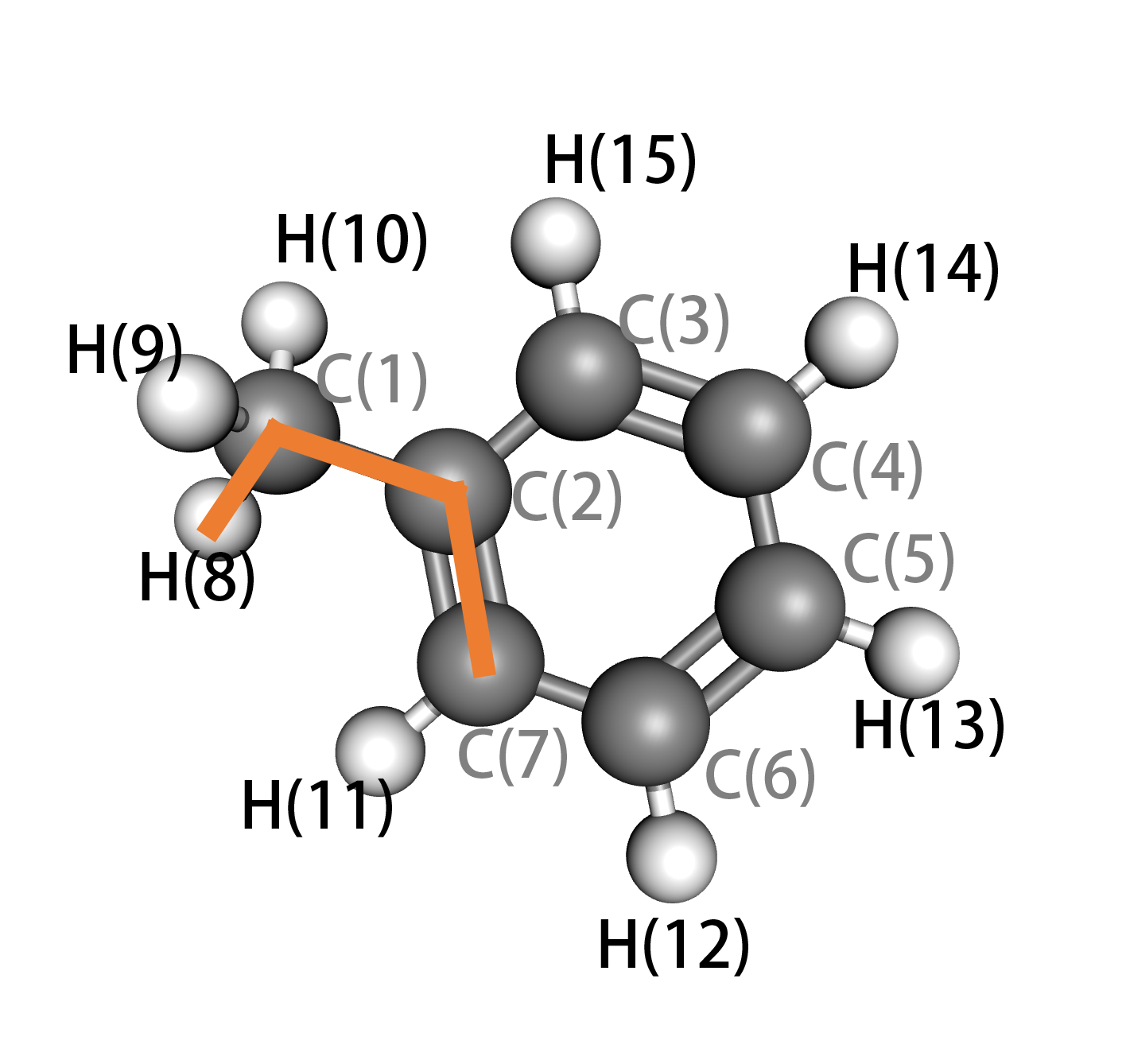}\label{fig:toluene_diagram}}
    \subfloat[]{\includegraphics[width=0.35\textwidth,clip]{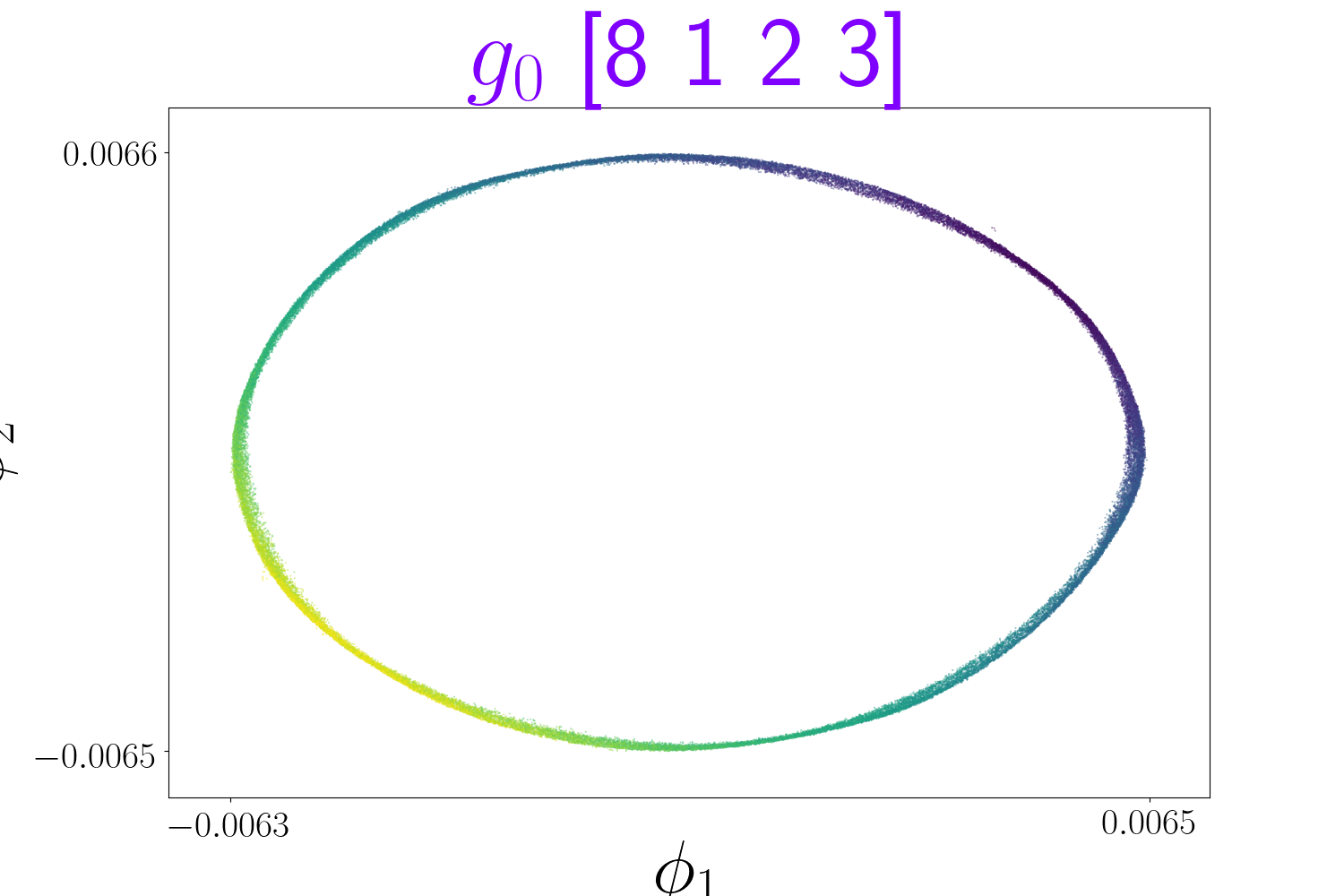}\label{fig:toluene_embedding}}\hfill
    \caption{Example of toluene molecule dynamic data. \textbf{Left:} pairwise scatterplots of first six coordinates in $\rrr^{50}$ and histograms of each coodinate on the diagonal. The preprocessing procedure is described in section \ref{sec:md}.  \textbf{Middle:} Atoms in a toluene molecule.  Scientists previously discovered that the torsion associated with the peripheral methyl group bound governs the state space of the toluene molecule as a one dimensional manifold. \textbf{Right:} Embedding of toluene data into $\rrr^2$ by diffusion map, colored by the bond torsion labeled. The variation of the color along the circle demonstrates this function as parametrizing the data manifold.}
\end{figure*}

A functional form $f$ can also be used to compare embeddings from different sources, derive out-of-sample extensions, and to interrogate mechanistic properties of the analyzed system. In figure \ref{fig:toluene_embedding}, we compare the scienfically identified functional mapping $f$ with existing manifold learning algorithms.

This paper proposes to construct a manifold model that interpolates between the two above modalities. Specifically, our algorithm will map samples $\xi_i$ from a manifold to new coordinates $f(\xi_i)$ like in purely data driven manifold learning, but these will be selected from a predefined {\em finite} set of smooth functions $\F$, called a {\em dictionary}, to represent intrinsic manifold coordinates of the data manifold $\M$.
Thus, the obtained embedding is smooth, has closed-form expression, can map new points from the manifold $\M$ to $f(\M)$ exactly, and is interpretable with respect to the dictionary.

This method, which we call \tsalg, requires the key assumption that the manifold $\M$ is parametrized by a subset of functions in the dictionary.
However, creating dictionaries of meaningful concepts for a scientific domain and finding those elements that well-describe the data manifold is an everyday task in scientific research.
We put the subset-selection task on a formal mathematical basis, and exhibit in Section \ref{sec:md} a scientific domain where the assumptions we make hold, and where our method replaces dictionary-based visual inspection of the data manifold.

\paragraph{Problem Statement}

Suppose data $\mathcal{D}=\{\xi_i,i\in[n]\}$ are sampled from a $d$-dimensional connected smooth\footnote{In this paper, by {\em smooth} manifold or function we mean of class $C^l$, $l\geq1$, to be defined in Section \ref{sec:theory}.} submanifold $\M$ embedded in the Euclidean space $\rrr^D$, where typically $D \gg d$. Assume that the intrinsic dimension $d$ is known. $\M$ has the Riemannian metric induced from $\rrr^D$. We are also given a dictionary of functions $\mathcal{F}=\{f_j,j\in[p]\}$. All of the functions $f_j$ are defined in the neighborhood of $\M$ in $\rrr^D$ and take values in some connected subset of $\rrr$. We require that they are smooth on $\M$ (as a subset of $\rrr^D$), and have analytically computable gradients in $\rrr^D$.  Our goal is to select $d$ functions in the dictionary, so that the mapping $f_S=(f_j)_{j\in S\subset \F}$ is a diffeomorphism on an open neighborhood $U\subset \M$ to $f_S(U)\subset \rrr^{|S|}$ at almost everywhere on $\M$, $f_S$ is then a \emph{global} mapping with fixed number of functions. The learned mapping $f_S$ will be a \emph{valid parametrization} of $\M$. 

The almost everywhere in the previous definition relaxes the usual definition of smooth embedding. Consider the circle embedded in $\mathbb
R^2$ by the map $g: t \mapsto (\cos t, \sin t)$ for $t\in \rrr$. Consider the function defined for $(x,y):|x^2+y^2-1|\leq 1/2$, then 
\begin{equation}
    \Theta: (x,y) \mapsto \begin{cases}\arcsin \frac{y}{\sqrt{x^2+y^2}} &x\geq 0 \\
    \pi-\arcsin \frac{y}{\sqrt{x^2+y^2}}, & x < 0
    \end{cases}
\end{equation}
is a valid parametrization for $\M$.

We had made two adjustments to standard differential geometry \citep{smoothmfd}.
First, in differential geometry terminology, $(U\subseteq\M,f_S)$ locally is a coordinate \emph{chart} for $\M$ and $f_S^{-1}$ is called a {\em parameterization} of $U$. In this paper, we often refer to $f_S$ as the 'parameterization', as $f_S,f_S^{-1}$ are diffeomorphisms and are both representative. We argue that $f_S$ is of more immediate interest, since this map consists of interpretable and analytically computable dictionary functions, and $f_S^{-1}$, while guaranteed to exist on $f_S(U)$, is defined only implicitly in many scenarios. 

Second, since a manifold may require multiple charts, we
relax the requirement that $f_S$ is locally a diffeomorphism
everywhere to {\em almost everywhere}. In the circle example, since the manifold $\M$ is compact, it is not possible to find a single smooth function that can locally be a diffeomorphism everywhere. This relaxation allows us to find $d$ functions parametrizing a $d-$dimensional compact manifold in our definition.

Our main technique is to operate over gradient fields on $\M$, which extends \citet{arxivVersion}.
In Section \ref{sec:bg}, we introduce some
backgrounds on gradient fields on manifolds.
In Section \ref{sec:tslasso}, we present our algorithm {\tsalg} in detail.
In Section \ref{sec:theory}, we provide sufficient conditions for selection consistency.
Section \ref{sec:md} shows experimental results on simulations and molecular dynamics
datasets.
Section \ref{sec:related} discusses related work and interesting features of our approach.

\section{Preliminaries: Gradients on Manifolds}

\label{sec:bg}
 The reader is referred to \citet{smoothmfd} for more backgrounds on differential geometry. In this section, we review gradient fields on manifolds, which play a central role in our algorithm. Consider a $d-$dimensional manifold $\M$. At point $\xi$, its tangent space $\T_{\xi}\M$ can be viewed as the equivalent class of directions of infinitesimal curves passing $\xi$. For a smooth function $f:\M\mapsto \rrr$, its differential $Df:\T_{\xi}\M \mapsto \rrr$ is a linear map that generalizes directional derivatives in calculus in Euclidean space, characterizing how the value of $f$ varies along different directions in $\T_{\xi}\M$. The chain rule also holds for compositions of functions on manifolds. 

 When $\M$ is Riemannian with metric $\g$, the gradient is a collection of tangent vectors $X(\xi)$, one at each point $\xi$, such that for all $\xi\in \M$ and all $v\in\T_{\xi}\M$
 \begin{equation}
     \langle X(\xi),v\rangle_{\g} = Df(v)\lvert_{\xi}\;.
     \label{eq:grad}
 \end{equation}

 For example, under the usual Euclidean metric, a function $f:\rrr^D\mapsto \rrr$ has a gradient vector $\nabla f(\xi)$ at each point $\xi\in\rrr^D$ as defined in ordinary multivariate calculus.
 
 For our problem, $\M$ is a $d-$dimensional manifold embedded in $\rrr^D$ with inherited metric. $\T_{\xi}\M$ can be identified as a $d-$dimensional linear subspace of $\T_{\xi}\rrr^D$, whose basis can be represented by an orthogonal $D\times d$ matrix $\mathbf{T}_{\xi}$. Let $f$ be a smooth real-valued function, defined on a open neighborhood of $\M$. There are two points of views for $f$ when it is restricted on $\M$: (i) as a function on $\rrr^D$ and has gradient $\nabla f$ as usual. (ii) as a function on $\M$ and one can show that the gradient field $\grad f$ given by the coordinate representation $\grad f := \mathbf{T}_{\xi}^\top \nabla f$ satisfies \eqref{eq:grad} \citep{smoothmfd}.

More generally, consider a map $F=(f_1,\cdots,f_s):\M\mapsto \rrr^s$. The differential $DF=(Df_1,\cdots,Df_s)$ is then defined to be a linear mapping from $\T_{\xi}\M\mapsto \T_{\xi}\rrr^s$. Under basis $\mathbf{T}_{\xi}$, a coordinate representation of $DF$ is $\mathbf{T}_{\xi}^\top \nabla F$, where $\nabla F$ is a $D\times s$ matrix, constructed buy row-wise stacking the gradients $\nabla f_1,\cdots,\nabla f_s$.

  % problem and background

\section{The {\tsalg} algorithm}\label{sec:tslasso}

The idea of the {\tsalg} algorithm is to express the orthonormal bases $\mathbf{T}_{\xi}\in\mathbb{R}^{D\times d}$ of the manifold tangent spaces $\T_{\xi}\M$ as sparse linear combinations of dictionary function gradient vector fields. This simplifies the non-linear problem of selecting a best functional approximation to $\M$ to the linear problem of selecting best local approximations in the tangent bundle. If the subset $S$ with $|S|=d$ gives a valid parametrization, in a neighborhood $U_{\xi}\subset \M$ of almost all point $\xi$, $f_S$ is a diffeomorphism, i.e. there is some mapping $g:f_S(U_{\xi}) \mapsto U_{\xi}$ such that the identity map $f_S \circ g$ is identity map on $f_S(U_{\xi})$ and $g\circ f_S$ is the identity map on $U_{\xi}$. Thus, in coordinate representation we can denote a matrix representation of $Df_S(\xi)$ by $\mathbf{X}_{\xi,S}=\mathbf{T}_{\xi}^\top\nabla f_S(\xi) \in \rrr^{d\times d}$, and further there is some matrix $\mathbf{B}_{\xi,S}\in\rrr^{ d \times d}$ such that for all $\xi\in\M$
\begin{equation}
\mathbf{I}_d = \mathbf{X}_{\xi,S}\mathbf{B}_{\xi,S}
\label{eq:represent}
\end{equation}
according to the chain rule of function composition on manifolds.

For notation simplicity, we will write $\mathbf{X}_{iS},\mathbf{B}_{iS},\T_i \M$ as the corresponding quantities at point $\xi_i$ when we are discussing finite sample.
We can select $S=[p]$, and simplify the notation of $\mathbf{X}_{iS},\mathbf{B}_{iS}$ to $\mathbf{X}_i\in\mathbb{R}^{d\times p},\mathbf{B}_i\in\mathbb{R}^{p\times d}$, but crucially, if we do not have colinear gradients, then we can restrict all but $d$ rows of $\mathbf{B}_i$ to be zeros. We can also select $s=\{j\}$, and define $\mathbf{B}_{.j}\in \rrr^{nd}$ as the vector formed by concatenating $\mathbf{B}_{i\{j\}}$. Stacking $\mathbf{B}_{.j}$ together forms $\mathbf{B}\in\rrr^{p\times nd}$.

\subsection{Loss Function}
  We now seek a subset $S \subset [p]$ such that (1) only the corresponding $n d$ vectors $\mathbf{B}_{.j} : j \in S$ have non-zero entries and (2) each submatrix $\xb_{iS}$ forms a $\rank d$ matrix. The previous observation inspires minimizing Frobenius norm $\mathbf{I}_d-\mathbf{X}_{i}\mathbf{B}_{i}$ with joint sparsity constraints over rows of $\mathbf{B}_i$. This sparsity is also induced jointly over all data points. 

\beq
J_{\lambda_n}(\mathbf{B}) = \frac{1}{2} \sum_{i=1}^n \lvert\lvert \mathbf{I}_d-\mathbf{X}_{i}\mathbf{B}_{i}\lvert\lvert^2_F+\frac{{\lambda_n}}{\sqrt{dn}}\sum_{j=1}^p \lvert\lvert{\mathbf{B}_{.j}}\lvert\lvert_2.
\label{eq:obj}
\eeq
Note that this optimization problem is a variant of Group Lasso \citep{Yuan2006-af} that forces group of coefficients of size $dn$ to be zero simultaneously in the regularization path. The details of the tangent space estimation are deferred to Section \ref{sec:tangent}. It can be shown this loss function is invariant to local tangent space rotation.
%

  % method plus theory

\subsection{Tangent Space Estimation}
\label{sec:tangent}
So far we have solved our problem assuming we have access to the tangent space at each point $\xi\in\M$. However, this is rarely true. In practical use, the first step to realize the previous idea of expressing tangent spaces is to estimate them. \emph{Weighted Local Principal Component  Analysis} (WL-PCA) algorithm proposed as \citet{Singer2011VectorDM,Chen2013,aamari2018} are exmaples to estimate such basis. These methods are shown to have accurate tangent space estimation when the hyperparameters are selected appropriately. 

Intuitively, estimating tangent spaces is estimating local covariances matrices centered at each point $\xi_i$. We therefore select a neighborhood radius parameter $r_N$ and identify $\mathcal{N}_i=\{i'\in[n], \text{with \ } \norm{\xi_i-\xi_{i'}}_2 \leq r_N\}$ to be all neighbor points of $\xi_i$ within Euclidean (in $\rrr^D$) distance $r_N$  so that we can pass  into this algorithm.

When compute local covariance matrices, one may weight different points. These weights of each $\xi_j$ in $\neigh_i$ can be chosen to be proportional some kernel function $K(x)$ such that for all $j\in\neigh_i$ the weight is proportional to $K_{ij}= K(\norm{\xi_i-\xi_{j}}/\epsilon_N)$, where $\epsilon_N$ is a tuning-parameter proportional to $r_N$ in the sense that kernel-values of pairs of non-neighboring points should be close to zero. Any $C^2$ positive monotonic decreasing function $K(u)$ with compact support is valid; examples including constant kernel $K(u)=1_{[0,1]}(u)$, Epanechnikov $K(u)=(1-u^2)1_{[0,1](x)}$ and Gaussian $K(u)=\exp(-u^2)1_{[0,1](x)}$ etc. We specifically choose the Gaussian kernel in our experiments since it provides better tangent space estimation empirically, as it weights more on points that are close to where the tangent space is of interest. Given these weights $K_{ij}$ for $\xi_j$s, the local weighted mean and weighted covariance at $\xi_i$ can be estimated, and singular value decomposition is used to find the basis.

Let $k_i = |\mathcal{N}_i|$ be the number of neighbors of point $\xi_i$ and $\mathbf \Xi_i = \{\xi_{i'},i'\in\mathcal{N}_i\}\in\rrr^{|\mathcal{N}_i|\times D}$ be the correpsonding local position matrices. Also denote a column vector of ones of length $k$ by $ \bm{1}_k$, and define the Singular Value Decomposition algorithm $\text{SVD}(\mathbf X,d)$ of matrix $\mathbf X$ as outputting $\mathbf V, \Lambda$, where $\Lambda$ and $\mathbf V$ are the largest $d$ eigenvalues and their corresponding eigenvectors. Tangent space estimation algorithm is displayed in algorithm \tppalg~.
\begin{algorithm}
\caption{\tppalg} 
\begin{algorithmic}[1]
\STATE {\bfseries Input:} Local dataset $\mathbf \Xi_i$,  intrinsic dimension $d$,  kernel parameter $\epsilon_N$
\STATE Compute local kernel weights $K_{i,\neigh_i}=(K_{ij})_{j\in\neigh_i} \in \rrr^{k_i} $.
\STATE Compute weighted mean $\bar \xi_i = (K_{i,\neigh_i}^\top \bm{1}_{k_i})^{-1} K_{i,\neigh_i}^\top \mathbf \Xi_i $
  \STATE Compute weighted local difference matrix $\mathbf Z_{i} =  \diag(K_{i , \neigh_i}^{\frac{1}{2}}) (\mathbf \Xi_i - \bm{1}_{k_i} \bar \xi_i)$
  \STATE Compute $\mathbf{T}_i, \Lambda \leftarrow \text{SVD} (\mathbf Z_i^\top  \mathbf Z_i, d)$   
  \STATE {\bfseries Output:} $\mathbf{T}_i$ \label{alg:tan}
\end{algorithmic}
\end{algorithm}

\subsection{The \tsalg~Algorithm}
\label{sec:whole-algo}
We now present the full {\tsalg} approach. Following the logic in \ref{sec:tslasso}, we transform our non-linear manifold parameterization support recovery problem into a collection of sparse linear problems in which we express coordinates of individual tangent spaces as linear combinations of gradients of functions from our dictionary.
Tangent spaces at each point are estimated in step \ref{alg:tan}, enabling utilizing gradients of dictionary functions in $\T_\xi \M$ by projecting the gradient $\nabla f_j(\xi_i)\in \rrr^D$ on to estimated tangent spaces $\mathbf {T}_i$.
Finally we input these gradients into objective function \eqref{eq:obj} to solve for the support. 
\begin{algorithm}
\caption{{\tsalg}}
\label{alg:tsalg}
\begin{algorithmic}[1]
\STATE{\bfseries Input:} Dataset $\dataset$, dictionary $\G$, intrinsic dimension $d$, regularization parameter $\lambda_n$, radius parameter $r_N$, kernel parameter $\epsilon_N$.
  \FOR {$i=1,2,\ldots n$ (or subset $I \subset [n]$) }
    \STATE Compute $\neigh_i$ and $\mathbf \Xi_i$ using $\dataset, r_N$
  \STATE Compute the orthonormal tangent space basis  $\mathbf{T}_i \gets $\tppalg$(\mathbf \Xi_i, d,\epsilon_N)$  \label{alg:tan}
  %\mmp{to update this}
  \STATE Compute $\nabla f_j (\xi_i)$ for $j\in[p]$ \label{alg:dict}.
  \STATE Project onto tangent space\\ $\xb_i = \mathbf{T}_i^\top[\nabla f_j(\xi)]_{j\in [p]}$
  \ENDFOR
\STATE Solve for $\mathbf{B}$ by minimizing  $J_{\lambda_n}(\mathbf{B})$ in \eqref{eq:obj}. 
\STATE {\bfseries Output:} $S=\{j\in[p]:\norm{\mathbf{B}_{.j}}_2 > 0 \}$  \label{alg:supp}
\end{algorithmic}
\end{algorithm}

 % tangent space methods

\subsection{Other considerations}
\label{sec:other}

\paragraph{Normalization}

The rescaling of functions $f_j$ will affect the solution of the Group
Lasso objective, since functions with larger gradient norm will tend to
have smaller $\parallel \mathbf{B}_{.j}\parallel$. This can affect the support $S$
recovered. Therefore, we compute $\gamma_j^2 = \frac{1}{n}\sum_{i=1}^n
\lvert\lvert \nabla f_j(\xi_i)\lvert\lvert^2$ and set $f_j \leftarrow
f_j/\gamma_j$. This approximates normalization by $\lvert\lvert \nabla
f_j \lvert\lvert_{{L}_2(\mathcal{M})}$. Since $\lvert\nabla
f_j(\xi_i)\lvert^2=\lvert \grad f_j(\xi_i)\lvert^2+\lvert\nabla
f_j^\perp(\xi_i)\lvert^2$, where $\nabla f_j^\perp$ denotes the
component of $\nabla f_j$ orthogonal to $\M$, normalization prior to
projection penalizes functions with large $\nabla f_j^\perp$ and
favors functions whose gradients are more parallel to the tangent space of $\M$. Note that, in the high-dimensional setting, we expect random functions to have gradient perpindicular to $\T \M$, and so these will be penalized by our normalization strategy.

\paragraph{Computation}

Note that we do not need to run \tsalg~ on our whole dataset in order to take advantage of all of our data, and can instead run on a subset $I \subset [n]$ such that $|I| = n'$. In particular, the search task in identifying the local datasets $\mathbf \Xi_i$ is $O(Dn n')$, which is significantly less than the time to construct a full neighbor graph for an embedding. For each $i$, computing the local mean is $O(k_i D)$, and finding the tangent space is $O(k_i D^2 + k_i^3)$. Gradient computation runtime is $O(D)$, but the constant may be large. Projection is $O(dDp)$. For each Group Lasso iteration, the compute time is $O(n' m pd)$ \citep{arxivVersion}.

\paragraph{Tuning} 
For the real data experiments, we select $\epsilon_N$ using the method of \citet{Joncas2017-kn}, while in simulation, we set it proportional to noise. 
As explained in the next section, we are theoretically motivated by the definition of parameterization to select a support $S$ that has cardinality equal to $d$, which is assumed to be given, although dimension estimation as in \citet{LevinaB04} could also be appropriate.
For $\lambda$, we apply binary search to the regularization path from $\lambda = 0$ to $\lambda_{\text{max}} = \max_{j} ( \sum_{i=1}^n ( \| \grad_{T_i^\M} f_j (\xi_i))\|_2^2  )^{1/2}$ to find $\lambda$ s.t. the cardinality of the selected support is $d$.
In the next section, we introduce support recovery conditions for the success of this approach, and introduce a variation of \tsalg~ for when they are violated.
%Due to shrinkage at high $\lambda$ and violation of support recovery conditions in overcomplete dictionaries, we also introduce a method for s

%We selected $d$ by looking at the embeddings in Supplementary Figure \ref{sec:embeddings}

%We also make sure to analyze large enough data subsets $I$ so that our results have low variance in replicates.
%Theoretically, $n'$ should be larger than $d \log p$ \citep{candes2007}.

 %discussion of normalization, comsputation, tuning etc

\section{Support Recovery Guarantee}
\label{sec:theory}
In this section, we discuss the behavior of {\tsalg~} theoretically. First, we discuss the existence and uniqueness of a group of functions $f_S\subset \mathcal{F}$ that can serve as a valid parametrization. When such minimal parametrization exists and is unique, we provide sufficient conditions so that {\tsalg~} correctly selects this group with high probability w.r.t. sampling on the manifold and this probability converges to one if sample size tends to infinity. 
\begin{assumption}\label{as:all} Throughout this section, we assume the followings to be true.
    \begin{enumerate}
        \item \label{as:M} $\M$ is a $d$-dimensional $C^\ell,\ell\geq 1$ compact manifold with reach $\tau>0$ embedded in $\mathbb{R}^D$ with inherited Euclidean metric.
        \item \label{as:D} Data $\{\xi_i\}_{i=1}^n$ are sampled from some probability measure $P$ on the manifold that has a Radon-Nikodym derivative $\pi(\xi)$ with respect to the Hausdorff measure.  There exist two positive constants $\pi_{\min},\pi_{\max}$ such that $0< \pi_{\min}\leq \pi(\xi) \leq \pi_{\max}$ for all $\xi\in\M$.
        \item \label{as:F} Dictionary $\F=\{f_j(\xi):j\in[p]\}$ contains $p$ $C^1$ functions defined on a neighborhood of $\M$ in $\mathbb{R}^D$. Further assume that $\delta:=\inf_{\xi\in\M}\min_{j\in[p]}\norm{\nabla f_j(\xi)} > 0$ and denote $\Gamma:=\sup_{\xi\in\M}\max_{j\in[p]} \norm{\nabla f_j(\xi_i)}$.
        \item \label{as:S} $S\subset[p], |S|=d$ is the only subset such that $\rank f_S = d$ a.e. on $\M$ w.r.t. Hausdorff measure.
    \end{enumerate}
\end{assumption}
Assumption \ref{as:M} on manifold and \ref{as:D} on sampling are common in the manifold estimation literature (e.g. \citet{aamari2018}). The positive reach in \ref{as:M} will avoid extreme curvature and bizarre behavior of the manifold, and the assumption \ref{as:D} on the density enforces the uniformity of sampling. Assumption \ref{as:F} restricts the smoothness of all dictionary functions and ensures that all dictionary functions do not have critical points on $\M$ as a function on $\rrr^D$. One should also notice that $\Gamma< \infty$ by the compactness assumption of $\M$. 

Now we are ready to prove support recovery consistency under suitable conditions. Let $\hat{\mathbf{B}}$ be the solution of problem \eqref{eq:obj} and $S(\hat{\mathbf{B}})$ be the nonzero rows of $\hat{\mathbf{B}}$. We will show that the probability of $S(\hat{\mathbf{B}})=S$ converges to 1 as $n$ increases. We start by defining
 \begin{equation}
   b_S = \inf_{\xi:\rank Df_S(\xi) = d}\min_{j\in S} \norm{ \mathbf{B}_{\xi,\{j\}} }_2
   \label{eq:b-global}
 \end{equation}
Larger $b_S$ is an indicator of higher strength of signal.
Further consider the matrix $\tilde{\xb}_{\xi}$ whose $j$-th column is $\xb_{\xi,\cdot j}/\norm{\nabla f_j(\xi)}$. Correspondingly we can define $\tilde{\xb}_{\xi,S}$ as the submatrix of $\tilde{\xb}$ with columns in $S$. Let $\mathbf{G}_{\xi,S}=\diag\{\norm{\nabla f_j(\xi)}\}_{j\in S}$ and define
\begin{align}
     \mu_S &= \sup_{\xi\in\M,j\in S,j'\notin S} |\tilde{\xb}_{\xi,\cdot j}^\top \tilde{\xb}_{\xi,\cdot j'}|\;,
    \label{eq:mu-global}\\
    \nu_S &= \sup_{\xi\in \M} \norm{(\tilde{\xb}_{\xi,S}^\top\tilde{\xb}_{\xi,S})^{-1}-\mathbf{G}_{\xi,S}^{2}}\;.
    \label{eq:nu-global}
\end{align}
Here $\nu_S$ is finite if $\mu_S < 1/(d-1)$, guaranteed by the Gershgorin circle theorem.
The parameter $\mu_S$ can be thought of as a renormalized incoherence between the functions in $S$ and those not in $S$; 
$\nu_S$ is a internal colinearity parameter, which is small when the columns of $\xb_{S}(\xi)$ are closer to orthogonality and the gradient of functions in $S$ are more parallel to the tangent space. We also define
\begin{align}
    \phi_S = \sup_{\xi\in\M}\max_{j\in S}\norm{\nabla f_j(\xi)}_2
    \label{eq:phi-global}
\end{align}
which upper bounds the Euclidean gradient of functions in $S$.

\begin{proposition}
Suppose Assumptions \ref{as:all} hold. In algorithm \ref{alg:tsalg}, suppose tangent spaces are estimated by WL-PCA in Section \ref{sec:tangent} using Gaussian kernel and bandwidth parameter choice $\epsilon_N=r_N=C(({\log n}/{(n-1)})^{1/d})$ with large enough constant $C$, and normalization on dictionary is performed as in Section \ref{sec:other}.  If  $(1+{\nu_S}/{\delta^2})^2\mu_S\phi_S\Gamma d<1$ and  $\lambda_n(1+{\nu_S}/{\delta^2})^2< b_S\sqrt{n}/2$, then there is a constant $N$ depending only on $\M, \pi_{\min},\pi_{\max}$ such that when $n>N$, it holds that
 \begin{equation}
     Pr(S(\widehat{\mathbf{B}}) = S) \geq 1-4(\frac{1}{n})^{\frac{2}{d}}
 \end{equation}
 \label{prop:recovery}
\end{proposition}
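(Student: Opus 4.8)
The plan is to run a primal-dual witness (PDW) argument for the group-lasso objective \eqref{eq:obj}, in the spirit of \citet{arxivVersion}, on a high-probability event on which the estimated tangent projectors are uniformly accurate. The first move is to isolate the randomness: every population quantity in the hypotheses ($b_S,\mu_S,\nu_S,\phi_S,\Gamma,\delta$, and uniform invertibility of the restricted design implicit in $\nu_S<\infty$) is a sup/inf over $\M$, so it automatically bounds its empirical counterpart at the sample points $\xi_i\in\M$; the only stochastic ingredient is the WL-PCA output. I would invoke consistency of weighted local PCA (as in \citet{Singer2011VectorDM,aamari2018}) with the stated bandwidth $\epsilon_N=r_N=C(\log n/(n-1))^{1/d}$: on an event $\mathcal{E}$ with $Pr(\mathcal{E})\ge 1-4n^{-2/d}$, valid once $n$ exceeds a threshold $N=N(\M,\pi_{\min},\pi_{\max})$ ensuring the balls are small relative to $\tau$ and well populated, one has $\max_i\|\widehat{\mathbf{T}}_i\widehat{\mathbf{T}}_i^\top-\mathbf{T}_i\mathbf{T}_i^\top\|\le\eta_n$ with $\eta_n\asymp(\log n/n)^{1/d}\to0$; here the factor $4$ absorbs the constants from a union bound over the $\le n$ centers against a per-point failure probability made $O(n^{-1-2/d})$ by the choice of $C$, together with the handful of sub-events controlling neighbor counts, local mean, local covariance, and curvature bias. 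Using the rotation invariance of \eqref{eq:obj} noted in Section \ref{sec:tslasso}, I align each $\widehat{\mathbf{T}}_i$ to the true basis, so that $\widehat{\mathbf{X}}_i=\mathbf{X}_i+\mathbf{\Delta}_i$ with $\max_i\|\mathbf{\Delta}_i\|\le\Gamma\eta_n$ after projecting gradients onto the estimated space.

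On $\mathcal{E}$ I then carry out the PDW construction. Solve the oracle problem restricted to coordinates in $S$: each $\widehat{\mathbf{X}}_{iS}$ is $d\times d$ and, since the $O(\eta_n)$ perturbation shifts $\nu_S,\delta$ only by $O(\eta_n)$, it remains uniformly invertible and the strict hypotheses survive for $n$ large. The restricted KKT system gives $\widehat{\mathbf{B}}_{iS}=\widehat{\mathbf{X}}_{iS}^{-1}-\tfrac{\lambda_n}{\sqrt{dn}}(\widehat{\mathbf{X}}_{iS}^\top\widehat{\mathbf{X}}_{iS})^{-1}\mathbf{D}_i$, where $\mathbf{D}_i$ has rows $\widehat{\mathbf{B}}_{i\{j\}}/\|\widehat{\mathbf{B}}_{.j}\|_2$ and satisfies $\sum_i\|\mathbf{D}_i\|_F^2=d$. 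Two things then need checking. \emph{No false exclusions}: for $j\in S$, $\|\widehat{\mathbf{B}}_{i\{j\}}\|\ge\|(\widehat{\mathbf{X}}_{iS}^{-1})_{\{j\}}\|-\tfrac{\lambda_n}{\sqrt{dn}}\|(\widehat{\mathbf{X}}_{iS}^\top\widehat{\mathbf{X}}_{iS})^{-1}\|$; bounding $\|(\widehat{\mathbf{X}}_{iS}^\top\widehat{\mathbf{X}}_{iS})^{-1}\|\le 1+\nu_S/\delta^2$ via the $\mathbf{G}_{iS}$-rescaling in \eqref{eq:nu-global}, and the relevant row of $\widehat{\mathbf{X}}_{iS}^{-1}$ below by $b_S-O(\eta_n)$ from \eqref{eq:b-global}, the hypothesis $\lambda_n(1+\nu_S/\delta^2)^2<b_S\sqrt n/2$ forces the shrinkage below half the signal floor, so every block is nonzero. \emph{No false inclusions}: set $\widehat{\mathbf{B}}_{.j}=0$ for $j\notin S$ and verify the strict dual feasibility $\sqrt{\sum_i\|\widehat{\mathbf{X}}_{i,\cdot j}^\top(\mathbf{I}_d-\widehat{\mathbf{X}}_{iS}\widehat{\mathbf{B}}_{iS})\|^2}<\lambda_n/\sqrt{dn}$; substituting the oracle residual $\mathbf{I}_d-\widehat{\mathbf{X}}_{iS}\widehat{\mathbf{B}}_{iS}=\tfrac{\lambda_n}{\sqrt{dn}}\widehat{\mathbf{X}}_{iS}^{-\top}\mathbf{D}_i$ reduces this to showing $\max_i\|\widehat{\mathbf{X}}_{iS}^{-1}\widehat{\mathbf{X}}_{i,\cdot j}\|<1/\sqrt d$, and expressing $\widehat{\mathbf{X}}_{iS}^{-1}\widehat{\mathbf{X}}_{i,\cdot j}$ through the normalized design $\widetilde{\mathbf{X}}$ shows this quantity is controlled by the incoherence $\mu_S$ (the cross terms $\widetilde{\mathbf{X}}_{i,\cdot j''}^\top\widetilde{\mathbf{X}}_{i,\cdot j}$), the conditioning factor $(1+\nu_S/\delta^2)$, the gradient-norm bounds $\phi_S,\Gamma$, and a dimension factor $d$ — the hypothesis $(1+\nu_S/\delta^2)^2\mu_S\phi_S\Gamma d<1$ is precisely the resulting irrepresentability condition. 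Standard PDW uniqueness then gives that $\widehat{\mathbf{B}}$ (the oracle solution extended by zeros) is the unique minimizer of \eqref{eq:obj}, so $S(\widehat{\mathbf{B}})=S$; combining with $Pr(\mathcal{E})\ge 1-4n^{-2/d}$ yields the claim.

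The main obstacle I expect is the coupling between tangent-space estimation error and the two group-lasso conditions: one must show the $O(\eta_n)$ perturbation $\mathbf{\Delta}_i$ does not destroy the \emph{strict} inequalities, in particular that $(\widehat{\mathbf{X}}_{iS}^\top\widehat{\mathbf{X}}_{iS})^{-1}$ stays bounded, which needs the uniform (not merely a.e.) non-degeneracy of the restricted design — implicitly encoded in finiteness of $\nu_S$ — together with the density lower bound $\pi_{\min}$ to ensure that, almost surely, no sample point lies in (or near) the measure-zero rank-deficient locus of $f_S$. A secondary subtlety is making the union bound over the $\le n$ tangent estimates collapse to the clean $4n^{-2/d}$ with the specified bandwidth constant, and handling the rotation ambiguity of WL-PCA cleanly via the rotation invariance of the loss so that $\widehat{\mathbf{X}}_i$ really is a small additive perturbation of $\mathbf{X}_i$.
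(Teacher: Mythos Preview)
Your proposal is correct and follows essentially the same route as the paper: a primal--dual witness argument for the group-lasso problem, combined with the uniform tangent-space consistency result of \citet{aamari2018} at bandwidth $r_N\asymp(\log n/(n-1))^{1/d}$, and the rotation invariance of the loss to reduce the WL-PCA output to an additive perturbation of the true design.

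The only organizational difference is that the paper factors the argument into two lemmas: first a noiseless PDW (exact tangent spaces) showing $(1+\tilde\nu_S/\delta^2)^2\tilde\mu_S\tilde\phi_S\Gamma d<1$ and $\lambda_n(1+\tilde\nu_S/\delta^2)^2<\tilde b_S\sqrt n/2$ suffice, then a separate perturbation lemma showing that the sample incoherence and internal-colinearity parameters $\tilde\mu_S,\tilde\nu_S$ move by at most $O(e)$ under projector error $e=\max_i\|\widehat{\mathbf T}_i\widehat{\mathbf T}_i^\top-\mathbf T_i\mathbf T_i^\top\|$, so that the strict inequalities persist for $e$ small. You instead run PDW directly on the perturbed design $\widehat{\mathbf X}_i$ and absorb the $O(\eta_n)$ terms inline. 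Both packagings yield the same deterministic threshold $c_0$ on $e$ below which exact recovery holds, after which the Aamari--Levrard bound converts this to the stated probability. Your concern about the a.e.\ rank condition is handled exactly as you suspected: finiteness of $\nu_S$ (implicit in the hypothesis) already forces uniform invertibility of $\mathbf X_{\xi,S}$ on all of $\M$, so no sample can land at a degenerate point.
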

The proof is contained in the supplementary material. The main idea is first to find a sufficient condition so that given correct gradient of each function {\tsalg} can find the correct support, assuming correct estimation of the tangent space. Then we consider this condition in the case where gradient is estimated from data and obtain the guarantee by the fact that tangent spaces can be consistently estimated with larger sample size. 

There are some differences to be noted of this recovery result compared with classical recovery guarantees in Group Lasso type problems in e.g. \citet{Wainwright:2009sharp}, \citet{GO2011}, \citet{Elyaderani2017-ce}.  
First, we cannot adopt directly the usual assumption in Lasso literature that each column of $\xb$ has unit norm, considering the normalization in Section \ref{sec:tangent}. 
Also, the asymptotic regime we are considering here is only $n\rightarrow \infty$. Although we are using a Group Lasso type optimization problem, the dimension $p$ is fixed since we only consider the fixed dictionary. There is no other conditions between $p$ and $n$ in our result, as required in many literature.
Third, the noise structure is not the same as a general Group Lasso problem since the source of noise is estimation of tangent space. Since we are sampling \emph{on the manifold}, there is no noise level parameter that appears in standard Lasso literature. In a simulation experiment, we also explore the behavior of our method on noisy settings.

\section{Experiments}
We illustrate the behavior of \tsalg~ on both synthetic and real data. Our synthetic data sets include a swiss roll in $\rrr^{49}$ and a rigid ethanol data in $\rrr^{50}$ and our real datasets are data molecular dynamics simulation (MDS) for three different molecules (Ethanol , Malonaldehyde and Toluene). Due to space limit we only present result of real datasets here. Results on synthetic datasets are included in the supplementary materials. 

For all of the experiments, the data consist of $n$ data points in $D$ dimensions.
\tsalg~ is applied to a uniformly random subset of size $n' = |\I|$ using $p$ dictionary functions, and this process is repeated $\omega$ number of times.
Note that the entire data set is used for tangent space estimation.
In our experiments, the intrinsic dimension $d$ is assumed known, but could be estimated by a method such as in \citet{LevinaB04}.
The local tangent space kernel bandwidth $\epsilon_N$ is estimated using the algorithm of \citet{Joncas2017-kn} for molecular dynamics data.
Parameters are summarized in Table \ref{tab:exps}.Experiments were performed in Python on a 16 core Linux Debian Cluster with 768 gigabytes of RAM. Code is available at \url{github.com/codanonymous/tslasso}.  Data is available at https://\url{figshare.com/s/fbd95c10b09f1140389d}.
\begin{table}[!h]
    \centering
    \begin{tabular}{l|c|c|c|c|c|c|c|c}
    \hline
    \hline
    Dataset       & $n$   & $N_a$ & $D$ & $d$ & $\epsilon_N$ & $n'$ & $p$ & $\omega$ \\
    \hline
    Eth       & 50000 & 9     & 50  & 2   & 3.5          & 100  & 12  & 25       \\
    Mal & 50000 & 9     & 50  & 2   & 3.5          & 100  & 12  & 25       \\
    Tol       & 50000 & 15    & 50  & 1   & 1.9          & 100  & 30  & 25     \\ 
    \hline
    \hline
    \end{tabular}
    \caption{Parameters in different experiments: Eth (Ethanol), Mal (Malonaldehyde) and Tol (Toluene)}
    \label{tab:exps}
\end{table}

These simulations dynamically generate atomic configurations which, due to interatomic interactions, exhibit non-linear, multiscale, non-i.i.d. noise, as well as non-trivial topology and geometry.
That is, they lie near a low-dimensional manifold \cite{dasMSKClementi:06}. Such simulations are reasonable application for \tsalg~ because there is no sparse parameterization of the data manifold known a priori.
Such parameterizations are useful.
They provide scientific insight about the data generating mechanism, and can be used to bias future simulations.
However, these parameterizations are typically are detected by a trained human expert manually inspecting embedded data manifolds for covariates of interest.
Therefore, we instead apply \tsalg~ to identify functional covariates that parameterize this manifold.

\paragraph{Experiment Setups}

We obtain a Euclidean group-invariant featurization of the atomic coordinates as a vector of planar angles $a_i \in\rrr^{3 {N_a \choose 3}}$: the planar angles formed by triplets of atoms in the molecule
\citep{ChenMcqueenKoelleMChmielaTkatchenko:mlcules-dum19}.
We then perform an SVD on this featurization, and project the data onto the top $D = 50$ singular vectors to remove linear redundancies.
Note that this represents a particular metric on the molecular {\em shape space}.

The dictionaries we considered are constructed on \emph{bond diagram}, a priori information about molecular structure garnered from historical work. Building a dictionary based on this structure is akin to many other methods in the field \citep{Krenn2020-aj, Xie2019-kw}.  Specifically, this dictionary consist of all equivalence classes of 4-tuples of atoms implicitly defined along the molecule skeletons. 

Since original angular data featurization is an overparameterization of the shape space, one cannot use automatically obtained gradients in \tsalg. We therefore project the gradients prior to normalization on the tangent bundle of the shape space as it is embedded in $\rrr^D$.

For \tsalg, the regularization parameter $\lambda_n$ ranges from 0 to the value for which $||\mathbf B_{.j} ||_2 =0$ for all $j$. The last $d$ surviving dictionary functions are chosen as the parameterization for the manifold.

\paragraph{Results on MDS Data}\label{sec:md}

The toeuene case is a manifold with $d=1$. We observe that in all replicates, \tsalg successfully select one of the six torsions associated with the peripheral methyl group bond, which shows the ability of our algorithm to automatically select appropriate parametrizing functions.

We plot the incoherence for Ethanol and Malonaldehyde as the heatmap in figure \ref{fig:ethanol-cosine-colored} and \ref{fig:malonaldehyde-cosine-colored}, which present two groups of highly linearly dependent torsions, corresponding to the two bonds between heavy atoms in the molecules. Therefore, we expect to select a pair of incoherent torsions out of these dictionaries. In figure \ref{fig:malonaldehyde-watch} and \ref{fig:ethanol-watch}, support recovery frequencies for sets of size $d = s =2$ using \tsalg~ on ethanol and malonaldehyde data respectively. As we expected, \tsalg~ select one function from the two groups of highly colinear functions in most replicates. These results shows that our approach is able to identify embedding coordinates that are comparable or preferable to the a priori known functional support.

Results such as these usually are generated subsequent to running a non-parametric manifold learning algorithm, either through visual or saliency-based analyses, but we are able to achieve comparable results without the use of such an algorithm. See supplementary materials for a comparison of our algorithm with other manifold learning algorithms. These results also suggest that the local denoising property of the tangent space estimation, coupled with the global regularity imposed by the assumption that the manifold is parameterized by the same functions throughout, is sufficient to replicate the denoising effect of a manifold learning algorithm. Plus, with the help of domain functions, our embeddings come with good interpretibility.

Also we point out that in our experiments, the subsampled size $n'=100$ is only around 1\% of the whole dataset and in almost all replicates this subsample is sufficient to obtain a valid parametrization. Tangent space estimation is only needed for these points. Therefore bypassing the usual manifold embedding procedure (on the whole dataset) we are able to obtain interpretable embeddings with fewer samples and in a shorter time.

\begin{figure*}
    \centering
\subfloat[]{\includegraphics[width=0.22\textwidth]{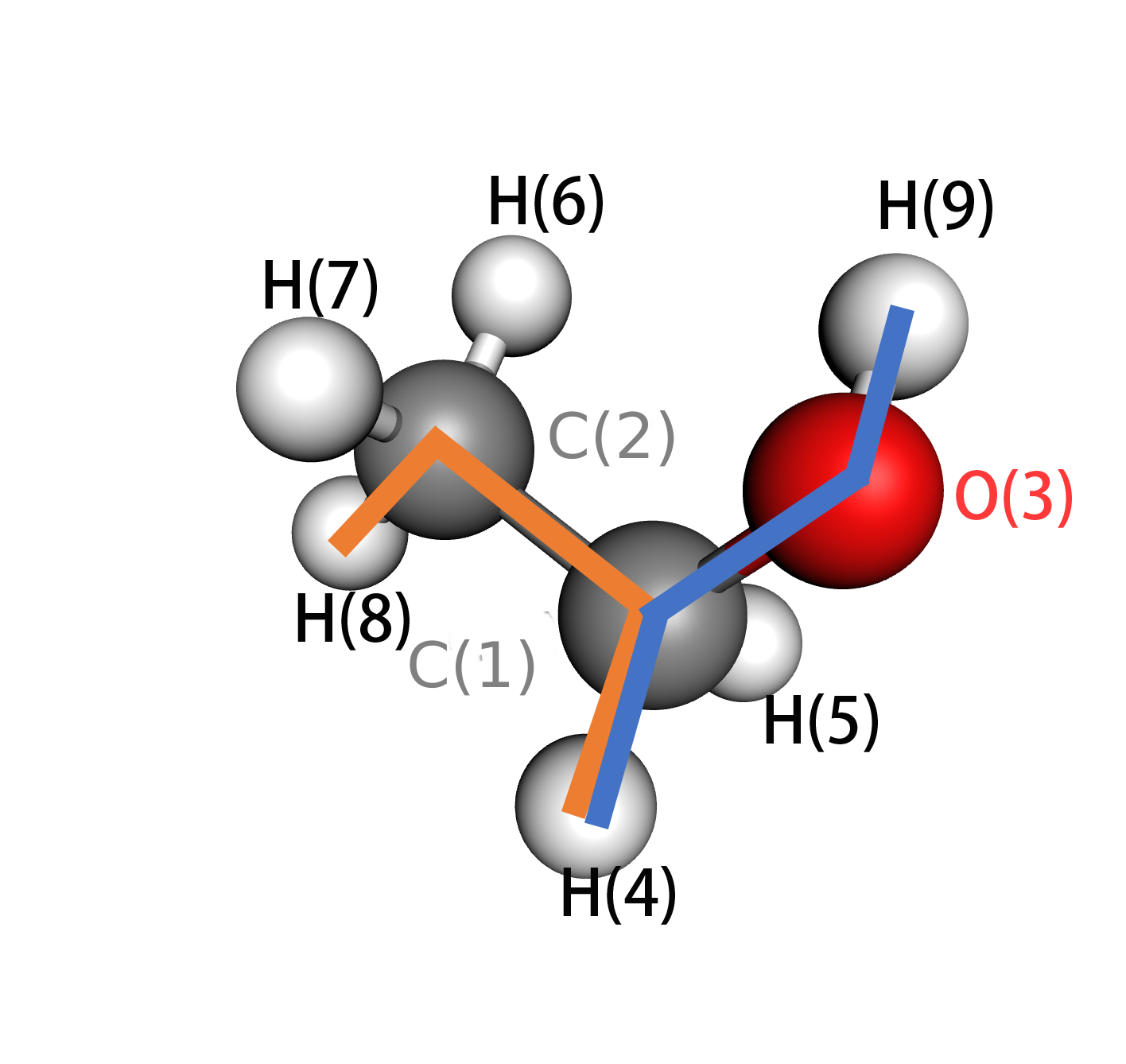}\label{fig:ethanol-diagram}}\hfill
\subfloat[]{\includegraphics[width=0.22\textwidth]{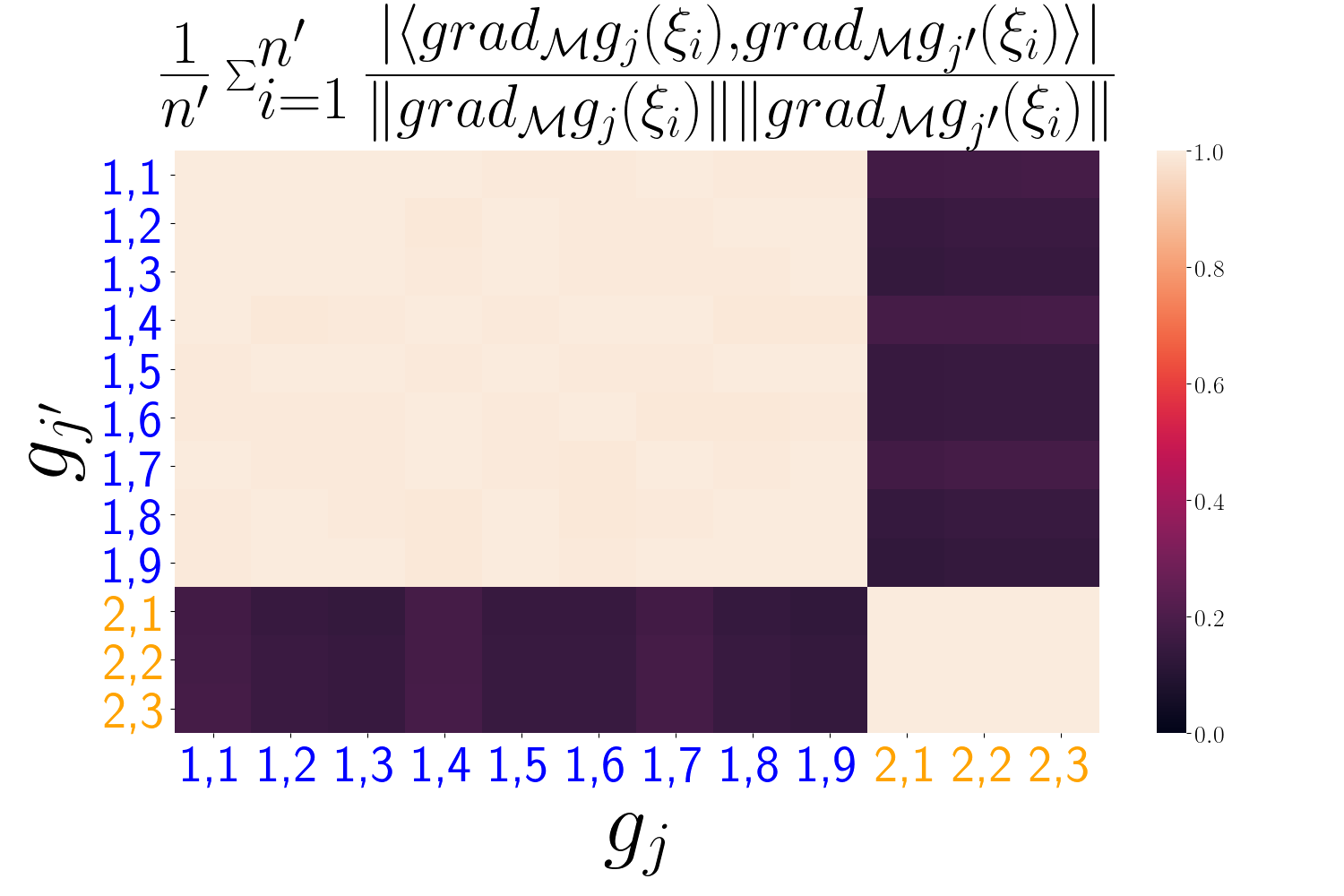}\label{fig:ethanol-cosine-colored}}\hfill
\subfloat[]{\includegraphics[width=0.22\textwidth]{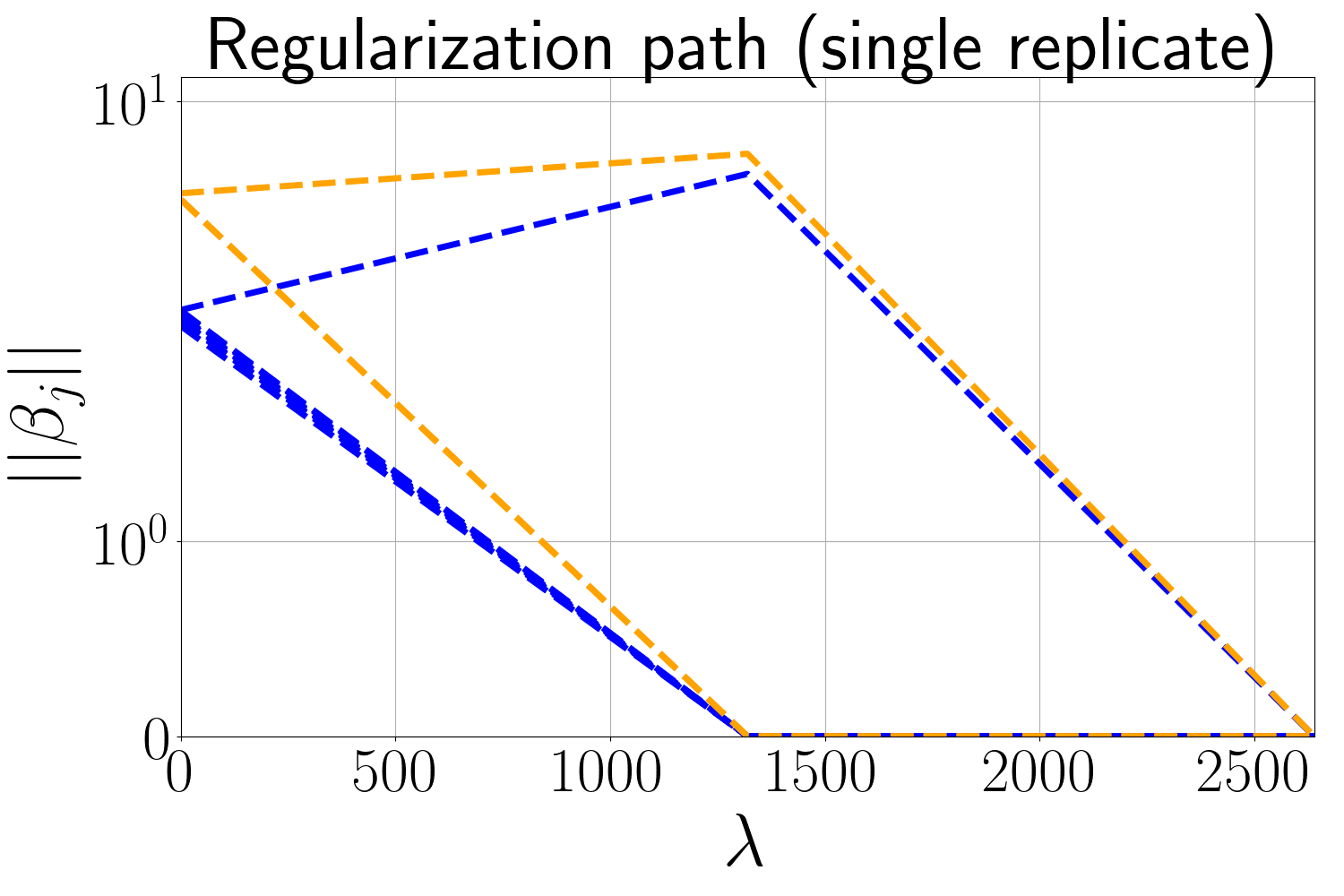}\label{fig:ethanol-regularizationpath}}\hfill
\subfloat[]{\includegraphics[width=0.22\textwidth]{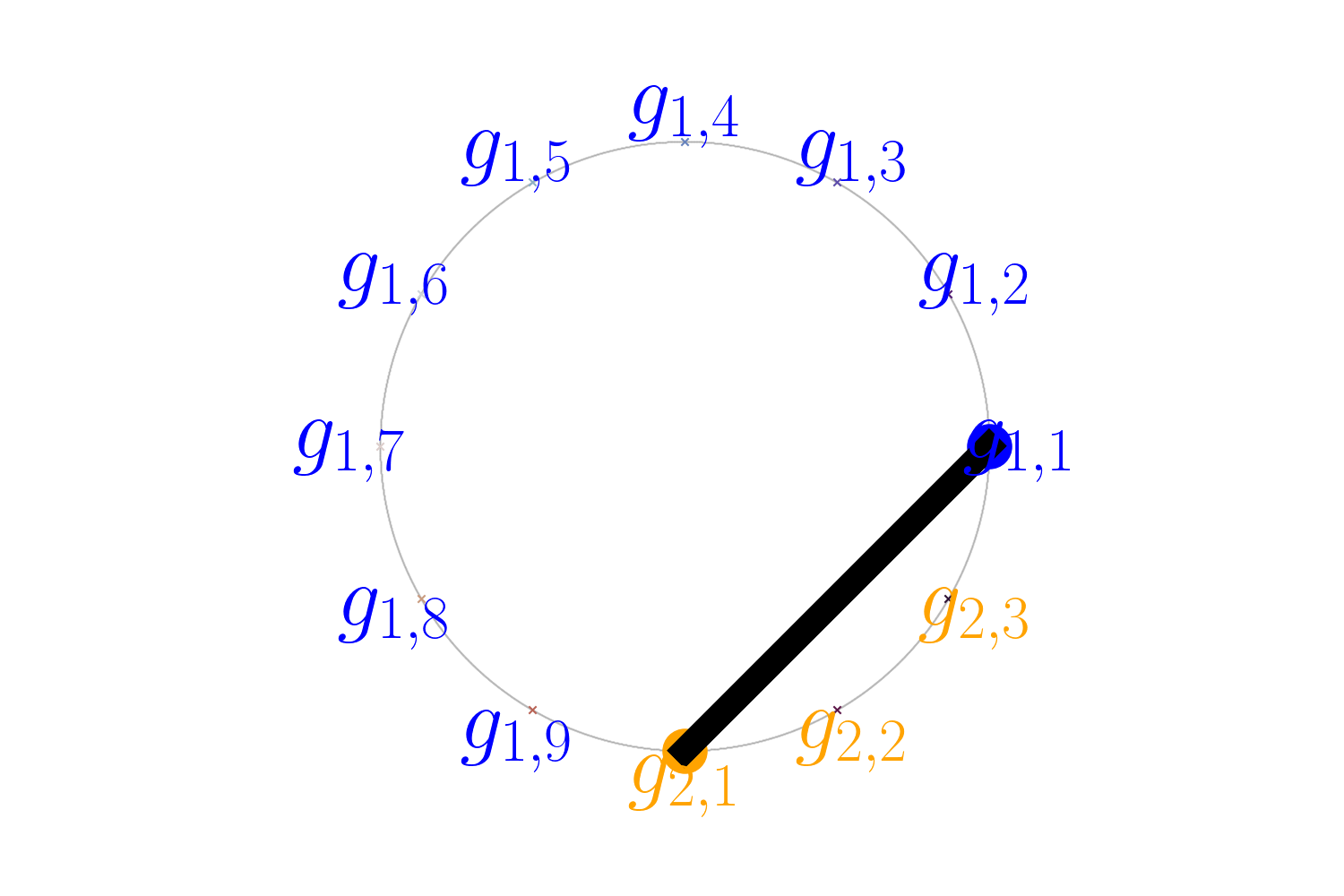}\label{fig:ethanol-watch}}\hfill
\newline
\subfloat[]{\includegraphics[width=0.22\textwidth]{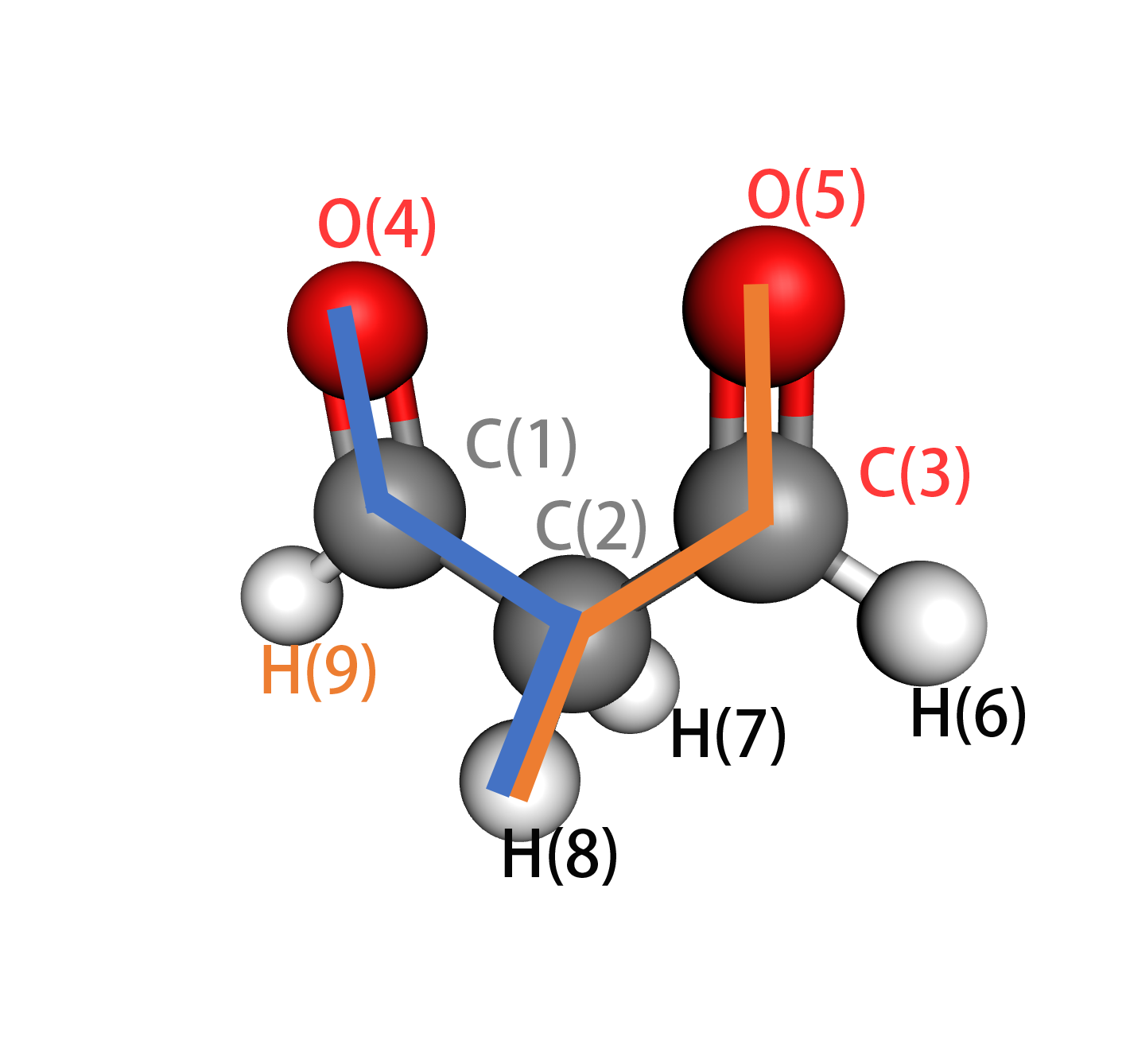}\label{fig:malonaldehyde-diagram}}\hfill
\subfloat[]{\includegraphics[width=0.22\textwidth]{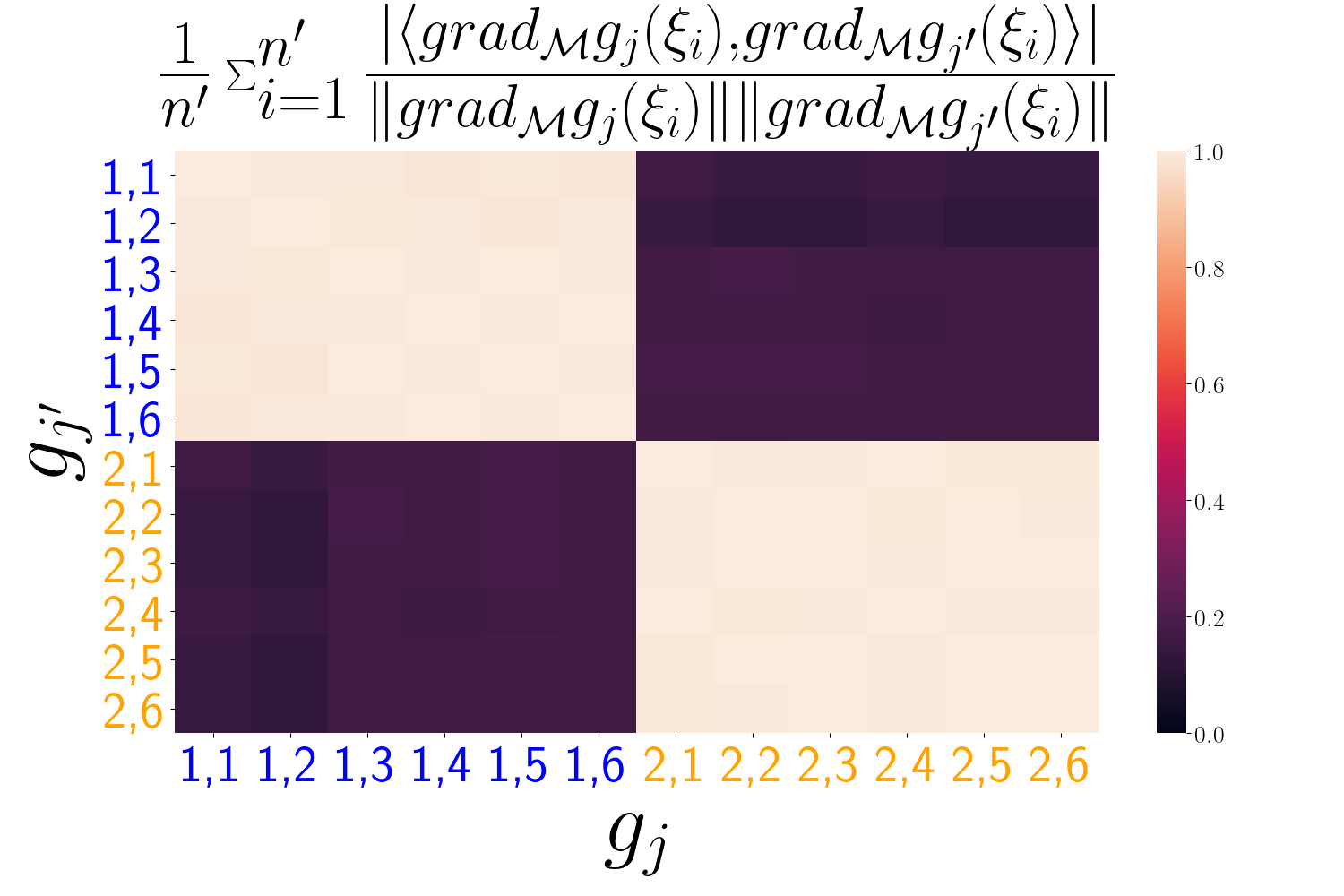}\label{fig:malonaldehyde-cosine-colored}}\hfill
\subfloat[]{\includegraphics[width=0.22\textwidth]{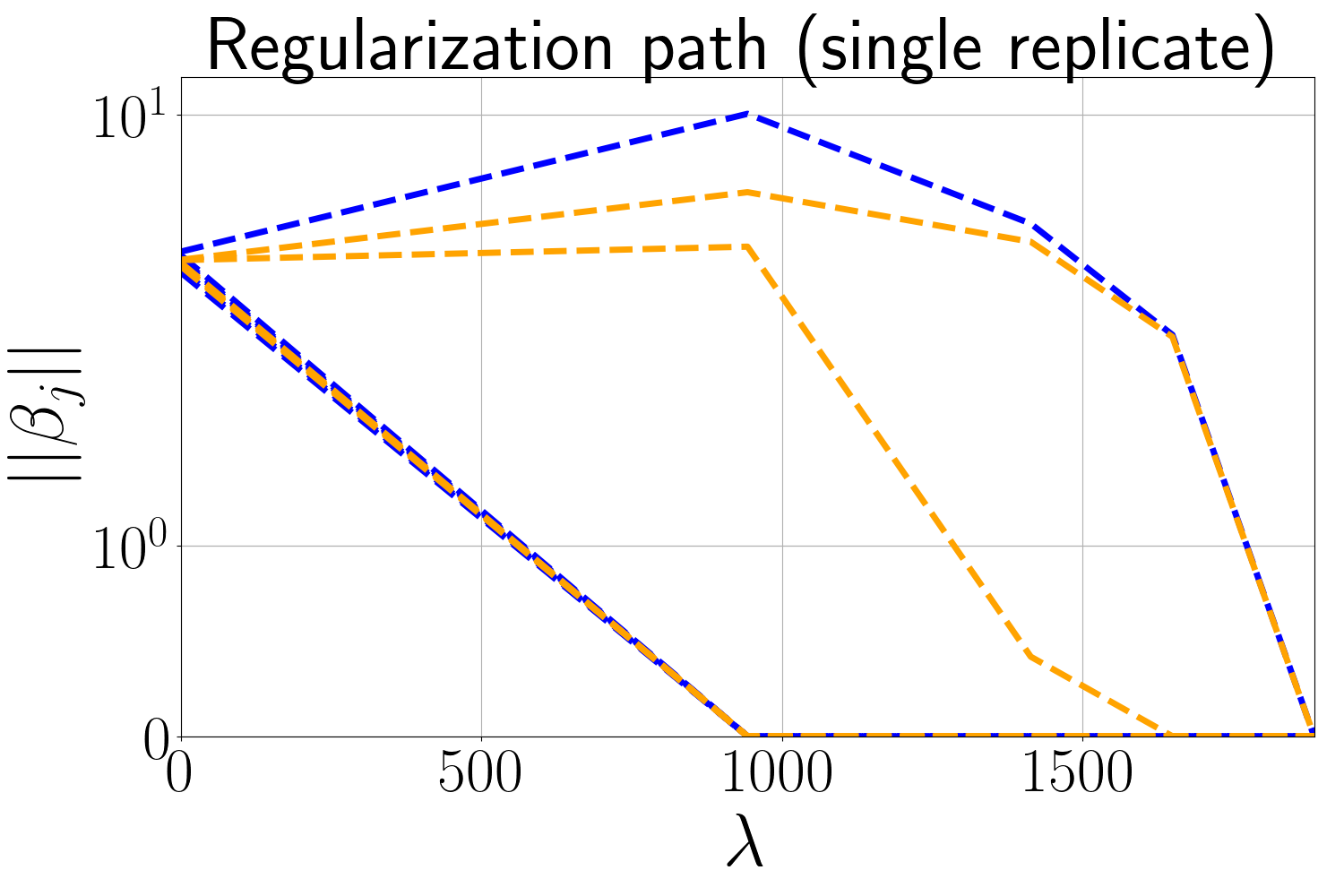}\label{fig:malonaldehyde-regularizationpath}}\hfill
\subfloat[]{\includegraphics[width=0.22\textwidth]{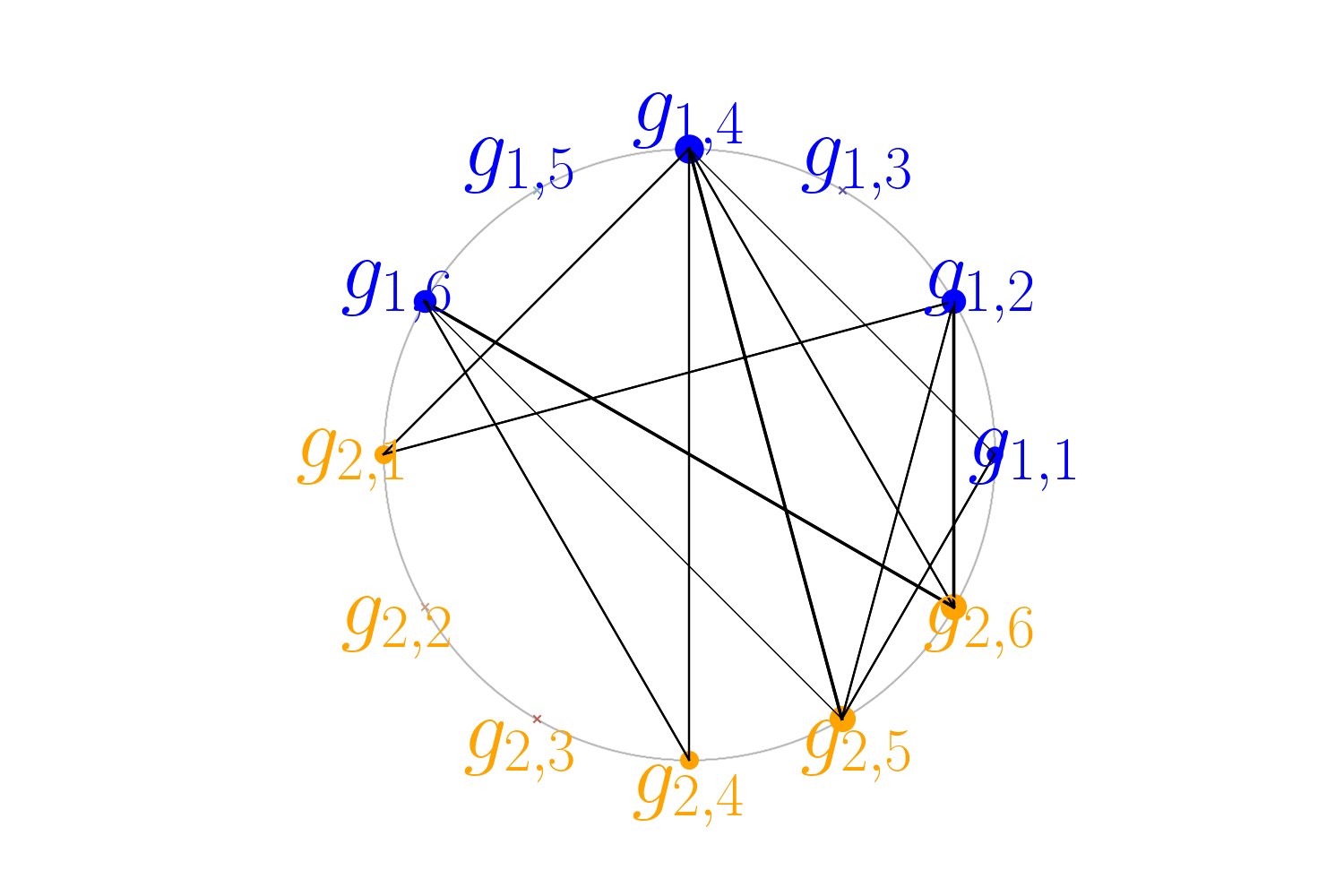}\label{fig:malonaldehyde-watch}}\hfill
\caption{Results from molecular dynamics data. \ref{fig:ethanol-diagram}, \ref{fig:malonaldehyde-diagram} show bond diagrams for ethanol and malonaldehyde, respectively. \ref{fig:ethanol-cosine-colored} and \ref{fig:malonaldehyde-cosine-colored} show the heatmap of cosines (incoherences) of dictionary functions. The color is darker when there is more colinearity.
\ref{fig:ethanol-regularizationpath}, \ref{fig:malonaldehyde-regularizationpath} are regularization paths for a single replicate of ethanol and malonaldehyde. Note that in both figures there are a redundant trajectory of two functions that are added together. 
\ref{fig:ethanol-watch}, \ref{fig:malonaldehyde-watch} Selection of pairs of functions for ethanol and malonaldehyde over replicants using \tsalg~. The node point on the circles represents all functions in the dictionary and the number along the lines are frequencies of each pairs selected over 25 repetitions. \ref{fig:ethanol-watch} means in all 25 repetitions, \tsalg~ selects $g_{1,1}$ and $g_{2,1}$, which are the bond torsions around C-C bond and C-O bond respectively. \ref{fig:malonaldehyde-watch} show that in 24 out of 25 replicates, \tsalg~ is able to select one function from each highly colinear function group.}
\label{fig:results}
\end{figure*}

\section{Discussion and Related Work}
\label{sec:related}

Our method has several good properties.
As long as the dictionary is constructed from some functions that have meaning in the domain of the problem, then our learned embedding is {\em interpretable} by definition.
Furthermore, as discussed in Section \ref{sec:bg}, the mapping $f_S$ is smooth, (implicitly) invertible, and can be naturally extended to values $\xi\in\M$ not in the data.
Finally, our method is flexible with respect to a range of non-linearities.

These features contrast with standard approaches in non-linear dimension reduction.
Parametrizing high-dimensional data by a
small subset of smooth functions has been studied outside the context
of manifold learning as {\em
  autoencoders} \citep{Goodfellow-et-al-2016}. Early work on parametric manifold learning
includes \citet{saulRoweis:03jmlr} and \citet{tehRoweis:nips}, who
proposed a mixture of local linear models whose coordinates are
aligned. In a {\em non-parametric} setting, LTSA \citep{ZhangZ:04} also
gives a global parametrization by aligning locally estimated tangent
spaces. When principal eigenvectors of the Laplace-Beltrami operator
on the manifold are used for embedding, like in Diffusion Maps
\citep{CoiLafLeeMag05}, it can be shown
\citep{Portegies:16} that in the limit of large $n$, with properly
selected eigenfunctions and geometric conditions on the manifold, the
eigenfunctions provide a smooth embedding of the manifold to Euclidean
space. 
However, both the parametric and non-parametric methods above produce learned embeddings $f$ that are abstract in the sense that they do not have a concise functional form.
In this sense, we draw a parallel between our approach and {\em factor models} \citep{yalcin2001}.

Group Lasso type regression for gradient-based variable selection was previously explored in \citet{Haufe2009-yt} and \citet{Ye2012}, but both have a simpler group structure, and are not utilized in the setting of dimension reduction.
More recently, so-called {\em symbolic regression} methods such as \citet{Brunton-2016dt}, \citet{Rudy2019-tk}, and \citet{Champion2019-lu} have been used for linear, non-linear, and machine-learned systems, respectively, and these methods may regarded as  univariate relatives of our approach, since they are concerned with dynamics through time, while we consider the data manifold independently of time.

We also draw several distinctions between the \tsalg~method and the \ouralg~method in \citet{arxivVersion}.
First, \ouralg~uses the same essential idea of sparse linear regression in gradient space, but in order to explain individual embedding coordinate functions.
In contrast, we have no consistent matching between unit vectors in $\bm{I_d}$, and so can only provide an overall regularization path, rather than one corresponding to individual tangent basis vectors. The tangent bases are not themselves gradients of a known function, and, indeed it may not be the case that such a function even exists. Second, \tsalg~method dispenses {\em with the entire Embedding algorithm}, Riemannian metric estimation, and pulling back the embedding gradients steps in \ouralg~, while providing almost everything a user can get from \ouralg~. Apart from simplification, \tsalg~ can be run on $n'\ll n$ data points, about $1/500$ of the data in our experiments (Table 1), while the algorithm in \ouralg~ computes an embedding from all data points. Hence, all operations before the actual GroupLasso are hundreds of times faster than in \ouralg~.
Theoretically, \citet{arxivVersion} only provides (i)analysis in function spaces, and (ii) recovery guarantees for the final step, GroupLasso, based on generic assumptions about the noise. Our paper has {end-to-end guarantees of recovery guarantees from a sample} in Section \ref{sec:theory}.

The reliance on domain prior knowledge in the form of the dictionary $\F$ is essential for \tsalg, and can be a restriction to its usability in practice, especially given the restrictions on gradient field colinearity. However, as the experiments have illustrated, there are domains where construction of a dictionary is reasonable, and explaining the behavior of organic molecules in terms of torsions and planar angles is common in chemistry and drug design \citep{addicoatcollins:2010, Huang2016-nc}. More generally, it would be desirable to utilize a completely agnostic dictionary that also contained the features themselves, and so development of an optimization strategy capable of handling the large amount of colinearity intrinsic to such a set-up is an active area of research.

\section*{Acknowledgement}
The authors acknowledge the support from NSF DMS award 1810975 and DMS award 2015272. The authors also thank the Tkatchenko lab,  and especially Stefan Chmiela for providing both data and expertise. This work is completed at the Pure and Applied Mathematics (IPAM). Marina also gratefully acknowledges a Simons Fellowship from IPAM to her, which made her stay during Fall 2019 possible.

\bibliographystyle{plainnat}
\bibliography{gradients,manifolds,materials,mmp,randos}

%%%%%%%%%%%%%%%%%%%%%%%%%%%%%%%%%%%%%%%%%%%%%%%%%%%%%%%%%%%%

\appendix

\onecolumn
\section*{Supplementary Materials}
\newcommand{\bb}{\mathbf{B}}
\newcommand{\eb}{\mathbf{E}}
\newcommand{\degree}{^\circ}

\section{Proofs}
In this section we will provide proofs to the theoretical results in the main text. 

\subsection{Independence of Tangent Basis Selection}
\begin{proposition}
    Consider alternative bases $\mathbf{T}_i' = \mathbf{T}_i\mathbf{\Gamma}_i$ where $\mathbf{\Gamma}_i$ are $d\times d$ orthonormal matrices. If $\{\mathbf{B}_i\}_{i=1}^n$ minimizes \eqref{eq:obj}, then in the new tangent bases, $\{\mathbf{B}_i\mathbf{\Gamma}_i\}_{i=1}^n$ minimizes the corresponding loss function, which is constructed through replacing $\mathbf{X}_i$ by $\mathbf{\Gamma}_i\mathbf{X}_i$ in \eqref{eq:obj}. Furthermore, the selected support $S$ is independent of the basis chosen for each tangent space.
\end{proposition}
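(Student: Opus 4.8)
The plan is to show that rotating each tangent basis induces a bijective reparametrization of the coefficient matrices under which \emph{both} terms of $J_{\lambda_n}$ are literally unchanged; invariance of the minimizer set and of the recovered support then follow at once. First I would record how the ingredients transform. Since $\mathbf{X}_i=\mathbf{T}_i^\top[\nabla f_j(\xi_i)]_{j\in[p]}$, replacing $\mathbf{T}_i$ by $\mathbf{T}_i'=\mathbf{T}_i\mathbf{\Gamma}_i$ replaces $\mathbf{X}_i$ by $(\mathbf{T}_i')^\top[\nabla f_j(\xi_i)]_{j\in[p]}=\mathbf{\Gamma}_i^\top\mathbf{X}_i$, which is the rotated loss referred to in the statement (since $\mathbf{\Gamma}_i$ ranges over all $d\times d$ orthonormal matrices, writing $\mathbf{\Gamma}_i^\top$ or $\mathbf{\Gamma}_i$ here describes the same family of objectives). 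The matching substitution is $\mathbf{B}_i\mapsto\mathbf{B}_i':=\mathbf{B}_i\mathbf{\Gamma}_i$: it is chosen so that the chain-rule identity $\mathbf{I}_d=\mathbf{X}_i\mathbf{B}_i$ of \eqref{eq:represent} transforms covariantly, $\mathbf{X}_i'\mathbf{B}_i'=\mathbf{\Gamma}_i^\top(\mathbf{X}_i\mathbf{B}_i)\mathbf{\Gamma}_i$, and this map is a bijection of the space of all $\{\mathbf{B}_i\}_{i=1}^n$ onto itself with inverse $\mathbf{B}_i'\mapsto\mathbf{B}_i'\mathbf{\Gamma}_i^\top$.

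Second I would check invariance term by term. For the reconstruction term, writing $\mathbf{I}_d=\mathbf{\Gamma}_i^\top\mathbf{I}_d\mathbf{\Gamma}_i$ gives
\[
\mathbf{I}_d-\mathbf{X}_i'\mathbf{B}_i'=\mathbf{\Gamma}_i^\top\bigl(\mathbf{I}_d-\mathbf{X}_i\mathbf{B}_i\bigr)\mathbf{\Gamma}_i ,
\]
and since the Frobenius norm is invariant under left and right multiplication by orthonormal matrices, $\norm{\mathbf{I}_d-\mathbf{X}_i'\mathbf{B}_i'}_F=\norm{\mathbf{I}_d-\mathbf{X}_i\mathbf{B}_i}_F$ for each $i$. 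For the penalty term, the $j$-th row of $\mathbf{B}_i'$ is $\mathbf{B}_{i\{j\}}\mathbf{\Gamma}_i$, so the transformed group vector $\mathbf{B}_{.j}'$ is the length-$nd$ concatenation of $\mathbf{B}_{i\{j\}}\mathbf{\Gamma}_i$ over $i\in[n]$; orthonormality of each $\mathbf{\Gamma}_i$ then gives $\norm{\mathbf{B}_{.j}'}_2^2=\sum_{i=1}^n\norm{\mathbf{B}_{i\{j\}}\mathbf{\Gamma}_i}_2^2=\sum_{i=1}^n\norm{\mathbf{B}_{i\{j\}}}_2^2=\norm{\mathbf{B}_{.j}}_2^2$. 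Summing over $i$ and $j$ yields $J_{\lambda_n}^{\mathrm{new}}(\mathbf{B}')=J_{\lambda_n}(\mathbf{B})$, where $J_{\lambda_n}^{\mathrm{new}}$ is \eqref{eq:obj} built from the rotated designs $\mathbf{X}_i'$.

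Finally I would conclude. Because $\mathbf{B}\mapsto\mathbf{B}'$ is a value-preserving bijection between the two objectives, it carries the minimizer set of $J_{\lambda_n}$ onto that of $J_{\lambda_n}^{\mathrm{new}}$; in particular, if $\{\mathbf{B}_i\}$ minimizes \eqref{eq:obj} then $\{\mathbf{B}_i\mathbf{\Gamma}_i\}$ minimizes the rotated objective, which is the first assertion. For support independence, the columnwise identity $\norm{\mathbf{B}_{.j}'}_2=\norm{\mathbf{B}_{.j}}_2$ shows that a minimizer and its image have exactly the same set of nonzero group vectors, hence $S(\mathbf{B}')=S(\mathbf{B})$; more generally the collection of supports realized over all minimizers is identical in the two bases, so the output $S$ of Algorithm~\ref{alg:tsalg} is independent of the orthonormal tangent bases chosen. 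There is essentially no obstacle here --- the argument is bookkeeping --- so the only points needing care are keeping the transposes on $\mathbf{\Gamma}_i$ consistent, invoking \emph{two-sided} orthonormal invariance of $\norm{\cdot}_F$ rather than one-sided, and phrasing the conclusion at the level of minimizer sets in case \eqref{eq:obj} has non-unique solutions.
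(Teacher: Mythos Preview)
Your proof is correct and follows essentially the same approach as the paper: verify that both the Frobenius reconstruction term and the group-$\ell_2$ penalty are invariant under the per-point orthogonal reparametrization $\mathbf{B}_i\mapsto\mathbf{B}_i\mathbf{\Gamma}_i$, then conclude that minimizers correspond and supports coincide. Your write-up is in fact somewhat more careful than the paper's, explicitly noting the transpose bookkeeping on $\mathbf{\Gamma}_i$, the bijection on coefficient space, and the phrasing at the level of minimizer sets to cover non-uniqueness.
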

\begin{proof}[Proof of Proposition 2]
It suffices to show that the loss in \eqref{eq:obj} does not change
under orthogonal transformation of individual tangent bases. As long as this holds,
$\mathbf{B}_i\mathbf{\Gamma}_i$ must minimize the loss since otherwise
one could argue that $J_{\lambda_n} (\mathbf{B})$ is not a minimum value for the
original tangent space bases. Note that the norm
$\lvert\lvert{\mathbf{B}_{.j}}\lvert\lvert_2$ is unitary invariant. This is because $\mathbf{B}_{.j}=(j-\text{th row of \ }\mathbf{B}_i)_{i=1}^n$ is constructed by stacking the $j-$th row of each $\mathbf{B}_i$. Hence the new norm is  given by the norm of $(j-\text{th row of \ }\mathbf{B}_i\mathbf{\Gamma}_i)_{i=1}^n$;
therefore the Group Lasso penalty doesn't change after changing
$\mathbf{B}_i$ to $\mathbf{B}_i\mathbf{\Gamma}_i$ for each
$i$. Finally, it holds that $\lvert\lvert \mathbf{I}_d -
\mathbf{\Gamma}_i^\top\mathbf{X}_i\mathbf{B}_i\mathbf{\Gamma}_i\lvert\lvert_F^2
= \lvert\lvert
\mathbf{\Gamma}_i^\top\left(\mathbf{I}_d-\xb_i\mathbf{B}_i\right)\mathbf{\Gamma}_i
\lvert\lvert_F^2 = \lvert\lvert \mathbf{I}_d-\xb_i\mathbf{B}_i
\lvert\lvert_F^2$,
so the $\ell_2$-loss is not changed under orthonormal transformation of the tangent bases. These rotation invariances guarantee the same support $S$. 
\end{proof}

\subsection{Proof of Proposition 3}

We start by stating the following lemma, which gives the sufficient and necessary condition of certain matrices $\bb_i$ to be the solution to problem \eqref{eq:obj}.  It also provides conditions on unique support recovery and unique solutions. The proof is standard in convex analysis literature; we follow a procedure as in \citep{Wainwright:2009sharp}.

\begin{lemma}
\begin{enumerate}
    \item Matrix $\bb$ is the optimal solution to problem \eqref{eq:obj} if and only if there exists an matrix $Z=(z_1^\top,z_2^\top,\cdots,z_p^\top)^\top\in\rrr^{p\times nd}$ such that
\begin{equation}
    z_{j}=\begin{cases}
    \frac{\beta_i}{\norm{\beta_i}}\quad &\beta_i\neq 0 \\
    \in\rrr^{nd} \text{\ with\ } \norm{z_j}_2\leq 1,&\text{otherwise}
    \end{cases}
    \label{eq:zcondition}
\end{equation}
and 
\begin{equation}
    \begin{pmatrix}\xb_1^\top(\mathbf{I}_d-\xb_1\bb_1),\xb_2^\top(\mathbf{I}_d-\xb_2\bb_2),\cdots,\xb_n^\top(\mathbf{I}_d-\xb_n\bb_n)\end{pmatrix} = \frac{{\lambda_n}}{\sqrt{nd}} \mathbf{Z}
    \label{eq:kktcondition}
\end{equation}.
\item If under the setting of (a), further in \eqref{eq:zcondition}, we have $\norm{z_i}<1$ whenever $\beta_i = 0$, then all optimal solutions $\widetilde{\bb}$ of problem \eqref{eq:obj} will have support $S(\widetilde{\bb})\subset S(\bb)$.
\item Under setting of (a) and (b). Let $\mathbf{X}_{iS(\bb)}$ be the submatrix constructed by the $S(\bb)$ columns of of $\xb_i$. If all $\mathbf{X}_{iS(\bb)}^\top\mathbf{X}_{iS(\bb)}$ are invertible, then the solution of problem problem \eqref{eq:obj} is unique.
\end{enumerate}
\label{lem:optimal}
\end{lemma}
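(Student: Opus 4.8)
The plan is to regard $J_{\lambda_n}$ as a convex function of the form $L + (\lambda_n/\sqrt{nd})\,P$, where $L(\mathbf{B}) = \tfrac12\sum_{i=1}^n \|\mathbf{I}_d - \mathbf{X}_i\mathbf{B}_i\|_F^2$ is a smooth convex quadratic and $P(\mathbf{B}) = \sum_{j=1}^p \|\mathbf{B}_{.j}\|_2$ is the nonsmooth, convex, block-separable group penalty, and to read off all three claims from subdifferential calculus. For part (a), since $L$ is differentiable, $\partial J_{\lambda_n}(\mathbf{B}) = \nabla L(\mathbf{B}) + (\lambda_n/\sqrt{nd})\,\partial P(\mathbf{B})$, so $\mathbf{B}$ is optimal iff $0 \in \partial J_{\lambda_n}(\mathbf{B})$. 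I would compute the block gradient $\partial L/\partial\mathbf{B}_i = -\mathbf{X}_i^\top(\mathbf{I}_d - \mathbf{X}_i\mathbf{B}_i)$, recall that $\partial\|v\|_2 = \{v/\|v\|_2\}$ for $v\neq 0$ and that $\partial\|0\|_2$ is the closed unit ball, and use separability of $P$ across the groups $\mathbf{B}_{.j}$: an element of $\partial P(\mathbf{B})$ is exactly a matrix $\mathbf{Z}$ whose rows $z_j$ obey \eqref{eq:zcondition}. Matching the $\mathbf{B}_{.j}$-block of the stationarity condition over all $i$ then yields \eqref{eq:kktcondition}. This step is routine bookkeeping (modulo the evident typo in \eqref{eq:zcondition}, where $\beta_i$ should read $\mathbf{B}_{.j}$).

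For part (b), let $\mathbf{Z}$ be a certificate from (a) with $\|z_j\|_2 < 1$ for all $j \notin S := S(\mathbf{B})$, and let $\widetilde{\mathbf{B}}$ be any minimizer. From convexity of $L$ linearized at $\mathbf{B}$ together with the KKT identity $\nabla L(\mathbf{B}) = -(\lambda_n/\sqrt{nd})\mathbf{Z}$, and from $\mathbf{Z}\in\partial P(\mathbf{B})$ in the subgradient inequality for $P$, I would add the two inequalities to get $L(\widetilde{\mathbf{B}}) + (\lambda_n/\sqrt{nd})P(\widetilde{\mathbf{B}}) \ge L(\mathbf{B}) + (\lambda_n/\sqrt{nd})P(\mathbf{B})$, which must hold with equality since $\widetilde{\mathbf{B}}$ is also optimal. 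Equality forces both constituent inequalities to be tight; tightness of the penalty inequality, combined with the exact identity $\langle\mathbf{Z},\mathbf{B}\rangle = \sum_{j\in S}\|\mathbf{B}_{.j}\|_2 = P(\mathbf{B})$ and the elementwise bound $|\langle z_j,\widetilde{\mathbf{B}}_{.j}\rangle| \le \|z_j\|_2\|\widetilde{\mathbf{B}}_{.j}\|_2 \le \|\widetilde{\mathbf{B}}_{.j}\|_2$, yields $\sum_j (1 - \|z_j\|_2)\|\widetilde{\mathbf{B}}_{.j}\|_2 = 0$. Since every summand is nonnegative and $\|z_j\|_2 < 1$ for $j\notin S$, this forces $\widetilde{\mathbf{B}}_{.j} = 0$ there, i.e. $S(\widetilde{\mathbf{B}}) \subseteq S$.

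For part (c), I would push the same equality case one step further: tightness of the linearized-$L$ inequality for a convex quadratic forces its second-order remainder to vanish, $\sum_{i} \|\mathbf{X}_i(\widetilde{\mathbf{B}}_i - \mathbf{B}_i)\|_F^2 = 0$, hence $\mathbf{X}_i\widetilde{\mathbf{B}}_i = \mathbf{X}_i\mathbf{B}_i$ for every $i$. Since both $\mathbf{B}$ and $\widetilde{\mathbf{B}}$ are supported on $S$ by (b), this reads $\mathbf{X}_{iS}\widetilde{\mathbf{B}}_{iS} = \mathbf{X}_{iS}\mathbf{B}_{iS}$; invertibility of $\mathbf{X}_{iS}^\top\mathbf{X}_{iS}$ means $\mathbf{X}_{iS}$ has full column rank, so $\mathbf{X}_{iS}C = \mathbf{X}_{iS}C'$ implies $C = C'$, giving $\widetilde{\mathbf{B}}_{iS} = \mathbf{B}_{iS}$, and with the off-support blocks of both equal to zero, $\widetilde{\mathbf{B}} = \mathbf{B}$.

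The step I expect to need the most care is the equality-case accounting in (b): one must work with precisely the $\mathbf{Z}$ that is strictly dual-feasible off $S$, invoke the KKT identity to replace $\nabla L(\mathbf{B})$ by $-(\lambda_n/\sqrt{nd})\mathbf{Z}$ so that the linearization of $L$ is taken against that same certificate, and verify the exact identity $\langle\mathbf{Z},\mathbf{B}\rangle = P(\mathbf{B})$ so that no slack is lost when the two inequalities are summed. The remaining pieces — the block-gradient computation in (a), the Frobenius-norm expansion in (c), and the full-rank cancellation argument — are standard.
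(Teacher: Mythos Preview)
Your proposal is correct and complete. It does, however, follow a genuinely different route from the paper's proof. The paper reformulates \eqref{eq:obj} as a second-order cone program by introducing auxiliary variables $b_j\ge\|\beta_j\|_2$, writes down the full conic KKT system (primal feasibility, dual feasibility, complementary slackness, stationarity), and then reads off \eqref{eq:zcondition}--\eqref{eq:kktcondition} from those conditions. For part~(b) it uses that any primal optimum must satisfy complementary slackness with the given dual certificate, so $z_j^\top\beta_j=\|\beta_j\|_2$ forces $\beta_j=0$ whenever $\|z_j\|<1$; for part~(c) it simply observes that $J_{\lambda_n}$ restricted to matrices supported on $S(\mathbf{B})$ is strictly convex when each $\mathbf{X}_{iS}^\top\mathbf{X}_{iS}$ is invertible.

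Your approach bypasses the cone reformulation entirely: you work directly with the subdifferential $\partial J_{\lambda_n}=\nabla L+(\lambda_n/\sqrt{nd})\,\partial P$ for~(a), run an equality-case argument on the sum of the first-order lower bound for $L$ and the subgradient inequality for $P$ for~(b), and exploit the explicit quadratic remainder $\tfrac12\sum_i\|\mathbf{X}_i(\widetilde{\mathbf{B}}_i-\mathbf{B}_i)\|_F^2$ for~(c). Your route is more self-contained and avoids the bookkeeping of the auxiliary cone variables; the paper's route makes~(b) a one-line consequence of complementary slackness once the conic framework is in place, and makes~(c) a one-line appeal to strict convexity rather than an explicit remainder computation. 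Either is standard in the group-Lasso literature; your equality-case accounting in~(b) is exactly the point that needs care, and you have it right.
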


\begin{proof}
Before we further explore the result, we transform the problem \eqref{eq:obj}. We stack the matrices at each point together. We will now write 
\begin{equation}
    \xb = \begin{pmatrix}\xb_1 \\ \xb_2 \\ \cdots \\ \xb_n \end{pmatrix}\in \rrr^{nd \times p},\quad \bb = \begin{pmatrix} \bb_1, \bb_2, \cdots, \bb_n \end{pmatrix} \in \rrr^{p\times nd}
\end{equation}
Then $\beta_j$ is the $j-th$ row for $\bb$.
Further let matrix
\begin{equation}
    \eb_i = \begin{pmatrix}\mathbf{0}, \cdots, \mathbf{0}, \mathbf{I}_d, \mathbf{0},\cdots,\mathbf{0} \end{pmatrix}^\top \in \rrr^{nd\times d}
\end{equation}
be the block matrix with the $i-th$ block being identity matrix and the other blocks are all zeros. Then the loss function of TSLasso can be rewritten as
\begin{equation}
    J_{{\lambda_n}}(\bb) = \frac{1}{2}\sum_{i=1}^n \norm{\eb_i^\top(\mathbf{I}_{nd}-\xb\bb)\eb_i}_F^2+\frac{{\lambda_n}}{\sqrt{nd}}\norm{\bb}_{1,2}
    \label{eq: rewrite}
\end{equation}
where $\norm{\bb}_{1,2}$ is the norm defined by $\sum_{j=1}^p\norm{\beta_j}_2$.

The proof of this lemma is standard technique in convex analysis. Define $h_i(\bb) = \norm{\eb_i^\top(\mathbf{I}_{nd}-\xb\bb)\eb_i}_F^2$ penalty part and $g$ is the group lasso penalty.

The first step is to compute the gradient of $h_i(\bb)$ with respect to $\bb$. For any $\mathbf{H}\in \rrr^{p\times nd}$, compute
\begin{align}
    &\quad h_i(\bb+\mathbf{H})-h_i(\bb) \\
    &= \trace(\eb_i^\top(\mathbf{I}_{nd}-\xb(\bb+\mathbf{H}))\eb_i)^\top(\eb_i^\top(\mathbf{I}_{nd}-\xb(\bb+\mathbf{H}))\eb_i) - \trace(\eb_i^\top(\mathbf{I}_{nd}-\xb\bb)\eb_i)^\top(\eb_i^\top(\mathbf{I}_{nd}-\xb\bb)\eb_i)\\ 
    &=-2\trace(\mathbf{H}^\top\xb^\top\eb_i\eb_i^\top(\mathbf{I}_{nd}-\xb\bb)\eb_i\eb_i^\top) + O(\norm{\mathbf{H}}_F^2) \\
    &=-2 \left\langle \mathbf{H}, \xb^\top\eb_i\eb_i^\top(\mathbf{I}_{nd}-\xb\bb)\eb_i\eb_i^\top \right \rangle_F + O(\norm{\mathbf{H}}_F^2)
\end{align}
Hence we can conclude that $\nabla_\bb h_i(\bb) = -2\xb^\top\eb_i\eb_i^\top(\mathbf{I}_{nd}-\mathbf{X}\mathbf{B})\mathbf{E}_i\eb_i^\top=-2\mathbf{X}^\top\eb_i(\mathbf{I}_d-\xb_i\bb_i)\eb_i^\top$, and therefore
\begin{align}
\notag
    \nabla_\bb \frac{1}{2}\sum_{i=1}^n\norm{\mathbf{I}_d-\mathbf{X}_i\bb_i}_F^2 &= \sum_{i=1}^n-\xb^\top\eb_i(\mathbf{I}_d-\xb_i\bb_i)\eb_i^\top \\
    &=-\begin{pmatrix}\xb_1^\top(\mathbf{I}_d-\xb_1\bb_1),\xb_2^\top(\mathbf{I}_d-\xb_2\bb_2),\cdots,\xb_n^\top(\mathbf{I}_d-\xb_n\bb_n)\end{pmatrix}.
\end{align}

Recall that we use $\beta_i$ to denote the $i-th$ row of $\bb$. We use a similar argument in proof of lemma 2 of \citep{GO2011} and notice that the original optimization problem is convex and strictly feasible (hence strong duality holds). The primal problem is 
\begin{align}
    \min_{\substack{\bb\in\rrr^{p\times nd}\\ b\in \rrr^p}} &\frac{1}{2}\sum_{i=1}^n\norm{\eb_i^\top(\mathbf{I}_{nd}-\xb\bb)\eb_i}_F^2+\frac{{\lambda_n}}{\sqrt{nd}}\sum_{j=1}^pb_j\\
    s.t. \ &(\beta_j,b_j)\in \mathcal{K},1\leq j\leq p
\end{align}
where $\mathcal{K}$ is the second-order cone as usually defined. The dual problem is given by
\begin{align}
    \max_{\substack{\mathbf{Z}\in\rrr^{p\times nd} \\ t\in\rrr^p}}\min_{\substack{\bb\in\rrr^{p\times nd}\\b\in \rrr^p}} L(\bb,b,\mathbf{Z},t)&=\frac{1}{2}\sum_{i=1}^n\norm{\eb_i^\top(\mathbf{I}_{nd}-\xb\bb)\eb_i}_F^2+\frac{{\lambda_n}}{\sqrt{nd}}\sum_{j=1}^pb_j+\sum_{j=1}^p\langle (z_j,t_j),(\beta_j,b_j) \rangle\\ s.t.\ &(z_j,t_j)\in \mathcal{K}\degree
\end{align}
where $z_j\in\rrr^{nd}$ is the $j-$th row of $\mathbf{Z}$. Note that $\mathcal{K}\degree$ is the polar cone of $\mathcal{K}$ and second order cone is self-dual. Hence we have $(z_i,-\mathbf{T}_i)\in \mathcal{K}$. 

Since the primal problem is strictly feasible, strong duality holds. For any pair of ($\bb^*,b^*$) and ($\mathbf{Z}^*,t^*$) primal and dual solutions, they have to satisfy the KKT condtion that
\begin{subequations}\label{eq:kkt}
\begin{align}
    \norm{\beta_j^*}_2&\leq b_j^*, & 1\leq j \leq p\; ,\label{eq:primalfeasibility} \\
    \norm{z_j^*}_2&\leq -t_j^*,& 1\leq j \leq p\; , \label{eq:dualfeasibility}\\
    z_j^{*T}\beta_j^*+t_j^*b_j^* &= 0, & 1\leq j \leq p \; , \label{eq:complementaryslackness} \\
    \nabla_B\left[\frac{1}{2}\sum_{i=1}^n\norm{\mathbf{I}_d-\xb_i\bb_i}_F^2\right]+\mathbf{Z}^* &= 0 \; , \label{eq:gradientB} \\
    \frac{{\lambda_n}}{\sqrt{nd}}+t_j^* &= 0\; . \label{eq:gradientb}
\end{align}
\end{subequations}
Note that \eqref{eq:complementaryslackness}implies that $t_j^* = -\frac{{\lambda_n}}{\sqrt{nd}}$. Then by \eqref{eq:primalfeasibility} and \eqref{eq:dualfeasibility} we have $\norm{z_j^{*T}\beta_j^*}\leq\frac{{\lambda_n}}{\sqrt{nd}}\norm{\beta_j}_2$. Notice that the equality holds in \eqref{eq:complementaryslackness}, there fore $\norm{z_j^*}=\frac{\sqrt{nd}}{{\lambda_n}}$ and $b_j^* = \norm{\beta_j^*}$. Renormalize $z_j^* = \frac{\sqrt{nd}}{{\lambda_n}}z_j^*$ and part (a) holds. For part b, for any $j, z_j^{*T}\beta_j = \norm{\beta_j}_2$.  Then $\beta_j = 0$ must hold for any $\norm{z_j}<1$. For part (c) note that in this case the loss function is strictly convex when the original problem is restricted to minimizing over $\bb:\beta_i = 0,\quad \forall i\notin S(\bb)$. This strict convexity implies the uniqueness of solution.
\end{proof}

The previous lemma provides a tool for understanding the support recovery consistency of \tsalg. 

For any arbitrary $S\subset[p]$ such that $|S| = d, \rank \xb_{iS} = d$ holds for all $i\in[n]$, we establish a sufficient condition on $\xb_{iS}$ such that they can be discovered by the \tsalg. 
Suppose at each data point $i$, we decompose the matrix $\mathbf{I}_d$ by 
\begin{equation}
    \mathbf{I}_d = \xb_{iS}\bb_{iS}^* + \mathbf{W}_{iS}
    \label{eq:noise}
\end{equation}
where $\bb_{iS}^*$s are $p\times d$ matrices that only has non zero entries in rows in $S$ and minimizes the loss $\norm{\mathbf{I}_d-\xb_i\bb_i}_F^2$. In fact, since $\xb_{iS}$ is full rank, there exists a unique $\bb_{iS}^*$ for each $i$ such that $\mathbf{W}_{iS} = 0$. Denote $\bb_{i,j\cdot}^*$ be the $j-$th row in $\bb_{iS}^*$ and define
\begin{equation}
    \tilde{b}_S=\min_{i\in[n]} \min_{j\in S} \norm{\bb_{i,j\cdot}^*}\;,
    \label{eq:b}
\end{equation}
This is a sample version of $b_S$ defined in \eqref{eq:b-global}.

The following lemma shows a sufficient condition on $\mathbf{X}_i$ so that the true support can be found. We first define several derived quantities of $\xb_i$. Denoting the $j-$th column of matrix $\xb_i$ by $x_{ij}$, we define
\begin{subequations}
\begin{align}
    \text{S-incoherence} \quad & \tilde{\mu}_S = \max_{i=1:n,j\in S,j'\notin S} \frac{|x_{ij}^\top x_{ij'}|}{\norm{\nabla f_j(\xi_i)}\norm{\nabla f_{j'}(\xi_i)}} \label{eq:mu} \\
    \text{internal-colinearity} \quad & \tilde{\nu}_S =\max_{\i=1:n} \norm{(\tilde{\xb}_{iS}^\top\tilde{\xb}_{iS})^{-1}-\mathbf{G}_S(\xi_i)^{2}} \label{eq:nu}. \\
    \text{maximal gradient norm}\quad &\tilde{\phi}_S = \max_{i=1:n} \max_{j\in S} \norm{\nabla f_j(\xi_i)}\label{eq:phi}
\end{align}
\end{subequations}

These are sampled version of $\mu_S$,$\nu_S$ and $\phi_S$ defined on the whole manifold from \eqref{eq:mu-global},\eqref{eq:nu-global} and \eqref{eq:phi-global}. 

Now we are ready to prove proposition \ref{prop:recovery}. We start with some lemmas in linear algebra.

\begin{lemma}
    Let $\mathbf{A},\mathbf{B}$ be $d\times d$ positive definite matrices. Then $\norm{\mathbf{A}^{-1}-\mathbf{B}^{-1}}\leq \norm{\mathbf{B}^{-1}}^2\norm{\mathbf{A}-\mathbf{B}}$
    \label{lem:psdinvperturbation}
\end{lemma}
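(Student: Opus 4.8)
The plan is to reduce the claim to the standard resolvent identity together with submultiplicativity of the spectral norm, and then to use positive definiteness to control the leftover factor. First I would record the identity
\[
\mathbf{A}^{-1}-\mathbf{B}^{-1} \;=\; \mathbf{A}^{-1}\bigl(\mathbf{B}-\mathbf{A}\bigr)\mathbf{B}^{-1},
\]
obtained by writing $\mathbf{A}^{-1}-\mathbf{B}^{-1}=\mathbf{A}^{-1}(\mathbf{B}\mathbf{B}^{-1})-(\mathbf{A}^{-1}\mathbf{A})\mathbf{B}^{-1}$, which is legitimate because $\mathbf{A},\mathbf{B}$ are invertible. Taking the spectral norm of both sides and applying $\norm{\mathbf{M}\mathbf{N}}\le\norm{\mathbf{M}}\,\norm{\mathbf{N}}$ twice gives $\norm{\mathbf{A}^{-1}-\mathbf{B}^{-1}}\le\norm{\mathbf{A}^{-1}}\,\norm{\mathbf{A}-\mathbf{B}}\,\norm{\mathbf{B}^{-1}}$, which already has the desired shape once $\norm{\mathbf{A}^{-1}}$ is replaced by $\norm{\mathbf{B}^{-1}}$.

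For that last replacement I would use positive definiteness in the Loewner order, as it arises where the lemma is invoked: $\mathbf{A}$ is a (perturbed) Gram matrix dominating its target, i.e. $\mathbf{A}\succeq\mathbf{B}\succ 0$, in the perturbation analysis feeding into Proposition~\ref{prop:recovery}. Under $\mathbf{A}\succeq\mathbf{B}$, Weyl's inequality (equivalently, the variational characterization of eigenvalues) yields $\lambda_{\min}(\mathbf{A})\ge\lambda_{\min}(\mathbf{B})>0$; and since for a positive definite matrix the spectral norm of the inverse is $1/\lambda_{\min}$, this gives $\norm{\mathbf{A}^{-1}}\le\norm{\mathbf{B}^{-1}}$. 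Substituting into the bound of the previous paragraph completes the proof.

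I expect the only genuine obstacle to be precisely this comparison step. For arbitrary positive definite $\mathbf{A},\mathbf{B}$ \emph{without} a Loewner ordering the stated inequality can fail — e.g. $\mathbf{A}=\varepsilon\mathbf{I}_d$, $\mathbf{B}=\mathbf{I}_d$ with $\varepsilon\to 0$ makes the left side blow up while the right side stays bounded — so any proof must either explicitly invoke $\mathbf{A}\succeq\mathbf{B}$ (the case actually used), or instead record the symmetric estimate $\norm{\mathbf{A}^{-1}-\mathbf{B}^{-1}}\le\norm{\mathbf{A}^{-1}}\,\norm{\mathbf{B}^{-1}}\,\norm{\mathbf{A}-\mathbf{B}}$. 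Everything else is a one-line manipulation.
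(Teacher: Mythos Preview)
The paper states this lemma without proof, so there is no argument to compare yours against directly. Your reduction via the resolvent identity $\mathbf{A}^{-1}-\mathbf{B}^{-1}=\mathbf{A}^{-1}(\mathbf{B}-\mathbf{A})\mathbf{B}^{-1}$ and submultiplicativity is the standard and correct route, and you are right that the inequality as printed is false for arbitrary positive definite $\mathbf{A},\mathbf{B}$: your counterexample $\mathbf{A}=\varepsilon\mathbf{I}_d$, $\mathbf{B}=\mathbf{I}_d$ is decisive.

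Your proposed repair, however, does not match the paper's actual application. In the proof of Lemma~\ref{thm: fixed} the lemma is invoked with $\mathbf{A}=\tilde{\widehat{\xb}}_{iS}^\top\tilde{\widehat{\xb}}_{iS}$ (Gram matrix built from the \emph{estimated} tangent basis) and $\mathbf{B}=\tilde{\xb}_{iS}^\top\tilde{\xb}_{iS}$ (from the true basis). Neither of these dominates the other in the Loewner order, so the hypothesis $\mathbf{A}\succeq\mathbf{B}$ you introduce is not available there. What the paper actually relies on is that $\norm{\mathbf{A}-\mathbf{B}}\le d e$ with $e$ taken small; the bound they record, $(\tilde{\phi}_S^2+\tilde{\nu}_S)^2 d e$, is $\norm{\mathbf{B}^{-1}}^2\norm{\mathbf{A}-\mathbf{B}}$, which is only justified once one separately controls $\norm{\mathbf{A}^{-1}}$ in terms of $\norm{\mathbf{B}^{-1}}$ via a small-perturbation (Neumann series) argument --- a step the paper omits. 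So your diagnosis of the gap is correct; the right patch for this context is the symmetric estimate $\norm{\mathbf{A}^{-1}-\mathbf{B}^{-1}}\le\norm{\mathbf{A}^{-1}}\norm{\mathbf{B}^{-1}}\norm{\mathbf{A}-\mathbf{B}}$ you mention at the end, not the Loewner-ordering one.
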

\begin{lemma}
    Let $\mathbf{A},\mathbf{B}$ be two $d\times d$ matrices. $\mathbf{A}$ is positive semidefinite. Denote $\norm{\mathbf{A}}_{\infty,2}$ be the maximum $\ell_2$ norm of the rows of $\mathbf{A}$. Then $\norm{\mathbf{A}\mathbf{B}}_{\infty,2}\leq \norm{\mathbf{A}} \norm{\mathbf{B}}_F$
    \label{lem:infinityl2norm}
\end{lemma}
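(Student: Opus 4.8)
The plan is to reduce the inequality to a per-row estimate and then invoke Cauchy--Schwarz. Write $r_i^\top$ for the $i$-th row of $\mathbf{A}$, so that the $i$-th row of $\mathbf{A}\mathbf{B}$ is $r_i^\top\mathbf{B}$ and, by definition of the mixed norm, $\norm{\mathbf{A}\mathbf{B}}_{\infty,2} = \max_{1\le i\le d}\norm{\mathbf{B}^\top r_i}_2$. It thus suffices to show $\norm{\mathbf{B}^\top r_i}_2 \le \norm{\mathbf{A}}\,\norm{\mathbf{B}}_F$ for each fixed $i$.

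First I would control $\norm{\mathbf{B}^\top r_i}_2$ in terms of $\norm{r_i}_2$ and $\norm{\mathbf{B}}_F$. Letting $b_k$ denote the $k$-th column of $\mathbf{B}$, the $k$-th coordinate of $\mathbf{B}^\top r_i$ is $\langle b_k, r_i\rangle$, so Cauchy--Schwarz gives $\norm{\mathbf{B}^\top r_i}_2^2 = \sum_{k=1}^{d}\langle b_k, r_i\rangle^2 \le \norm{r_i}_2^2 \sum_{k=1}^{d}\norm{b_k}_2^2 = \norm{r_i}_2^2\,\norm{\mathbf{B}}_F^2$. This is just the elementary bound $\norm{\mathbf{B}^\top}\le\norm{\mathbf{B}}_F$ between the operator and Frobenius norms, applied to the vector $r_i$.

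Next I would bound $\norm{r_i}_2$ by $\norm{\mathbf{A}}$. Since $\mathbf{A}$ is positive semidefinite, hence symmetric, $r_i$ is also the $i$-th column $\mathbf{A}e_i$ of $\mathbf{A}$ (with $e_i$ the $i$-th standard basis vector), so $\norm{r_i}_2 = \norm{\mathbf{A}e_i}_2 \le \norm{\mathbf{A}}\,\norm{e_i}_2 = \norm{\mathbf{A}}$. Combining this with the previous estimate and maximizing over $i$ yields $\norm{\mathbf{A}\mathbf{B}}_{\infty,2} = \max_i\norm{\mathbf{B}^\top r_i}_2 \le \bigl(\max_i\norm{r_i}_2\bigr)\,\norm{\mathbf{B}}_F \le \norm{\mathbf{A}}\,\norm{\mathbf{B}}_F$, which is the claim.

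There is no genuine obstacle here: the statement is a routine norm inequality, and the only point requiring care is the bookkeeping, namely expanding the mixed $(\infty,2)$ norm row-wise on the left while expanding the Frobenius norm column-wise through Cauchy--Schwarz, which is exactly what the middle step arranges. (Symmetry of $\mathbf{A}$ is used only to identify its rows with its columns; for a general square $\mathbf{A}$ one could instead appeal to $\norm{\mathbf{A}^\top}=\norm{\mathbf{A}}$.)
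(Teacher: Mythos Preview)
Your proof is correct and follows essentially the same two-step strategy as the paper: first bound $\norm{\mathbf{A}\mathbf{B}}_{\infty,2}$ by $\norm{\mathbf{A}}_{\infty,2}\norm{\mathbf{B}}_F$ via Cauchy--Schwarz on each row, then bound $\norm{\mathbf{A}}_{\infty,2}$ by $\norm{\mathbf{A}}$ using the symmetry of $\mathbf{A}$. The only cosmetic difference is in the second step: the paper writes $\norm{\mathbf{A}}_{\infty,2}^2=\max_i(\mathbf{A}\mathbf{A})_{ii}\le\norm{\mathbf{A}^2}=\norm{\mathbf{A}}^2$, whereas you argue directly that each row $r_i=\mathbf{A}e_i$ satisfies $\norm{r_i}_2\le\norm{\mathbf{A}}$, which is the same fact stated more succinctly.
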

\begin{proof}
    Write $\mathbf{A}=(a_{ij})_{d\times d},\mathbf{B}=(b_{ij})_{d\times d}$, then by definition
    \begin{align*}
        \norm{\mathbf{A}\mathbf{B}}_{\infty,2}^2 &= \max_{i=1:d} \sum_{j=1}^d \left(\sum_{k=1}^d a_{ik}b_{kj}\right)^2 \\
        &\leq \max_{i=1:d} \sum_{j=1}^d \left(\sum_{k=1}^d a_{ik}^2\right)\left(\sum_{k=1}^d b_{kj}^2\right) \\
        &\leq \left(\max_{i=1:d}\sum_{k=1}^d a_{ik}^2\right) \left(\sum_{j=1}^d\sum_{k=1}^d b_{kj}^2\right) \\
        &= \norm{\mathbf{A}}_{\infty,2}^2\norm{\mathbf{B}}_F^2\;.
    \end{align*}
    Since $\mathbf{A}$ is positive semidefinite, we have
    \begin{align*}
        \norm{\mathbf{A}}_{\infty,2}^2&=\max_{i=1:d} \left(\mathbf{A}\mathbf{A}\right)_{ii} \leq \norm{\mathbf{A}^2}=\norm{\mathbf{A}}^2\;.
    \end{align*}
    Hence we conclude the desired result.
\end{proof}

\begin{lemma}
    Let $\delta = \min_{\xi \in \M}\min_{j=1:p}\norm{\nabla f_j(\xi)}$, then
    $\norm{(\xb_{iS}^\top\xb_{iS})^{-1}}_2 \leq 1+\frac{\tilde{\nu}_S}{\delta^2}$
    \label{lem:xsinveigen}
\end{lemma}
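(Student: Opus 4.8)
The plan is to reduce the bound to a single conjugation identity together with the definition of $\tilde{\nu}_S$. First I would relate the unnormalized Gram matrix to its column-normalized version: since the $j$-th column of $\xb_{iS}$ is $\norm{\nabla f_j(\xi_i)}$ times the $j$-th column of $\tilde{\xb}_{iS}$, we have $\xb_{iS} = \tilde{\xb}_{iS}\,\mathbf{G}_S(\xi_i)$. Because $\mathbf{G}_S(\xi_i)$ is diagonal, hence symmetric, and invertible (its entries are $\geq\delta>0$ by Assumption~\ref{as:F}), this gives $\xb_{iS}^\top\xb_{iS} = \mathbf{G}_S(\xi_i)\,\tilde{\xb}_{iS}^\top\tilde{\xb}_{iS}\,\mathbf{G}_S(\xi_i)$, and therefore
\[
(\xb_{iS}^\top\xb_{iS})^{-1} \;=\; \mathbf{G}_S(\xi_i)^{-1}\,\big(\tilde{\xb}_{iS}^\top\tilde{\xb}_{iS}\big)^{-1}\,\mathbf{G}_S(\xi_i)^{-1}.
\]
(Here $\tilde{\xb}_{iS}^\top\tilde{\xb}_{iS}$ is invertible under the running hypothesis $\rank\xb_{iS}=d$, which is exactly the regime in which this lemma is invoked.)

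The key step is to add and subtract $\mathbf{G}_S(\xi_i)^2$ inside the middle factor, writing $\big(\tilde{\xb}_{iS}^\top\tilde{\xb}_{iS}\big)^{-1} = \big[\big(\tilde{\xb}_{iS}^\top\tilde{\xb}_{iS}\big)^{-1} - \mathbf{G}_S(\xi_i)^2\big] + \mathbf{G}_S(\xi_i)^2$, and observing that conjugating $\mathbf{G}_S(\xi_i)^2$ by $\mathbf{G}_S(\xi_i)^{-1}$ collapses to the identity, $\mathbf{G}_S(\xi_i)^{-1}\mathbf{G}_S(\xi_i)^2\mathbf{G}_S(\xi_i)^{-1} = \mathbf{I}_d$. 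Hence
\[
(\xb_{iS}^\top\xb_{iS})^{-1} \;=\; \mathbf{I}_d \;+\; \mathbf{G}_S(\xi_i)^{-1}\big[\big(\tilde{\xb}_{iS}^\top\tilde{\xb}_{iS}\big)^{-1} - \mathbf{G}_S(\xi_i)^2\big]\mathbf{G}_S(\xi_i)^{-1}.
\]

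Finally I would take spectral norms and apply the triangle inequality and submultiplicativity: $\norm{(\xb_{iS}^\top\xb_{iS})^{-1}}_2 \leq 1 + \norm{\mathbf{G}_S(\xi_i)^{-1}}_2^2\,\big\|\big(\tilde{\xb}_{iS}^\top\tilde{\xb}_{iS}\big)^{-1} - \mathbf{G}_S(\xi_i)^2\big\|$. Since $\mathbf{G}_S(\xi_i)^{-1}$ is diagonal with entries $1/\norm{\nabla f_j(\xi_i)} \leq 1/\delta$ for $j\in S$, we get $\norm{\mathbf{G}_S(\xi_i)^{-1}}_2 \leq 1/\delta$, while the remaining factor is at most $\tilde{\nu}_S$ by the definition \eqref{eq:nu}. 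Combining these gives $\norm{(\xb_{iS}^\top\xb_{iS})^{-1}}_2 \leq 1 + \tilde{\nu}_S/\delta^2$.

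There is no real obstacle here: the only thing to get right is the bookkeeping of normalized versus unnormalized columns and the observation $\mathbf{G}^{-1}\mathbf{G}^2\mathbf{G}^{-1}=\mathbf{I}$, which is precisely why the internal-colinearity parameter in \eqref{eq:nu} (and \eqref{eq:nu-global}) is defined by subtracting $\mathbf{G}_S(\xi_i)^2$ rather than the identity. The one point worth a sentence in the write-up is invertibility of $\tilde{\xb}_{iS}^\top\tilde{\xb}_{iS}$, which is guaranteed whenever $\rank\xb_{iS}=d$ — the only setting in which the lemma is used — and is finite under the incoherence condition $\mu_S<1/(d-1)$ via Gershgorin, as already noted after \eqref{eq:nu-global}.
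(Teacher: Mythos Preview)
Your proposal is correct and follows essentially the same approach as the paper: both arguments use the conjugation identity $(\xb_{iS}^\top\xb_{iS})^{-1}=\mathbf{G}_S(\xi_i)^{-1}(\tilde{\xb}_{iS}^\top\tilde{\xb}_{iS})^{-1}\mathbf{G}_S(\xi_i)^{-1}$, add and subtract $\mathbf{G}_S(\xi_i)^{2}$ so that one piece collapses to $\mathbf{I}_d$, and then bound the remainder by $\norm{\mathbf{G}_S^{-1}}^2\,\tilde{\nu}_S\le\tilde{\nu}_S/\delta^2$ before applying the triangle inequality. Your write-up is in fact a bit more explicit than the paper's about the factorization $\xb_{iS}=\tilde{\xb}_{iS}\mathbf{G}_S(\xi_i)$ and the invertibility hypothesis, which is fine.
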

\begin{proof}
    Recall that $\mathbf{G}_S(\xi_i)=\diag\{ \norm{\nabla f_j(\xi_i)}\}_{j,j'\in S}$
    We first consider that 
    \begin{align}
    &\norm{(\xb_{iS}^\top\xb_{iS})^{-1}-\mathbf{I}_d}_2\\
     =&\norm{\mathbf{G}_S^{-1}(\xi_i)(\tilde{\xb}_{iS}^\top\tilde{\xb}_{iS})^{-1}\mathbf{G}_S(\xi_i)^{-1}-\mathbf{G}_S^{-1}(\xi_i)\mathbf{G}_S(\xi_i)^{2}\mathbf{G}_S^{-1}(\xi_i)} \\
     \leq & \norm{(\tilde{\xb}_{iS}^\top\tilde{\xb}_{iS})^{-1}-\mathbf{G}_S(\xi_i)^{2}}\norm{\mathbf{G}_S^{-1}(\xi_i)}^2 \\
     \leq & \frac{\tilde{\nu}_S}{\delta^2}
    \end{align}
    And the desired results come from triangular inequality.
\end{proof}

\begin{lemma}
Let $\{\xi_i\}_{i=1}^n$ be fixed data points on $\M$. Let $\tilde{\delta} = \min_{\xi\in\M}\min_{j=1:p}\norm{\nabla f_j(\xi)}$ and $\Gamma = \max_{\xi\in\M}\max_{j=1:p}\norm{\nabla f_j(\xi)}$
Let $\tilde{\mu}_S,\tilde{\nu}_S,\tilde{\phi}_S$ defined from $\xb_{iS}$ according to \eqref{eq:mu},\eqref{eq:nu} and \eqref{eq:phi} respectively. Then Tangent Lasso problem \eqref{eq:obj} has a unique solution $\widehat{\bb}=[\widehat{\bb}_1,\widehat{\bb}_2,\cdots,\widehat{\bb}_n]\in \rrr^{p\times nd }$ with support $S(\widehat{\bb})$ included in the true support $S$ if $(1+\frac{\tilde{\nu}_S}{\delta^2})^2\tilde{\mu}_S\tilde{\phi}_S\Gamma d<1 $. Furthermore, if $\lambda_n(1+{\tilde{\nu}_S}/{\delta^2})^2< \tilde{b}_S\sqrt{n}/2$, then $S(\widehat{\bb})=S$.
\label{lem:recover-nonoise}
\end{lemma}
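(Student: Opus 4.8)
The plan is to run a primal--dual witness argument in the style of \citet{Wainwright:2009sharp}, adapted to the block structure of \eqref{eq:obj}, with Lemma \ref{lem:optimal} supplying the optimality and uniqueness certificates. The key starting observation is that since $|S|=d$ and $\rank \xb_{iS}=d$, each $\xb_{iS}\in\rrr^{d\times d}$ is invertible, so the squared-error part of \eqref{eq:obj} restricted to rows in $S$ is minimized with \emph{zero} residual, at a matrix $\bb^{*}$ whose $S$-block is $\xb_{iS}^{-1}$ (so $\mathbf W_{iS}=0$ in \eqref{eq:noise}). I would then form the \emph{restricted} estimator $\widehat\bb$ by minimizing $J_{\lambda_n}$ over matrices whose nonzero rows lie in $S$, extend it by zeros outside $S$, and set $\widehat W_i=\mathbf I_d-\xb_{iS}\widehat\bb_{iS}$. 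The first-order optimality conditions of this restricted problem provide subgradient rows $\widehat z_j$, $j\in S$, with $\norm{\widehat z_j}_2\le 1$ (equal to $\widehat\bb_{\cdot j}/\norm{\widehat\bb_{\cdot j}}_2$ when $\widehat\bb_{\cdot j}\neq 0$) satisfying $\xb_{iS}^\top\widehat W_i=\tfrac{\lambda_n}{\sqrt{nd}}\widehat Z_{iS}$ for every $i$, where $\widehat Z_{iS}\in\rrr^{d\times d}$ stacks the $i$-th blocks of $(\widehat z_j)_{j\in S}$. Inverting $\xb_{iS}^\top$ yields the clean identities $\widehat W_i=\tfrac{\lambda_n}{\sqrt{nd}}(\xb_{iS}^\top)^{-1}\widehat Z_{iS}$ and $\widehat\bb_{iS}=\bb_{iS}^{*}-\tfrac{\lambda_n}{\sqrt{nd}}(\xb_{iS}^\top\xb_{iS})^{-1}\widehat Z_{iS}$, together with the a priori bound $\sum_i\norm{\widehat Z_{iS}}_F^2=\sum_{j\in S}\norm{\widehat z_j}_2^2\le d$.

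Next I would verify strict dual feasibility at the coordinates $j\notin S$. For such $j$ the stationarity equation \eqref{eq:kktcondition} forces $\widehat z_j=\tfrac{\sqrt{nd}}{\lambda_n}\,(\widehat W_i^\top x_{ij})_{i=1}^n$, whose $i$-th block equals $\widehat Z_{iS}^\top(\xb_{iS}^\top\xb_{iS})^{-1}\xb_{iS}^\top x_{ij}$. I would bound it blockwise: the entries of $\xb_{iS}^\top x_{ij}$ are the cross-products $x_{ik}^\top x_{ij}$, $k\in S$, each bounded by $\tilde\mu_S\,\tilde\phi_S\,\Gamma$ in absolute value via the $S$-incoherence \eqref{eq:mu} and the gradient-norm bounds ($\tilde\phi_S$ for $k\in S$, $\Gamma$ for $j\notin S$); $\norm{(\xb_{iS}^\top\xb_{iS})^{-1}}_2\le 1+\tilde\nu_S/\delta^2$ by Lemma \ref{lem:xsinveigen}; and the $\widehat Z_{iS}$ factors are absorbed by Cauchy--Schwarz using $\sum_i\norm{\widehat Z_{iS}}_F^2\le d$. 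Collecting these gives a bound $\norm{\widehat z_j}_2\le (1+\tilde\nu_S/\delta^2)^2\,\tilde\mu_S\,\tilde\phi_S\,\Gamma\, d<1$ under the first hypothesis. Strict dual feasibility then lets Lemma \ref{lem:optimal}(b) conclude that every optimum of \eqref{eq:obj} has support inside $S(\widehat\bb)\subseteq S$, and since each $\xb_{iS}^\top\xb_{iS}$ is invertible, Lemma \ref{lem:optimal}(c) gives uniqueness; hence $\widehat\bb$ is the unique solution and $S(\widehat\bb)\subseteq S$.

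For the reverse inclusion I would lower-bound $\norm{\widehat\bb_{\cdot j}}_2$ for each $j\in S$. From $\widehat\bb_{iS}=\bb_{iS}^{*}-\tfrac{\lambda_n}{\sqrt{nd}}(\xb_{iS}^\top\xb_{iS})^{-1}\widehat Z_{iS}$, using Lemma \ref{lem:infinityl2norm} to pass from the matrix $(\xb_{iS}^\top\xb_{iS})^{-1}\widehat Z_{iS}$ to its $j$-th row, Lemma \ref{lem:xsinveigen} again, and once more $\sum_i\norm{\widehat Z_{iS}}_F^2\le d$, one obtains $\norm{\widehat\bb_{\cdot j}-\bb_{\cdot j}^{*}}_2\le \tfrac{\lambda_n}{\sqrt n}(1+\tilde\nu_S/\delta^2)$. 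Since $\tilde b_S=\min_i\min_{j\in S}\norm{\bb_{i,j\cdot}^{*}}_2$ forces $\norm{\bb_{\cdot j}^{*}}_2\ge \sqrt n\,\tilde b_S$, the triangle inequality gives $\norm{\widehat\bb_{\cdot j}}_2\ge \sqrt n\,\tilde b_S-\tfrac{\lambda_n}{\sqrt n}(1+\tilde\nu_S/\delta^2)>0$ whenever $\lambda_n(1+\tilde\nu_S/\delta^2)^2<\tilde b_S\sqrt n/2$ (a comfortably sufficient form, which also leaves slack for the perturbed design $\widehat\xb_i$ when this lemma is reused inside Proposition \ref{prop:recovery}). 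Hence no row of $\widehat\bb$ indexed by $S$ vanishes, i.e.\ $S(\widehat\bb)=S$.

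I expect the main obstacle to be the bookkeeping in the second step. Unlike the textbook Lasso witness, here the ``response'' is the matrix $\mathbf I_d$, the design $\xb_i$ changes from point to point, the groups run across all $n$ points, and --- because of the normalization in Section \ref{sec:other} --- columns are not unit-norm, so the incoherence $\tilde\mu_S$, the internal colinearity $\tilde\nu_S$, and the gradient scales $\delta,\tilde\phi_S,\Gamma$ all have to be threaded through while keeping the powers of $(1+\tilde\nu_S/\delta^2)$ under control. A secondary point of care is to avoid circularity: $\widehat Z_{iS}$ is built from $\widehat\bb$ itself, but only the a priori estimate $\sum_i\norm{\widehat Z_{iS}}_F^2\le d$ is ever used, which holds unconditionally (each $\widehat z_j$ has norm at most one), so all the bounds remain valid even when some candidate rows of the restricted solution happen to vanish.
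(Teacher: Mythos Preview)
Your proposal is correct and follows essentially the same primal--dual witness argument as the paper: solve the restricted problem on $S$, derive the identity $\widehat\bb_{iS}-\bb_{iS}^*=-\tfrac{\lambda_n}{\sqrt{nd}}(\xb_{iS}^\top\xb_{iS})^{-1}\widehat Z_{iS}$, verify strict dual feasibility for $j\notin S$ via the incoherence bound and Lemma~\ref{lem:xsinveigen}, and then check no row $j\in S$ vanishes via Lemma~\ref{lem:infinityl2norm}. The only cosmetic difference is that you aggregate across $i$ using $\sum_i\norm{\widehat Z_{iS}}_F^2\le d$ throughout, whereas the paper also uses the per-point bound $\norm{\widehat Z_{S,i}}_F\le\sqrt d$ in the exclusion step; both are valid and lead to the same conclusion under the stated hypotheses.
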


\begin{proof}
 We follow the procedure of Primal-Dual witness method (see e.g.\citet{Wainwright:2009sharp}, \citet{GO2011}, \citet{Elyaderani2017-ce}). 
 
Still considering the reformulated optimization problem \eqref{eq: rewrite}, we first find $\widehat{\bb}$ from minimizing a restricted optimization problem
\begin{equation}
    \min_{S(\bb)\subset S} J_{\lambda_n}(\bb) = \frac{1}{2}\sum_{i=1}^n \norm{\eb_i^\top(\mathbf{I}_{nd}-\xb\bb)\eb_i}_F^2+\frac{\lambda_n}{\sqrt{nd}}\norm{\bb}_{1,2}.
\end{equation}

We then construct a dual solution $\widehat{\mathbf{Z}}$ and show that $\widehat{\bb}$ is the solution to the original optimization problem.
We write $z_j$ as the $j-$th row of $\widehat{\mathbf{Z}}$ and decompose each $\widehat{z}_j = [\widehat{z}_{j,1},\widehat{z}_{j,2},\cdots,\widehat{z}_{j,n}]$. According to lemma \ref{lem:optimal}, we can solve for $\widehat{\mathbf{Z}}$ from those optimality conditions.

First, notice that
\begin{equation}
    \widehat{\bb}_{iS}-\bb_{iS}^*=-\frac{\lambda_n}{\sqrt{nd}}(\xb_{iS}^\top\xb_{iS})^{-1}\widehat{\mathbf{Z}}_{S,i}\;.
    \label{eq:decompositionS}
\end{equation}
where $\widehat{\mathbf{Z}}_S$ is constructed by concatenating the $j\in S$ row of $\widehat{\mathbf{Z}}_i$.

 For an $d\times d$ matrix $\mathbf{A}$, we write $\norm{\mathbf{A}}_{\infty,2}= \max_{i=1}^d \norm{a_{i}}_2$, where $a_i$ is the $i-$th row of $A$. Then it holds that from lemma \ref{lem:infinityl2norm} 
 \begin{equation}
     \norm{(\xb_{iS}^\top\xb_{iS})^{-1}\widehat{\mathbf{Z}}_{S,i}}_{\infty,2} \leq \norm{(\xb_{iS}^\top\xb_{iS})^{-1}}\norm{\widehat{\mathbf{Z}}_{S,i}}_{F}.
 \end{equation}
 
Therefore recall that $\norm{\widehat{\mathbf{Z}}_S}_{\infty,2} = 1$ we conclude that $ \norm{\widehat{\mathbf{Z}}_{S,i}}_{F} \leq \sqrt{d}$. And adopting lemma \ref{lem:xsinveigen} we have 
\begin{equation}
    \norm{(\xb_{iS}^\top\xb_{iS})^{-1}\widehat{\mathbf{Z}}_{S,i}}_{\infty,2} \leq \sqrt{d}(1+\frac{\tilde{\nu}_S}{\delta^2})^2
\end{equation}

According to \eqref{eq:decompositionS} and the assumption, ${\lambda_n}\sqrt{d}(1+\frac{\tilde{\nu}_S}{\delta^2})^2/\sqrt{nd}< \frac{1}{2}\tilde{b}_S$, then $\norm{\widehat{\bb}_{iS,j\cdot}}\geq \frac{1}{2}\tilde{b}_S$ for each row $j \in S$.

On the other hand, for any $j'\notin S$,we have 
\begin{equation}
    \widehat{z}_{j',i}=x_{ij'}^\top\xb_{iS}(\xb_{iS}^\top\xb_{iS})^{-1}\widehat{\mathbf{Z}}_{S,i} .
\end{equation}
It suffices to verify that $\norm{\widehat{z}_j}<1$ for all $j' \notin S$. For any $i$, we have
\begin{equation}
    \norm{x_{ij'}^\top\xb_{iS}(\xb_{iS}^\top\xb_{iS})^{-1}}_2\leq (1+\frac{\tilde{\nu}_S}{\delta^2})^2 \norm{x_{ij'}^\top\xb_{iS}}_2 \leq \sqrt{d}(1+\frac{\tilde{\nu}_S}{\delta^2})^2\tilde{\mu}_S \norm{\nabla f_{j'}(\xi_i)}\max_{j\in S}\norm{\nabla f_j(\xi_i)}\leq \sqrt{d}(1+\frac{\tilde{\nu}_S}{\delta^2})^2\tilde{\mu}_S\tilde{\phi}_S\Gamma
\end{equation}

Directly compute that
\begin{align*}
    \norm{\widehat{z}_{j'}}^2 &\leq \sum_{i=1}^n \norm{x_{ij'}^\top\xb_{iS}(\xb_{iS}^\top\xb_{iS})^{-1}\widehat{\mathbf{Z}}_{S,i}}_2^2 \\
    &\leq \sum_{i=1}^n \norm{x_{ij'}^\top\xb_{iS}(\xb_{iS}^\top\xb_{iS})^{-1}}_2^2\norm{\widehat{\mathbf{Z}}_{S,i}}_F^2 \\
    &\leq d(1+\frac{\tilde{\nu}_S}{\delta^2})^4\tilde{\mu}_S^2\tilde{\phi}_S^2\Gamma^2 \sum_{i=1}^n\norm{\widehat{\mathbf{Z}}_{S,i}}_F^2 \\
    &\leq (1+\frac{\tilde{\nu}_S}{\delta^2})^4\tilde{\mu}_S^2\tilde{\phi}_S^2\Gamma^2d^2 < 1
\end{align*}
\end{proof}

This lemma is the recovery result if the tangent space is estimated without any noise. Note that this conditions also implies further results on the 'isometric' property of TSLasso. If there are two different subsets $S, S'$ such that $|S|=|S'|=d$ and both has rank $d$ at each data point. Then for both subsets, $\xb_{iS}^\top\xb_{iS}$ are invertible, and the lemma also implies that $\tilde{\mu}_S\tilde{\nu}_Sd<1$ cannot hold at the same time for both subsets. The one picked by TSLasso (usually) has a lower value of $\tilde{\nu}_S$, and will be closer to isometry to some extent.

This recovery result does not involve the tuning parameter for false inclusion. Therefore, it justifies our selection of tuning parameter that force the support has cardinality less than $d$. If we do observe $d$ functions selected and they have rank $d$ everywhere, then under incoherence condition they must be a right parameterization. To avoid false exclusion, the tuning parameter ${\lambda_n}$ cannot be too large. 

Now we connect these support recovery results inherent to our optimization approach with the tangent space estimation algorithm. Let $\mathbf{T}_i,\widehat{\mathbf{T}}_i$ be the orthogonal basis in $\rrr^{D\times d}$ for true and estimated tangent space respectively, and write
\begin{equation}
    e = \max_{i=1}^n \norm{\mathbf{T}_i\mathbf{T}_i^\top-\widehat{\mathbf{T}}_i\widehat{\mathbf{T}}_i^\top}_{2}.
    \label{eq:errtang}
\end{equation}
We have the following recovery result in the setting that gradient is estimated with some noise. 

\begin{lemma}
Let $\xi_i,i=1:n$ be fixed data points on manifold $\M\subset \rrr^D$.
Given $S$ a subset of functions in dictionary $\mathcal{F}=\{f_j,j\in[p]\}$ with $|S|=d$. Suppose $\rank \grad f_S = d$ at each data point. Fix $\mathbf{T}_i$ as an orthonormal basis of tangent space at $\xi_i$, and $\widehat{\mathbf{T}}_i$ a basis for the estimated tangent space. And further define $\xb_i = \mathbf{T}_i^\top [\nabla f_j],\widehat{\xb}_i = \widehat{\mathbf{T}}_i^\top[\nabla f_j],j\in [p]$ where $\nabla$ is the ambient gradient. Define $\bb_{iS}^*,\tilde{b}_S$ the same as lemma \ref{lem:recover-nonoise}. Assume that $\norm{\nabla f_j}=1$ for all $\xi_i,i\in[n],j\in[p]$. Define $\tilde{\mu}_S,\tilde{\nu}_S$ from  \eqref{eq:mu} and \eqref{eq:nu} and $e$ from \eqref{eq:errtang}.  Then let $\widehat{\bb}$ be the solution of TSLasso problem
\begin{equation}
    J_{\lambda_n}(\bb) = \frac{1}{2}\sum_{i=1}^n\norm{\mathbf{I}_d-\widehat{\xb}_i\bb_i}_F^2+\frac{{\lambda_n}}{\sqrt{nd}}\norm{\bb}_{1,2}\; ,
\end{equation}
If $(1+{\tilde{\nu}_S}/{\delta^2})^2\tilde{\mu}_S\tilde{\phi}_S\Gamma d<1 $ and $\lambda_n(1+{\tilde{\nu}_S}/{\delta^2})^2< \tilde{b}_S\sqrt{n}/2$, there exists a positive constant $c_0$  such that if $e<c_0$ then $S(\widehat{\bb})=S$.
\label{thm: fixed}
\end{lemma}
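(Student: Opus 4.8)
The plan is to reduce Lemma~\ref{thm: fixed} to the noiseless recovery result, Lemma~\ref{lem:recover-nonoise}, by showing that the perturbed design matrices $\widehat{\xb}_i=\widehat{\mathbf{T}}_i^\top[\nabla f_j(\xi_i)]_{j\in[p]}$ satisfy the hypotheses of that lemma once $e$ is small. First I would use Proposition~2: the selected support is unchanged if each $\widehat{\mathbf{T}}_i$ is replaced by $\widehat{\mathbf{T}}_i\mathbf{\Gamma}_i$ for an arbitrary orthogonal $\mathbf{\Gamma}_i$, so I may take $\mathbf{\Gamma}_i$ to be the orthogonal Procrustes alignment of $\widehat{\mathbf{T}}_i$ to $\mathbf{T}_i$. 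The standard subspace perturbation bound $\min_{\mathbf{\Gamma}\in O(d)}\norm{\mathbf{T}_i-\widehat{\mathbf{T}}_i\mathbf{\Gamma}}_2\le c\,\norm{\mathbf{T}_i\mathbf{T}_i^\top-\widehat{\mathbf{T}}_i\widehat{\mathbf{T}}_i^\top}_2\le c\,e$ (absolute constant $c$) then lets me assume $\norm{\mathbf{T}_i-\widehat{\mathbf{T}}_i}_2\le c\,e$ for every $i$. Since $\norm{\nabla f_j(\xi_i)}=1$, each column of $\widehat{\xb}_i-\xb_i$ is $(\widehat{\mathbf{T}}_i-\mathbf{T}_i)^\top\nabla f_j(\xi_i)$, so $\norm{\widehat{\xb}_{i,\cdot j}-\xb_{i,\cdot j}}_2\le c\,e$ and $\norm{\widehat{\xb}_{iS}-\xb_{iS}}_2\le\sqrt d\,c\,e$ uniformly over the finite index set $i\in[n]$. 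This $O(e)$ alignment is the crucial point, since without Proposition~2 the naive bound on $\norm{\widehat{\xb}_i-\xb_i}$ would be $O(1)$.

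Next I would propagate this perturbation to every quantity entering Lemma~\ref{lem:recover-nonoise}. By Lemma~\ref{lem:xsinveigen} (with $\delta=1$ here) we have $\sigma_{\min}(\xb_{iS})\ge(1+\tilde\nu_S)^{-1/2}>0$ uniformly in $i$; hence for $e$ below a threshold depending on $\tilde\nu_S,d$ the matrices $\widehat{\xb}_{iS}$ are invertible, the least-squares residuals in $\mathbf{I}_d=\widehat{\xb}_{iS}\widehat{\bb}_{iS}^*+\widehat{\mathbf{W}}_{iS}$ vanish because $\widehat{\xb}_{iS}$ is square and full rank, and $\widehat{\bb}_{iS}^*=\widehat{\xb}_{iS}^{-1}$ (extended by zeros) is well defined. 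On this region I would show the derived quantities are Lipschitz in $e$: the incoherence formula \eqref{eq:mu} gives $|\widehat{\mu}_S-\tilde\mu_S|\le C_1 e$; Lemma~\ref{lem:psdinvperturbation} applied to $\widehat{\xb}_{iS}^\top\widehat{\xb}_{iS}$ versus $\xb_{iS}^\top\xb_{iS}$ gives $|\widehat{\nu}_S-\tilde\nu_S|\le C_2 e$; and a resolvent/perturbation bound for $\widehat{\xb}_{iS}\mapsto\widehat{\xb}_{iS}^{-1}$ gives $|\widehat{b}_S-\tilde b_S|\le C_3 e$, with $C_1,C_2,C_3$ depending only on $d,\tilde\nu_S$ and the uniform lower bound on $\sigma_{\min}(\xb_{iS})$. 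The parameters $\delta,\Gamma,\widehat{\phi}_S$ depend only on the ambient gradients and are unchanged (all equal to $1$ under the unit-norm assumption).

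Finally, the two hypotheses $(1+\tilde\nu_S/\delta^2)^2\tilde\mu_S\tilde\phi_S\Gamma d<1$ and $\lambda_n(1+\tilde\nu_S/\delta^2)^2<\tilde b_S\sqrt n/2$ are \emph{strict} inequalities, and by the previous paragraph the corresponding hatted quantities converge to the tilde quantities as $e\to0$. Therefore I can choose $c_0>0$ — depending on the gaps in these two inequalities, on $d$, $p$, $\tilde\nu_S$, and on $c,C_1,C_2,C_3$ — small enough that $e<c_0$ forces $\widehat{\xb}_{iS}$ invertible for all $i$ together with $(1+\widehat{\nu}_S/\delta^2)^2\widehat{\mu}_S\widehat{\phi}_S\Gamma d<1$ and $\lambda_n(1+\widehat{\nu}_S/\delta^2)^2<\widehat{b}_S\sqrt n/2$. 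Applying Lemma~\ref{lem:recover-nonoise} to the collection $\{\widehat{\xb}_i\}_{i=1}^n$ then gives a unique minimizer $\widehat{\bb}$ of the stated objective with $S(\widehat{\bb})=S$. The main obstacle is the quantitative continuity of $\widehat{\nu}_S$ and $\widehat{b}_S$ in the second step: both pass through matrix inverses, so the argument hinges on having the uniform-over-$n$ lower bound on $\sigma_{\min}(\xb_{iS})$ from Lemma~\ref{lem:xsinveigen} and on the $O(e)$ basis alignment from Proposition~2; everything else is routine perturbation estimates.
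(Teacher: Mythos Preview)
Your argument is correct and reduces the noisy problem to Lemma~\ref{lem:recover-nonoise} by a continuity argument, which is also what the paper does. The difference is in how the perturbation is controlled. You invoke Proposition~2 to Procrustes-align $\widehat{\mathbf T}_i$ to $\mathbf T_i$ and then bound $\norm{\widehat{\xb}_{i,\cdot j}-\xb_{i,\cdot j}}_2$ columnwise; the paper instead bypasses alignment altogether by observing that $\tilde\mu_S$ and $\tilde\nu_S$ depend on the columns of $\xb_i$ only through inner products $x_{ij}^\top x_{ij'}=\nabla f_j(\xi_i)^\top \mathbf T_i\mathbf T_i^\top\nabla f_{j'}(\xi_i)$, i.e.\ only through the projector $\mathbf T_i\mathbf T_i^\top$. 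Since $e$ in \eqref{eq:errtang} is precisely the operator-norm error of these projectors, the paper gets the entrywise bounds $\tilde\mu_S'\le\tilde\mu_S+e$ and $\norm{\tilde{\widehat{\xb}}_{iS}^\top\tilde{\widehat{\xb}}_{iS}-\tilde{\xb}_{iS}^\top\tilde{\xb}_{iS}}_F\le d e$ directly, then feeds the latter into Lemma~\ref{lem:psdinvperturbation} to control $\tilde\nu_S'$. This is shorter because it never needs Proposition~2 or the Procrustes step. On the other hand, your route is more explicit about the perturbation of $\widehat b_S$, which the paper leaves essentially implicit (it writes the second inequality with $\tilde b_S$ rather than the hatted version); your observation that $\widehat{\bb}_{iS}^*=\widehat{\xb}_{iS}^{-1}$ and a resolvent bound handle this cleanly. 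Either approach closes the lemma; the paper's is the more economical one to write up, since the rotation-invariance you exploit via Proposition~2 is already built into the scalar quantities $\tilde\mu_S,\tilde\nu_S$.
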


\begin{proof}
The proof is direct by identifying the new $\tilde{\mu}_S',\tilde{\nu}_S'$ parameters under noisy estimation of tangent space. The other parameters $\tilde{\phi}_S,\Gamma,\delta$ are not related with tangent spaces and thus remains unchanged.

Denote $\widehat{x}_ij$ the $j-$th column of $\widehat{\xb}_i$. Similarly, to \eqref{eq:mu}, we first bound
\begin{align*}
    \widehat{x}_{ij}^\top\widehat{x}_{ij'} &= \nabla f_j(\xi_i)^\top [\widehat{\mathbf{T}}_i\widehat{\mathbf{T}}_i^\top-\mathbf{T}_i\mathbf{T}_i^\top] \nabla f_j(\xi_i)+ \nabla f_j(\xi_i)^\top \mathbf{T}_i\mathbf{T}_i^\top \nabla f_j(\xi_i) \\
    &\leq \norm{\widehat{\mathbf{T}}_i\widehat{\mathbf{T}}_i^\top-\mathbf{T}_i\mathbf{T}_i^\top}_2 \norm{\nabla f_j(\xi_i)}\norm{\nabla f_{j'}(\xi_i)} + \tilde{\mu}_S  \norm{\nabla f_j(\xi_i)}\norm{\nabla f_{j'}(\xi_i)},\quad \text{for all \ } j\in S, j'\notin S, i\in[n]
\end{align*}
So $\tilde{\mu}_S' \leq  \tilde{\mu}_S + e$.

By definition, let $$\tilde{\widehat{\xb}}_{iS} = \left[ \frac{\widehat{\mathbf{T}}_i^\top\nabla f_j(\xi_i)}{\norm{\nabla f_j(\xi_i)}}\right]_{j\in S} = \widehat{\xb}_{iS}\mathbf{G}(\xi_i)^{-1}\, $$ where $\mathbf{G}(\xi_i) = \diag\{\norm{\nabla f_j(\xi_i)}\}_{j\in S}$  and then we have
\begin{align*}
    \tilde{\nu}_S'&=\norm{(\tilde{\widehat{\xb}}_{iS}^\top\tilde{\widehat{\xb}}_{iS})^{-1} - \mathbf{G}(\xi_i)^{-2}} \leq \tilde{\nu}_S + \norm{(\tilde{\widehat{\xb}}_{iS}^\top\tilde{\widehat{\xb}}_{iS})^{-1} - (\tilde{{\xb}}_{iS}^\top\tilde{{\xb}}_{iS})^{-1}}
\end{align*}
It suffices to upper bound the second term. We can apply lemma \ref{lem:psdinvperturbation}, the perturbation bound of inverse of positive definite matrices. It suffice to compute
\begin{align*}
    \norm{(\tilde{\xb}_{iS}^\top\tilde{\xb}_{iS})^{-1}} &\leq \norm{(\tilde{\xb}_{iS}^\top\tilde{\xb}_{iS})^{-1} - \mathbf{G}_S(\xi_i)^{2} + \mathbf{G}_S(\xi_i)^{2}} \leq \tilde{\phi}_S^2+\tilde{\nu}_S
\end{align*}
And since for any $j,j'\in S$, it holds that 
\begin{align*}
    |(\tilde{\widehat{\xb}}_{iS}^\top\tilde{\widehat{\xb}}_{iS})_{jj'} - (\tilde{{\xb}}_{iS}^\top\tilde{{\xb}}_{iS})_{jj'}| \leq \norm{\mathbf{T}_i\mathbf{T}_i^\top - \widehat{\mathbf{T}}_i\widehat{\mathbf{T}}_i^\top} \leq e
\end{align*}
\begin{align*}
    \norm{\tilde{\widehat{\xb}}_{iS}^\top\tilde{\widehat{\xb}}_{iS} - \tilde{{\xb}}_{iS}^\top\tilde{{\xb}}_{iS}} & \leq \norm{\tilde{\widehat{\xb}}_{iS}^\top\tilde{\widehat{\xb}}_{iS} - \tilde{{\xb}}_{iS}^\top\tilde{{\xb}}_{iS}}_F\leq de
\end{align*}

And thus we have
\begin{align*}
    \norm{(\tilde{\widehat{\xb}}_{iS}^\top\tilde{\widehat{\xb}}_{iS})^{-1} - (\tilde{{\xb}}_{iS}^\top\tilde{{\xb}}_{iS})^{-1}} \leq (\tilde{\phi}_S^2+\tilde{\nu}_S)^2de
\end{align*}
Hence $\tilde{\nu}_S' \leq \tilde{\nu}_S + (\tilde{\phi}_S^2+\tilde{\nu}_S)^2de$

For sufficiently small $e$, we will have $(1+\frac{\tilde{\nu}_S'}{\delta^2})^2\tilde{\mu}_S'\tilde{\phi}_S\Gamma d<1 $ and ${\lambda_n}(1+\frac{\tilde{\nu}_S'}{\delta^2})^2/\sqrt{n}< \frac{1}{2}\tilde{b}_S$ as these two inequality holds when $e=0$. Then lemma \ref{lem:recover-nonoise} guarantees exact recovery.
\end{proof}

\begin{proof}[Proof of Proposition \ref{prop:recovery}]
With probability one, the following comparisons between sample based quantities and whole manifold versions holds:
\begin{equation}
    \tilde{\mu}_S \leq \mu_S,\quad
    \tilde{\nu}_S \leq \nu_S,\quad
    \tilde{\phi}_S \leq \phi_S,\quad 
    \tilde{b}_S \geq b_S
\end{equation}
Let $c_1,c_2$ be the same as $\tilde{c}_1,\tilde{c}_2$ defined in the proof of theorem \ref{thm: fixed}, replacing all sample version quantities  $\mu_S,\nu_S,\phi_S,b_S$ with their  global manifold counterparts $\mu_S,\nu_S,\phi_S,b_S$. 

Then the assumptions of the proposition guarantees that there exists a $c_0$ such that whenever $e<c_0$, exact recovery holds. It suffices to notice that 
\begin{equation}
    P(S(\hat{\bb})=S) \leq P(e < c_0) \geq 1-4\left(\frac{1}{n}\right)^{\frac{2}{d}}
\end{equation}
given by lemma \ref{lem:minimax}.
\end{proof}

\begin{lemma}[Proposition in \citet{aamari2018}]
   For sufficiently large $C$, let $r_N = C(\log n/(n-1))^{{1}/{d}}$, tangent spaces $\hat{\mathbf{T}}_i$ estimated by WL-PCA in section \ref{sec:tangent} with linear kernel satisfy that with probability at least $1-4(1/n)^{2/d}$
   \begin{equation}
       \max_{i=1:n} \norm{\mathbf{T}_i\mathbf{T}_i^\top-\hat{\mathbf{T}}_i\hat{\mathbf{T}}_i^\top}= O(r_N)=O((\frac{\log n}{n-1})^{\frac{1}{d}})\;.
   \end{equation}
   \label{lem:minimax}
 \end{lemma}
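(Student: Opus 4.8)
The statement is, up to a change of notation, exactly the local‑PCA tangent‑space estimation bound proved for $C^2$ manifolds of positive reach by \citet{aamari2018}, so the plan is to reduce to that result after checking that its hypotheses are the ones we already assume and that our estimator is the one it analyzes. Concretely, I would first verify that Assumptions \ref{as:M}--\ref{as:D} supply the standing assumptions of \citet{aamari2018} (a compact manifold with reach $\tau>0$ and a density bounded in $[\pi_{\min},\pi_{\max}]$), and that \tppalg{} with the constant ("linear") kernel of the lemma is precisely unweighted local PCA in an ambient Euclidean ball of radius $r_N$. For the record I would then reproduce the three‑part argument of that paper at a single fixed index $i$ and close with a union bound.

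\textbf{Sampling.} Conditioning on $\xi_i$, the neighbour count $k_i=|\neigh_i|$ is a sum of $n-1$ indicators whose common success probability is of order $\pi(\xi_i)\,r_N^{d}$; since $\pi_{\min}\le\pi\le\pi_{\max}$, a multiplicative Chernoff bound gives $k_i\asymp n\,r_N^{d}\asymp C^{d}\log n$ on an event of probability $1-O(n^{-2/d-1})$. The role of the radius $r_N=C(\log n/(n-1))^{1/d}$ is exactly that it is the smallest choice for which $k_i\gtrsim\log n$, and taking $C$ large makes $\sqrt{\log n/k_i}$ a small constant.

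\textbf{Local second moment.} Using the reach bound, inside $B(\xi_i,r_N)\cap\M$ the manifold is a graph over $\T_{\xi_i}\M$ with normal deviation $O(r_N^{2}/\tau)$; hence the population centred local covariance $\Sigma_i$ has tangent‑block eigenvalues $\asymp r_N^{2}$, normal‑block eigenvalues $O(r_N^{4}/\tau^{2})$, a spectral gap $\asymp r_N^{2}$ between them, and a curvature‑controlled tangent--normal off‑diagonal block (the bias) that is of smaller order in $r_N$. The only part of $\widehat\Sigma_i-\Sigma_i$ that can tilt the top‑$d$ eigenspace is its tangent--normal block; because local displacements have normal component of size only $O(r_N^{2}/\tau)$ while their tangential component is $O(r_N)$, a matrix Bernstein inequality over the $k_i$ summands bounds this block by $(r_N^{3}/\tau)\sqrt{\log n/k_i}$ up to the usual additive term, on an event of probability $1-O(n^{-2/d-1})$.

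\textbf{Perturbation and union bound.} A Davis--Kahan $\sin\Theta$ estimate then divides (bias $+$ fluctuation) in the tangent--normal block by the gap $\asymp r_N^{2}$, yielding $\norm{\mathbf T_i\mathbf T_i^\top-\widehat{\mathbf T}_i\widehat{\mathbf T}_i^\top}=O(r_N/\tau)+O((r_N/\tau)\sqrt{\log n/k_i})=O(r_N)$ on the intersection of the two events above, and a union over $i\in[n]$ multiplies the failure probability by $n$, giving $O(n^{-2/d})$; tracking constants recovers the stated $1-4(1/n)^{2/d}$. The genuinely delicate step is the local second‑moment analysis: one must extract the curvature‑induced $O(r_N^{2}/\tau)$ shrinkage of the normal displacements so that the off‑diagonal perturbation carries the extra $r_N/\tau$ factor, and one must check that the $\Theta(r_N^{2})$ eigengap survives the fluctuation (which forces $C$ to be large). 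Everything else — the Chernoff bound, the matrix Bernstein bound, Davis--Kahan, the union bound — is routine once that scaling is in hand, and since the full argument appears in \citet{aamari2018} the paper simply invokes it.
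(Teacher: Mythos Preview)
Your proposal is correct and matches the paper's approach exactly: the paper does not prove this lemma at all but simply records it as a citation of the corresponding proposition in \citet{aamari2018}, with only a remark that the hidden constant depends on the manifold and the sampling density. Your sketch of the underlying Aamari--Levrard argument (Chernoff for neighbour counts, reach-based local graph representation giving the $r_N^2$ eigengap and $O(r_N^2/\tau)$ normal deviation, matrix Bernstein on the off-diagonal block, Davis--Kahan, union bound over $i$) is a faithful outline of that external result and goes beyond what the paper itself supplies.
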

\begin{remark}
     Note that in this lemma, the hidden constant in big-$O$ notation is determined by the manifold and sampling density.
\end{remark}
 
\section{Addtional Experimental Results}

We include some additional experimental results and information in this section. The settings of two experiments on synthetic data are shown in table \ref{tab:exps2}.

\begin{table*}[!h]
    \centering
    \begin{tabular}{l|c|c|c|c|c|c|c|c}
    \hline
    \hline
    Dataset       & $n$   & $N_a$ & $D$ & $d$ & $\epsilon_N$ & $n'$ & $p$ & $\omega$ \\
    \hline
    Swiss Roll & 10000 & NA &49 & 2 &.18 & 100 & 51 & 1 \\
    Rigid Ethanol & 10000 & 9 & 50 & 2 & 3.5 & 100 & 12 & 25\\
    \hline
    \hline
    \end{tabular}
    \caption{Parameters in different experiments.}
    \label{tab:exps2}
\end{table*}

\subsection{Results on Swiss Roll Data}
We begin our experimental study by demonstrating that \tsalg~ is invariant to the choice of embedding algorithm on the classic unpunctured SwissRoll dataset. This dataset consists of points sampled from a two dimensional rectangle and rolled up along one of the two axes aFigure \ref{fig:swiss-unrotated} shows the SwissRoll dataset in $\rrr^3$, then randomly rotated in $D = 49$ dimensions. 

The dictionary $\F$ consists of $g_{1,2}$, the two intrinsic coordinates, as well as $g_{j+2} = \xi_j$, for $j = 1, \cdots 49$, the coordinates of the feature space. Applying ManifoldLasso to the embeddings identifies the set $S = \{g_1,g_2\}$ as the manifold parametrization. This successful recovery of parametrizing functions is observed in each replicate. Figure \ref{fig:swiss-reg} shows the regularization path in one replicates.

\begin{figure}
    \centering
    \subfloat[]{\includegraphics[width = 0.45\textwidth]{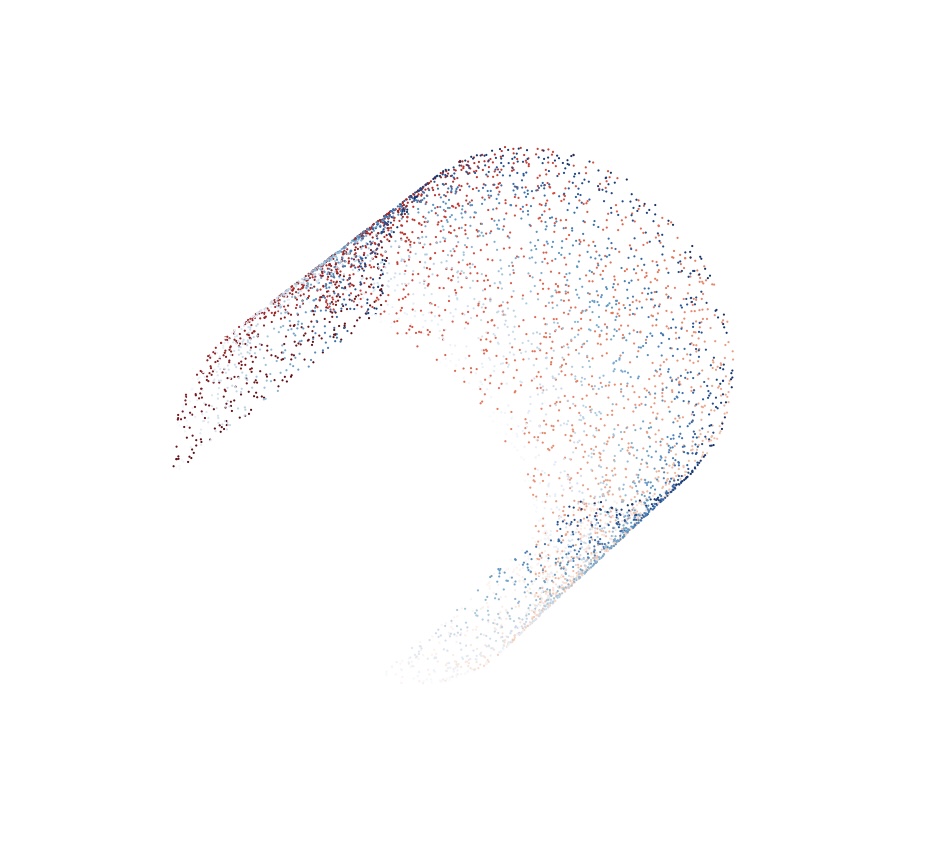}\label{fig:swiss-unrotated}}
    \subfloat[]{\includegraphics[width = 0.45\textwidth]{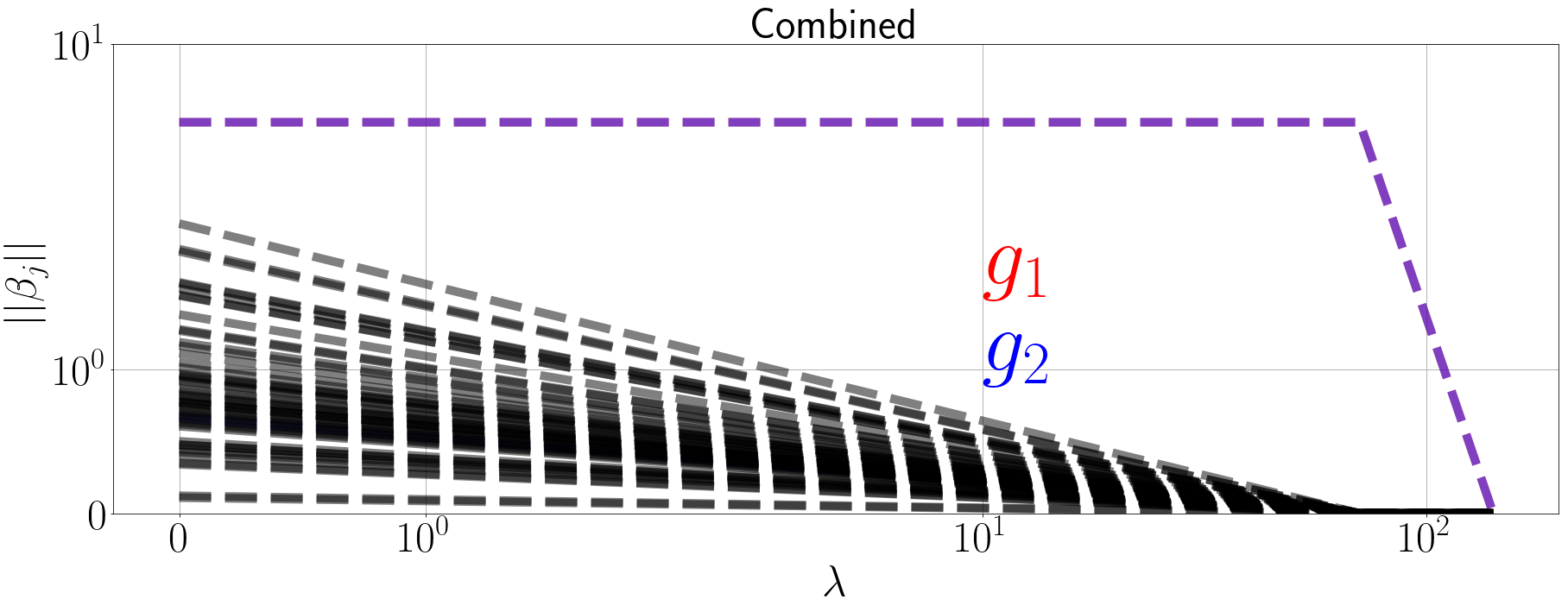}\label{fig:swiss-reg}}
    \caption{Swiss Roll data and result. \textbf{Left:} Unrotated swiss roll dataset in $\rrr^3$. This dataset is then randomly rotated into $\rrr^{49}$. \textbf{Right:} The regularization path of \tsalg~ on SwissRoll datatset in one replicate. Note that in fact there are two functions selected and their regularization path added together.}
\end{figure}

\subsection{Results on Rigid Ethanol Dataset}

We construct an ethanol skeleton composed of the atoms shown in Figure \ref{fig:ethanol-diagram}. We then sample configurations as we rotate the atoms around the C-C and C-O bonds. In contrast with the MD trajectories, which are simulated according to quantum dynamics, these two angles are distributed uniformly over a grid, and Gaussian noise ($N_D(0,\sigma^2I_D)$) is added to the position of each atom. We call the resultant dataset RigidEthanol. As expected given our two a priori known degrees of freedom, Figures \ref{fig:re-nonoise-pca}, \ref{fig:re-nonoise-embedding-g1} and \ref{fig:re-nonoise-embedding-g2} show that the estimated manifold is a two-dimensional surface with a torus topology similar to that observed for the MD Ethanol in Figure \ref{fig:ethanol-diffusion1}. In particular, it is parameterized by bond torsions $g_1$ and $g_2$. The dictionary contains the 12 torsions implicitly defined by the bond diagram, the same as the MDS real data experiment. The function pattern is also the same as the real ethanol dataset. 

\begin{figure}
    \centering
    \subfloat[]{\includegraphics[width = 0.3\textwidth]{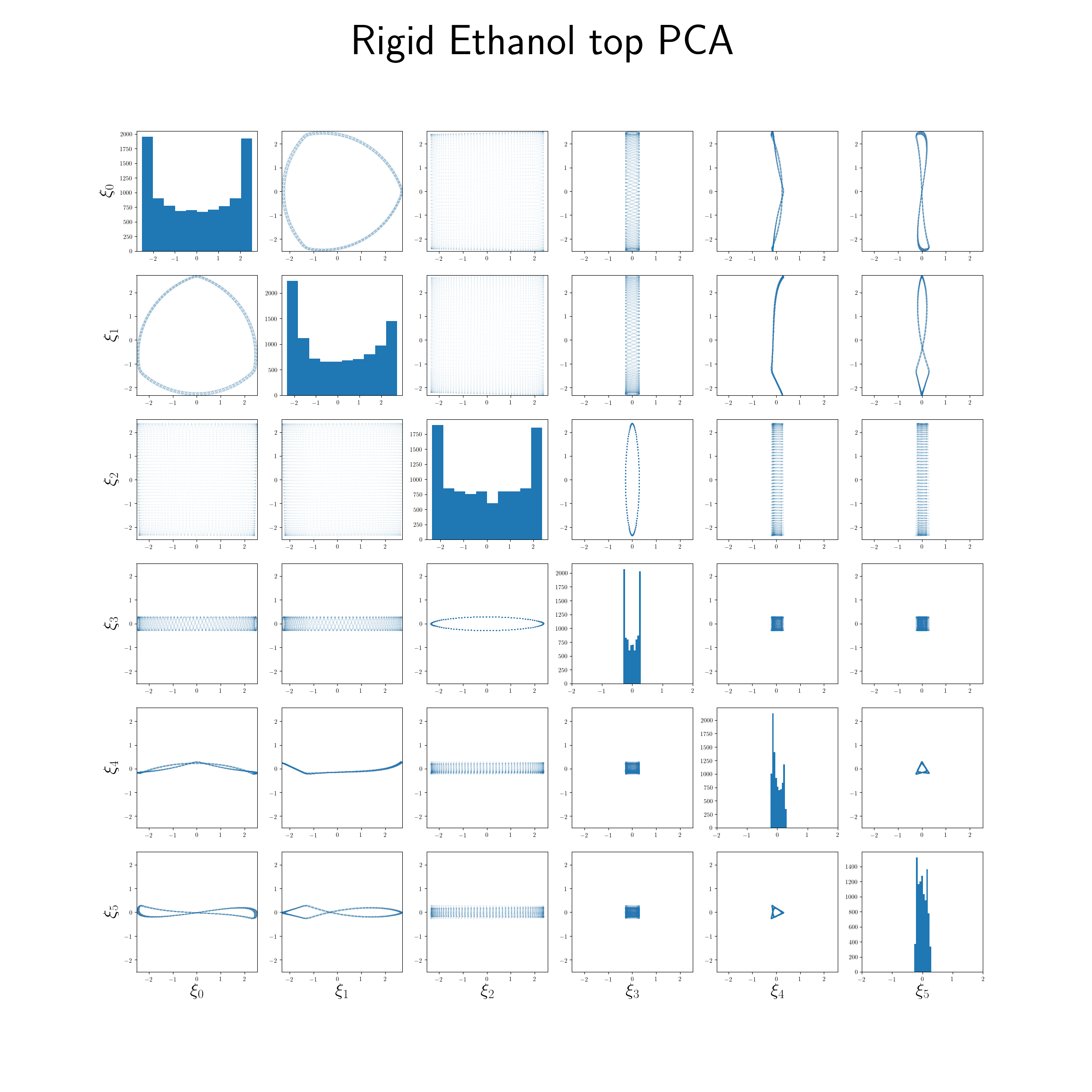}\label{fig:re-nonoise-pca}}
    \subfloat[]{\includegraphics[width = 0.3\textwidth]{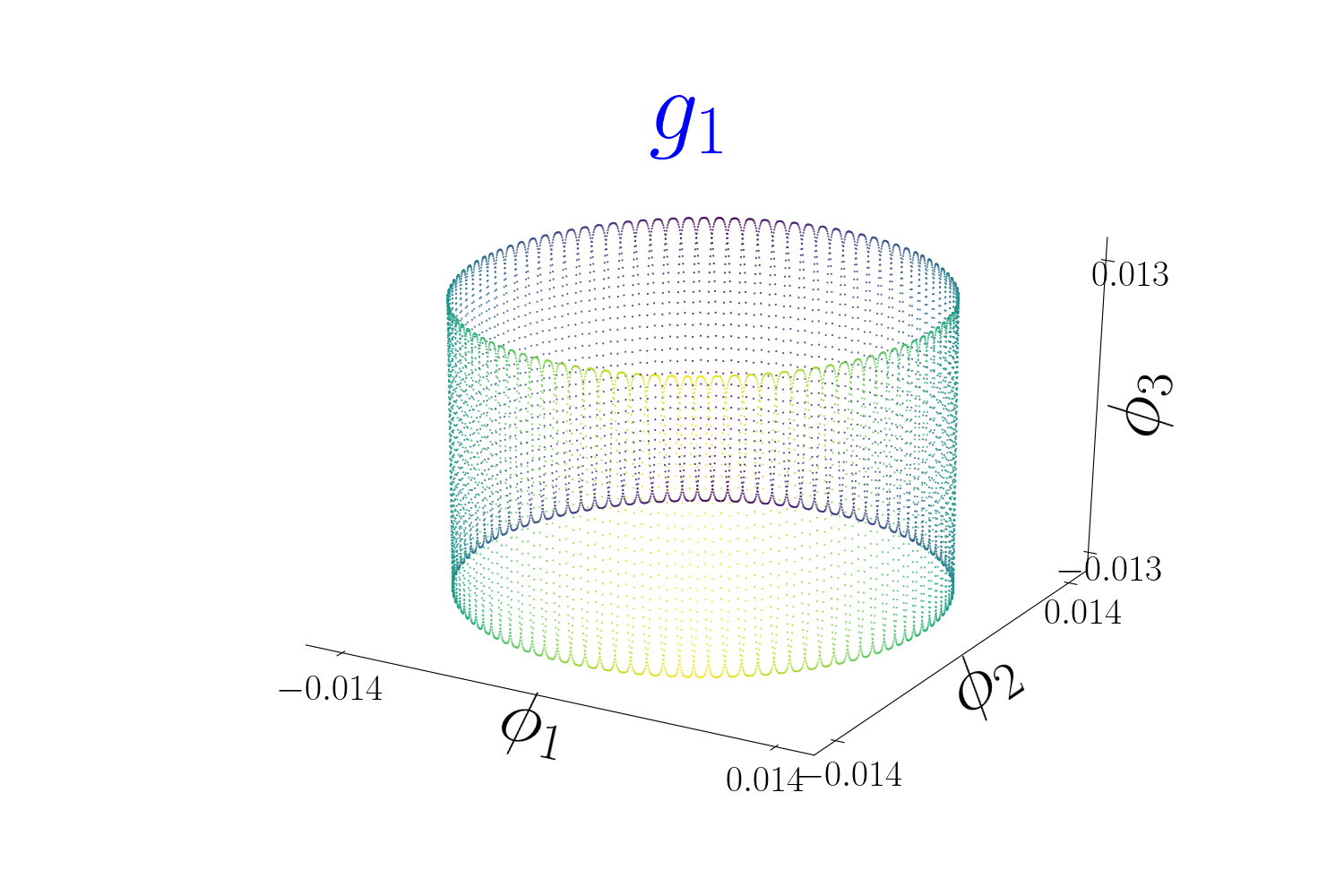}\label{fig:re-nonoise-embedding-g1}}
    \subfloat[]{\includegraphics[width = 0.3\textwidth]{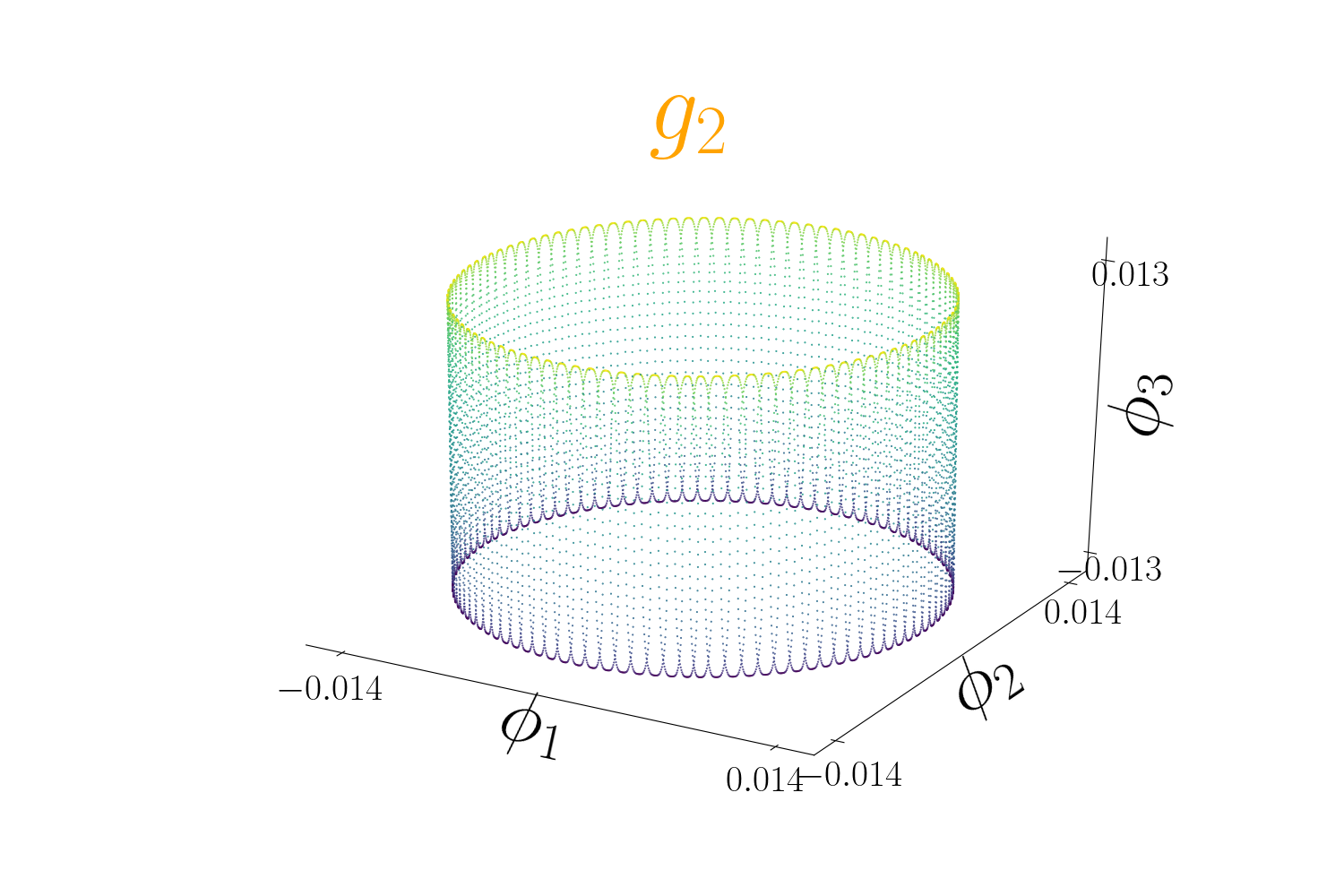}\label{fig:re-nonoise-embedding-g2}}
    \caption{PCA features and Diffusion Map embedding features of rigid ethanol data without noise.}
\end{figure}

Figure \ref{fig:re-nonoise-cosine}-\ref{fig:re-nonoise-watch} show the result of experiments on regid ethanol without any noise. We can tell from the result that over all 25 replicates, \tsalg~ successfully recover the true support, one function from each colinear group.

\begin{figure}
    \centering
    \subfloat[]{\includegraphics[width = 0.3\textwidth]{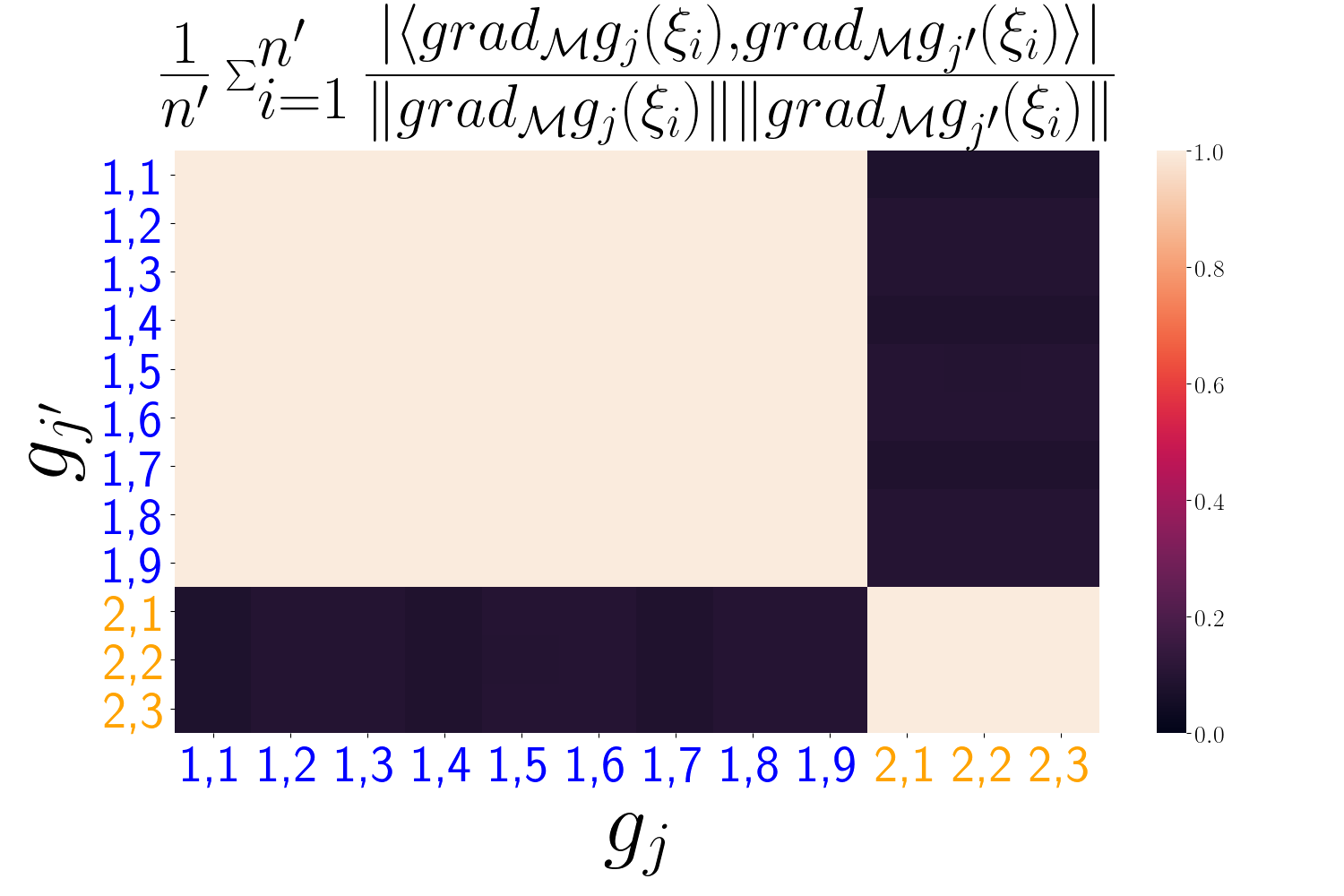}\label{fig:re-nonoise-cosine}}
    \subfloat[]{\includegraphics[width = 0.3\textwidth]{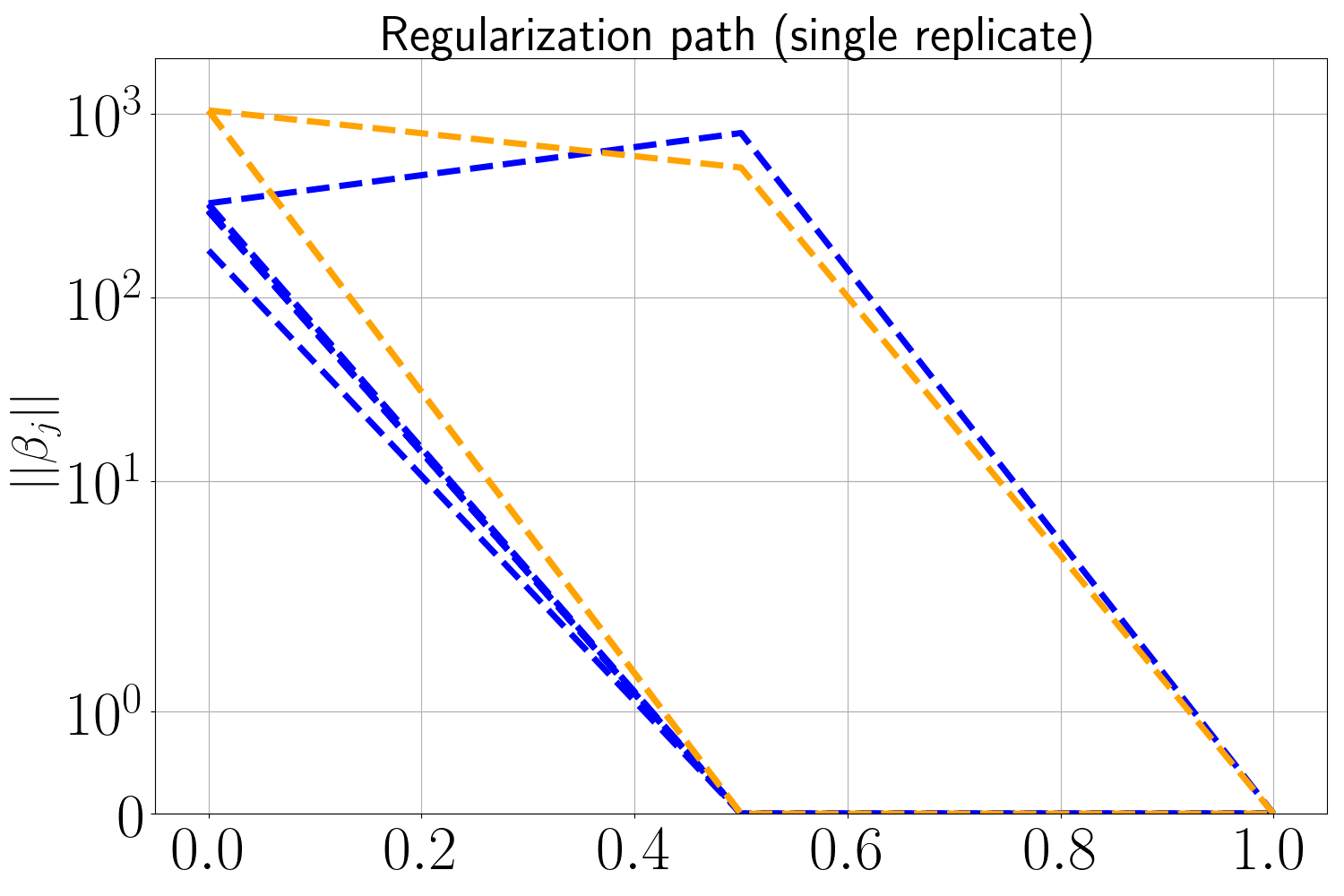}\label{fig:re-nonoise-regularization}}
    \subfloat[]{\includegraphics[width = 0.3\textwidth]{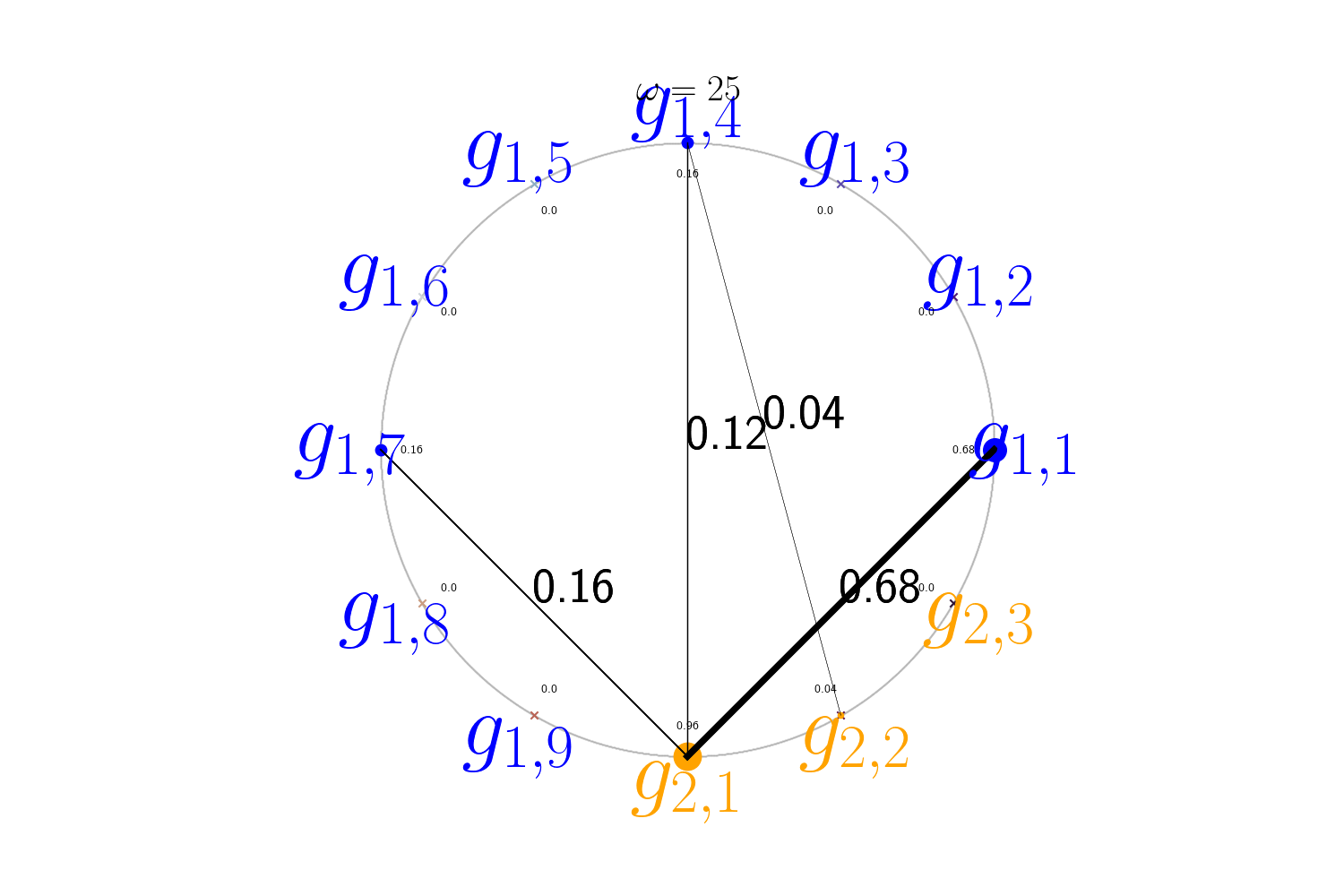}\label{fig:re-nonoise-watch}}
    \caption{Results of Rigid Ethanol Experiment with no noise. \textbf{Left:} cosine plots of dictionary functions, showing the existence of two groups of highly colinear functions. \textbf{Middle:} regularization path in one replicate. \textbf{Right:} The frequency of each pair of function selected in all 25 replicates. }
\end{figure}

With the increase in noise, we display the watch plot in figure \ref{fig:re-p00001-watch}-\ref{fig:re-p01-watch}. With the increase in the noise, it is possible that \tsalg~ do not recover the correct support. For example when noise level is $\sigma = 0.1$, in all replicates, \tsalg~ selects two functions in the same group. Interestingly, when we look at the embedding given by Diffusion Maps at this noise level, we observe that the torus topology is broken, as shown in figure \ref{fig:rigidethanol-highnoise-diffusion1} and \ref{fig:rigidethanol-highnoise-diffusion2}.

\begin{figure}
    \centering
    \subfloat[$\sigma=0.0001$]{\includegraphics[width = 0.23\textwidth]{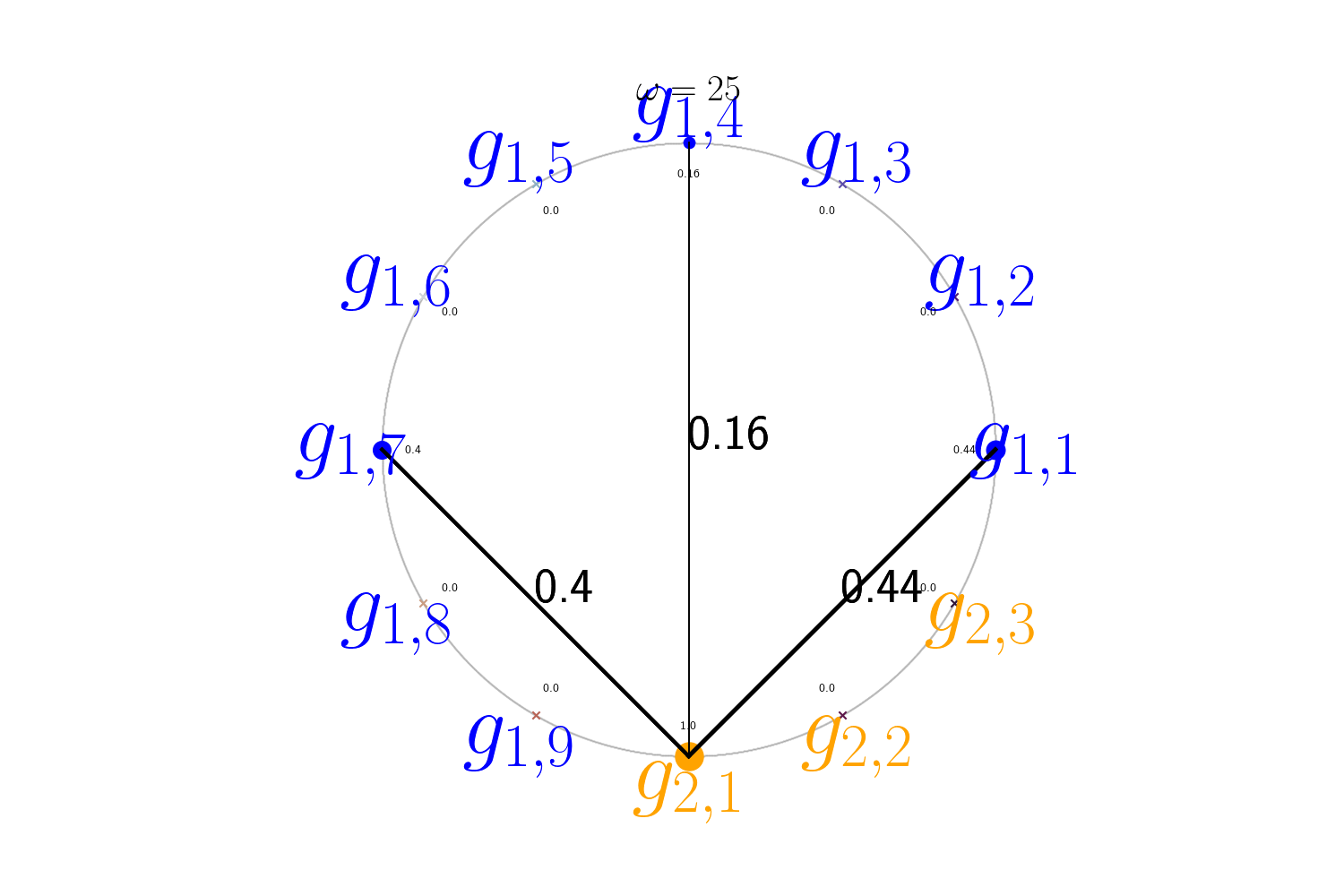}\label{fig:re-p00001-watch}}
    \subfloat[$\sigma=0.001$]{\includegraphics[width = 0.23\textwidth]{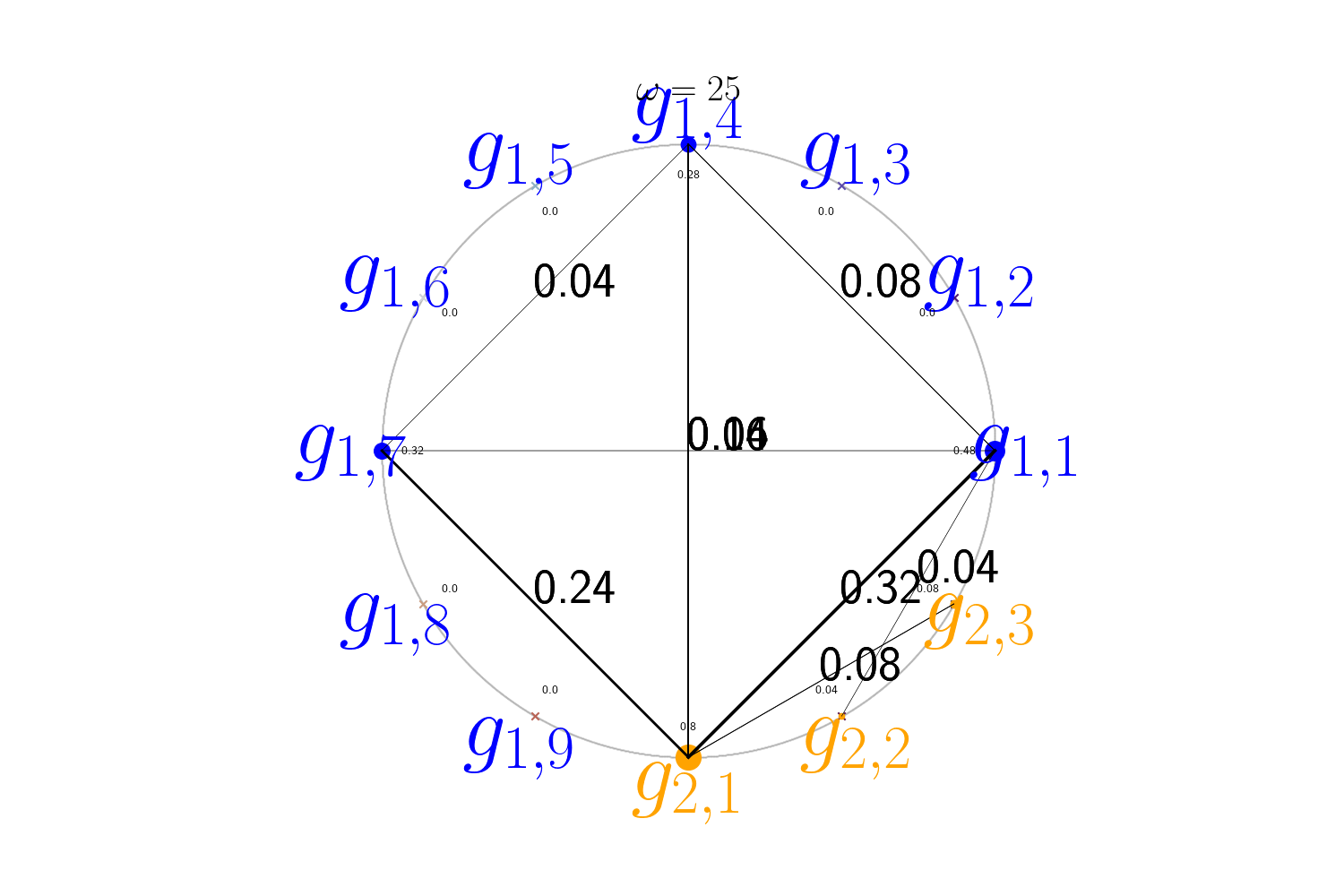}\label{fig:re-p0001-watch}}
    \subfloat[$\sigma=0.01$]{\includegraphics[width = 0.23\textwidth]{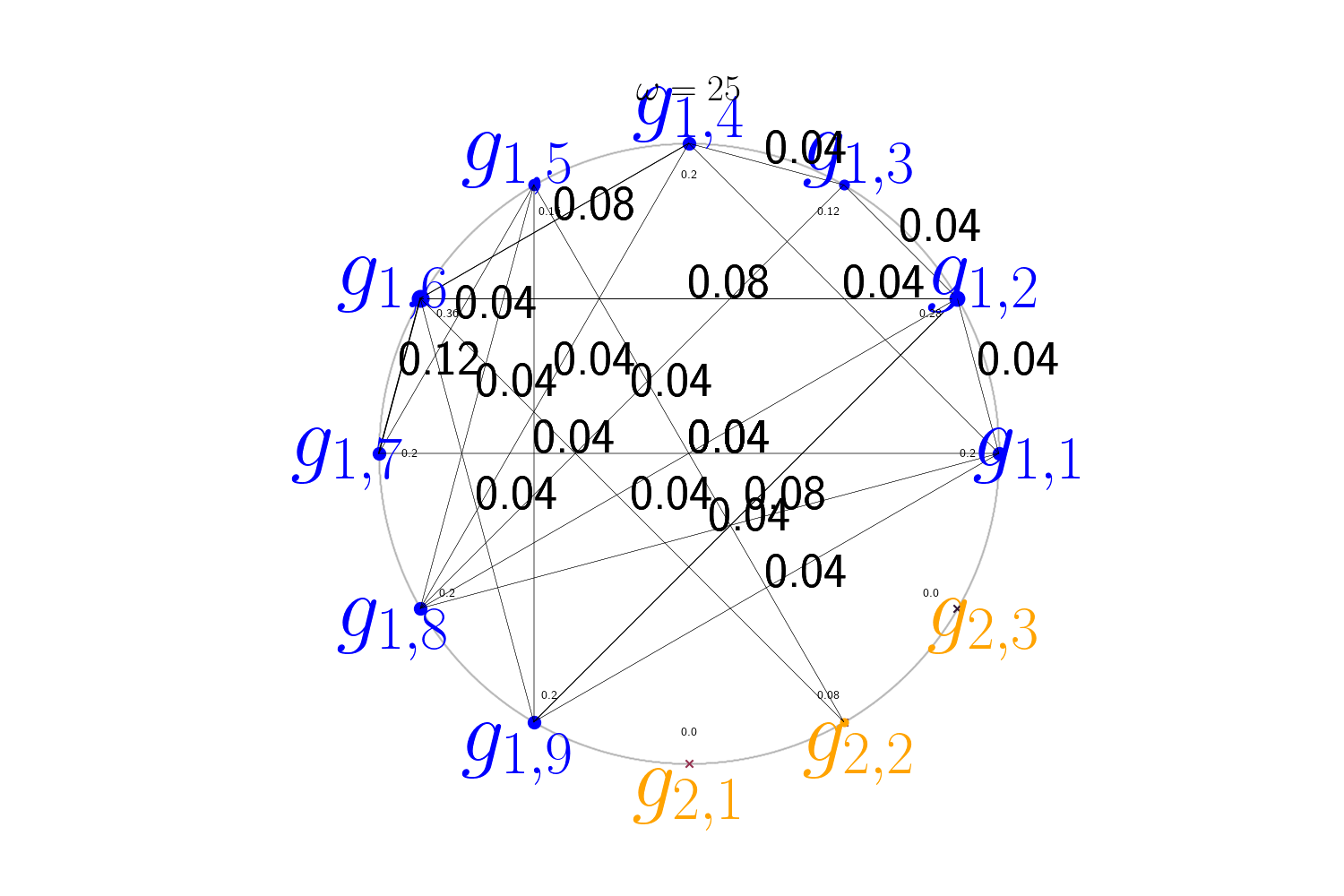}\label{fig:re-p001-watch}}
    \subfloat[$\sigma=0.1$]{\includegraphics[width = 0.23\textwidth]{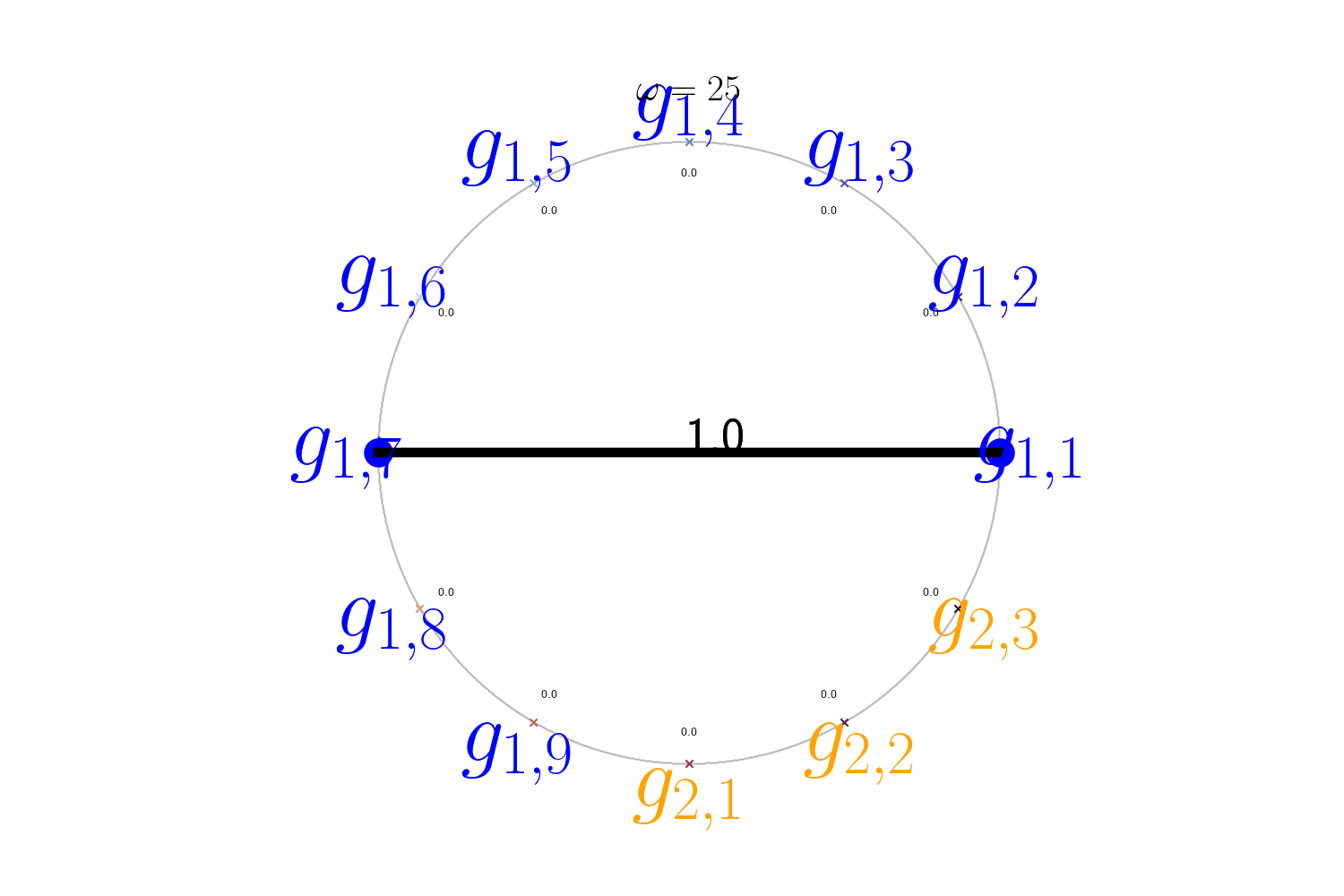}\label{fig:re-p01-watch}}
    \caption{Watch plot of support recovery frequencies under different noise levels. }
\end{figure}

\begin{figure}
    \centering
    \centering
    \subfloat[]{\includegraphics[width = 0.45\textwidth]{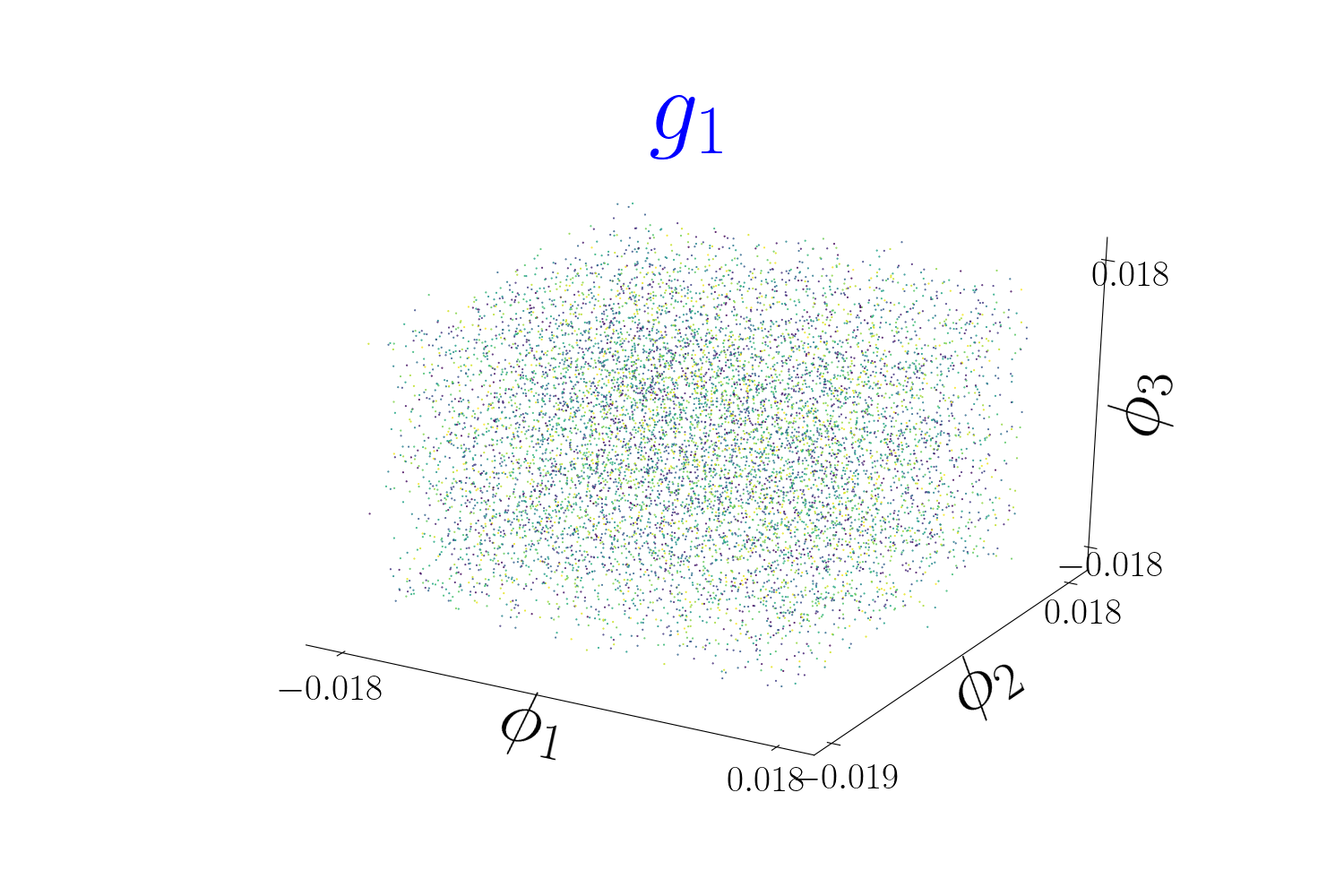}\label{fig:rigidethanol-highnoise-diffusion1}}
    \subfloat[]{\includegraphics[width = 0.45\textwidth]{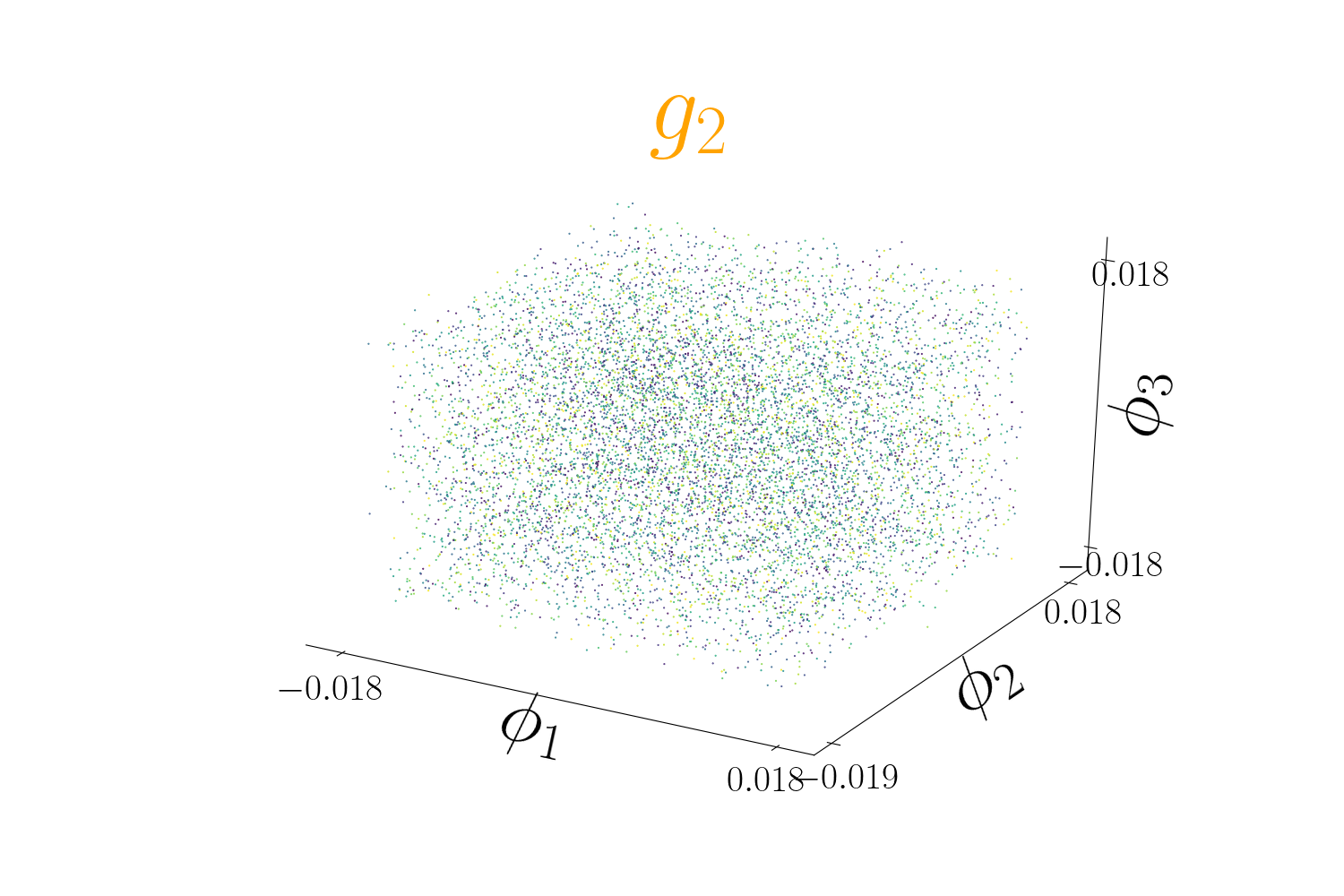}\label{fig:rigidethanol-highnoise-diffusion2}}
    \caption{Diffusion map embedding for synthetic rigid ethanol data. Data points are colored by the true torsion $g_1$ and $g_2$ respectively.}
\end{figure}

\subsection{Comparison with Embeddings}
The comparisons with Diffusion maps of Toluene are shown in the introduction in sectoin \ref{sec:intro}. Here we display some comparison of \tsalg~ with Diffusion maps on real MDS data, which are widely used for dimension reduction.
 Figure \ref{fig:ethanol-diffusion1} and \ref{fig:ethanol-diffusion2} shows that the two functions selected from the \tsalg~ indeed parametrize the structure of the data. As the values are roughly varying along with two circles of the torus.
 Figure \ref{fig:mal-diffusion1} and \ref{fig:mal-diffusion2} shows a pair of functions selected by \tsalg~. 

\begin{figure}
    \centering
    \subfloat[]{\includegraphics[width = 0.4\textwidth]{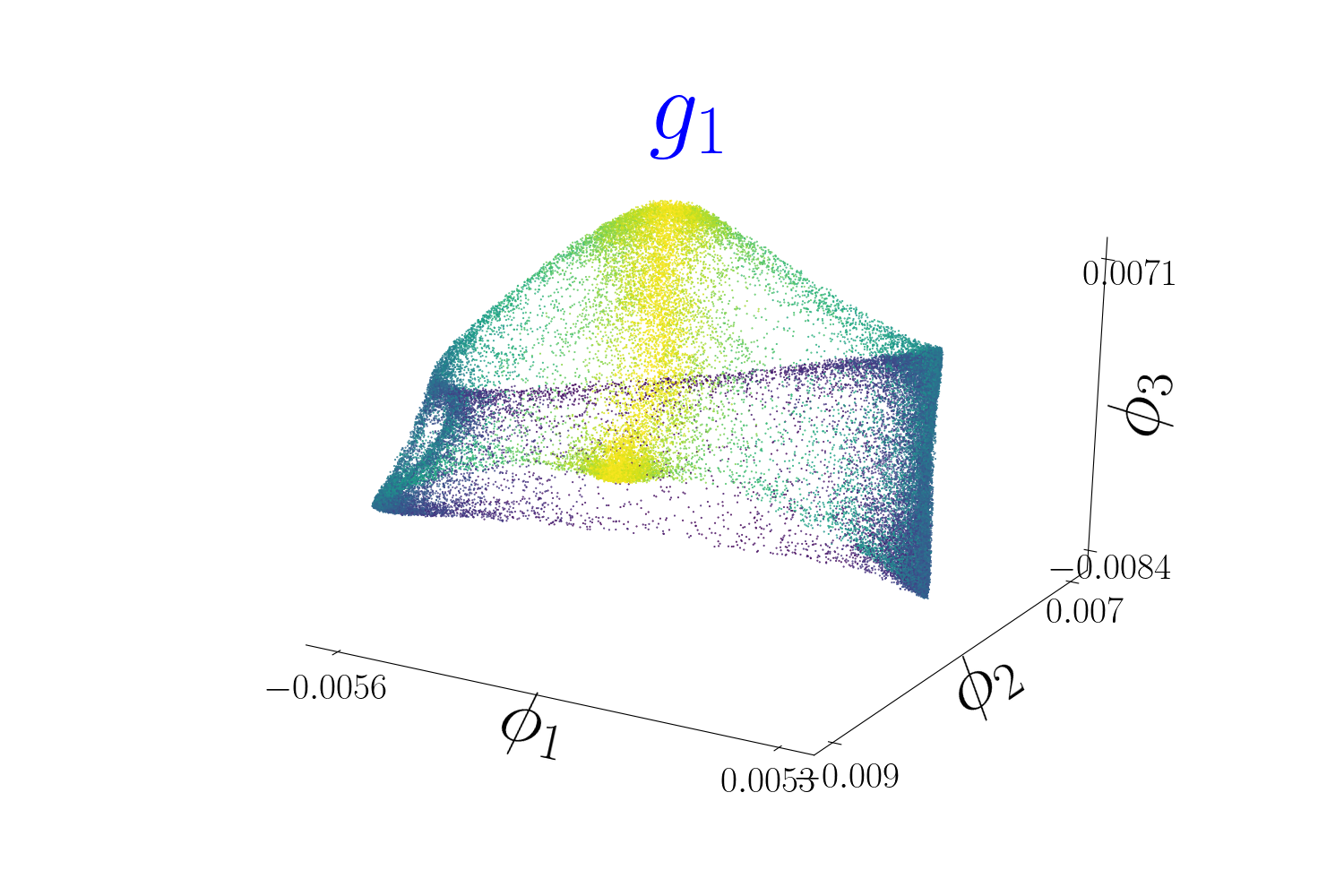}\label{fig:ethanol-diffusion1}}
    \subfloat[]{\includegraphics[width = 0.4\textwidth]{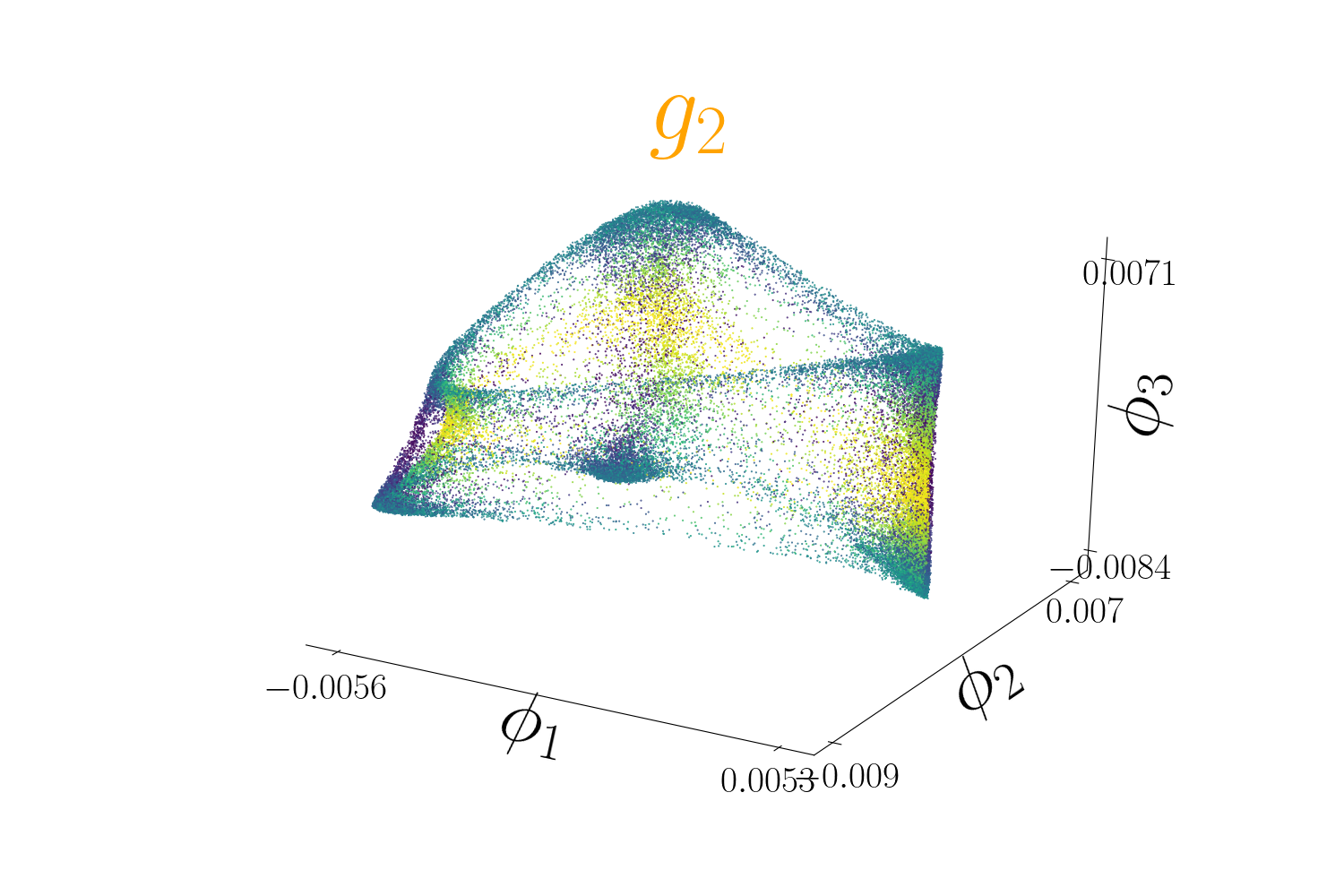}\label{fig:ethanol-diffusion2}}
    \caption{Diffusion map embedding for real ethanol data. Data points are colored by the two torsion functions $g_0,g_9$ found by \tsalg~ resepctively. }
\end{figure}

\begin{figure}
    \centering
    \subfloat[]{\includegraphics[width = 0.4\textwidth]{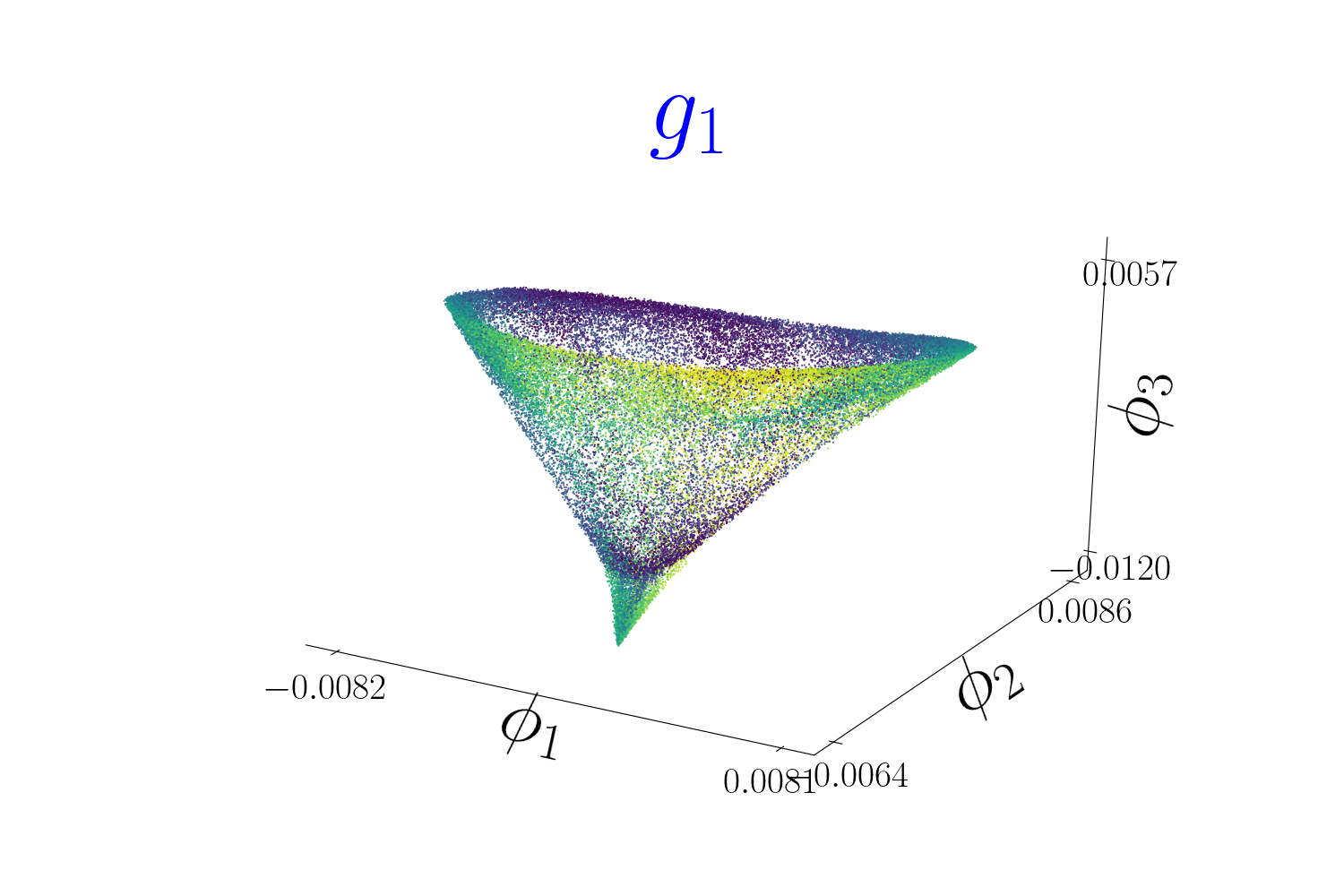}\label{fig:mal-diffusion1}}
    \subfloat[]{\includegraphics[width = 0.4\textwidth]{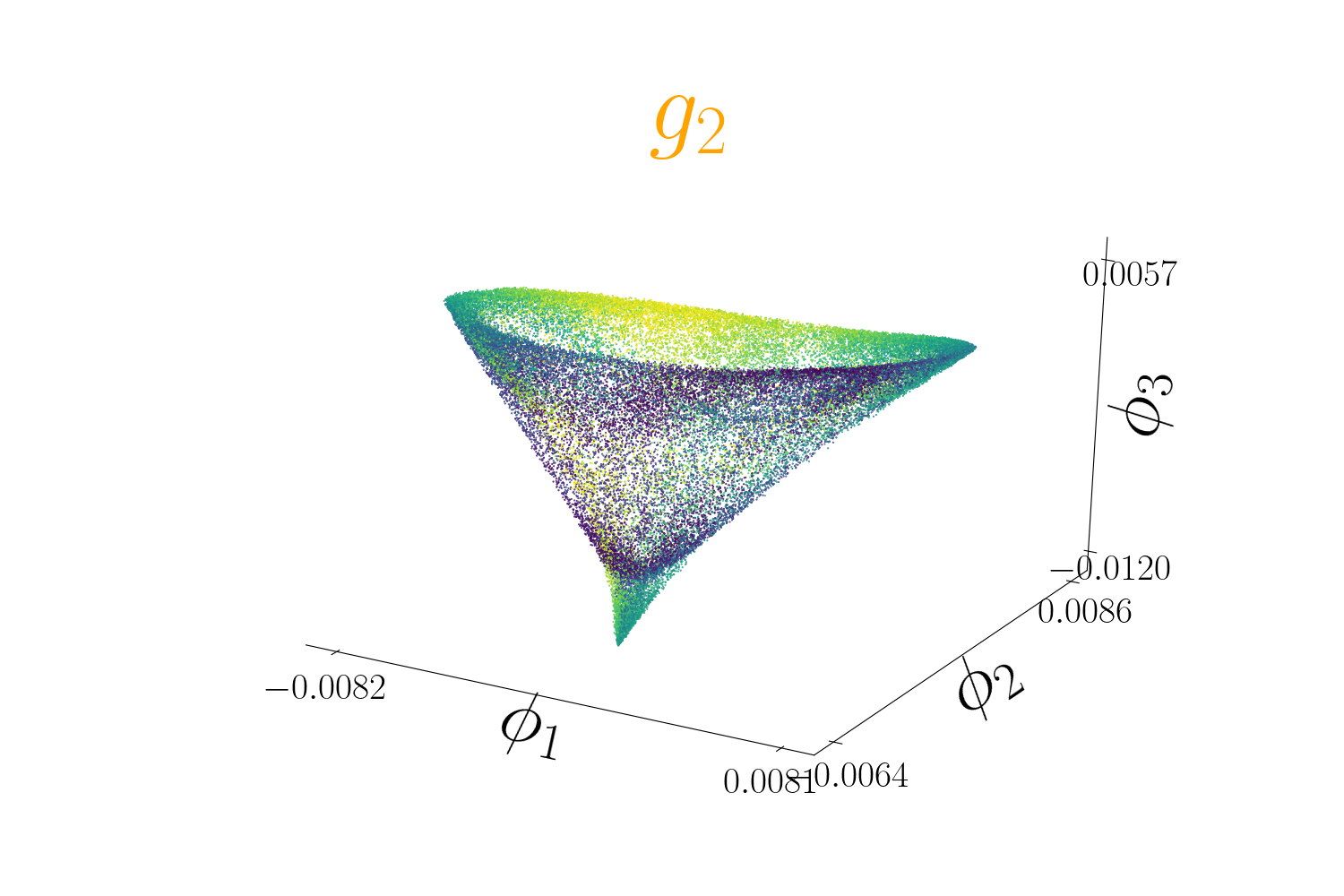}\label{fig:mal-diffusion2}}
    \caption{Diffusion map embedding for Malonaldehyde data. Data points are colored by the two torsion functions found by \tsalg~ resepctively. }
\end{figure}

\end{document}